\newcommand{\thickhline}{%
    \noalign {\ifnum 0=`}\fi \hrule height 1pt
    \futurelet \reserved@a \@xhline
}
\newcolumntype{"}{@{\hskip\tabcolsep\vrule width 1pt \hskip\tabcolsep}}
\newcolumntype{'}{@{\hskip\tabcolsep\vrule width 1pt}}
\newcolumntype{`}{@{\vrule width 1pt \hskip\tabcolsep}}
\DeclareMathOperator*{\E}{\mathrm{E}}
\let\Pr\relax
\DeclareMathOperator*{\Pr}{\mathrm{Pr}}
\newcommand{\whp}{w.h.p. }
\DeclareMathOperator{\Input}{\mathrm{X}}
\DeclareMathOperator{\Inh}{\mathrm{Z}}
\DeclareMathOperator{\NumInh}{\mathrm{\alpha}}
\DeclareMathOperator{\Output}{\mathrm{Y}}
\DeclareMathOperator{\Net}{\mathrm{N}}
\DeclareMathOperator{\weightZ}{w^{inh}}
\DeclareMathOperator{\weightY}{w^{out}}
\DeclareMathOperator{\weightS}{w^{self}}
\DeclareMathOperator{\weightX}{w^{input}}
\DeclareMathOperator{\BiasOut}{b^{out}}
\DeclareMathOperator{\ExpectedT}{\mathcal{ET}}
\DeclareMathOperator{\HighProT}{\mathcal{HT}}
\newcommand{\norm}[1]{\|#1\|}
\newtheorem*{rep@theorem}{\rep@title}
\newcommand{\newreptheorem}[2]{%
\newenvironment{rep#1}[1]{%
 \def\rep@title{#2 \ref{##1}}%
 \begin{rep@theorem}}%
 {\end{rep@theorem}}}
\newtheorem{theorem}{Theorem}
\newtheorem{corollary}[theorem]{Corollary}
\newtheorem{observation}[theorem]{Observation}
\newtheorem{lemma}[theorem]{Lemma}
\newtheorem*{lemma*}{Lemma}
\newtheorem{claim}[theorem]{Claim}
\def\blackslug{\hbox{\hskip 1pt \vrule width 4pt height 8pt
    depth 1.5pt \hskip 1pt}}
\def\QED{\quad\blackslug\lower 8.5pt\null\par}
\def\dnsparagraph#1{\par\vspace{2pt}\noindent{\bf #1}.}
\title{Computational Tradeoffs in Biological Neural Networks:\\
Self-Stabilizing Winner-Take-All Networks}
\author{
Nancy Lynch \\
  \small MIT \\
  \small lynch@csail.mit.edu 
\and
 Cameron Musco \\
  \small MIT \\
  \small cnmusco@mit.edu
\and
Merav Parter\\
	\small MIT \\
	\small parter@mit.edu
}
\begin{document}

\maketitle

\begin{abstract}

We initiate a line of investigation into biological neural networks from an algorithmic perspective. We develop a simplified but biologically plausible model for distributed computation in \emph{stochastic spiking neural networks} and study tradeoffs between computation time and network complexity in this model. Our aim is to abstract real neural networks in a way that, while not capturing all interesting features, preserves high-level behavior and allows us to make biologically relevant conclusions.

In this paper, we focus on the important `winner-take-all' (WTA) problem, which is analogous to a neural leader election unit: a network consisting of $n$ input neurons and $n$ corresponding output neurons must converge to a state in which a single output corresponding to a firing input (the `winner') fires, while all other outputs remain silent.
Neural circuits for WTA rely on inhibitory neurons, which suppress the activity of competing outputs and drive the network towards a converged state with a single firing winner. We attempt to understand how the number of inhibitors used affects network convergence time.

We show that it is possible to significantly outperform naive WTA constructions through a more refined use of inhibition, solving the problem in $O(\theta)$ rounds in expectation with just $O(\log^{1/\theta} n)$ inhibitors for any $\theta$. An alternative construction gives convergence in $O(\log^{1/\theta} n)$ rounds with $O(\theta)$ inhibitors.
We compliment these upper bounds with our main technical contribution, a nearly matching lower bound for networks using $\ge \log \log n$ inhibitors. Our lower bound uses familiar indistinguishability and locality arguments from distributed computing theory applied to the neural setting. It lets us derive a number of interesting conclusions about the structure of any network solving WTA with good probability, and the use of randomness and inhibition within such a network.

\end{abstract}

\thispagestyle{empty}
\clearpage
\setcounter{page}{1}

\section{Introduction}

In this paper, 
we study biological neural networks from an algorithmic perspective, focusing on understanding 
tradeoffs between computation time and network complexity. 
We use a biologically plausible yet simplified neural computational model.
Our goal is to abstract real neural networks in a way that, while not capturing all interesting features, preserves high-level behavior and allows us to make biologically relevant conclusions.

\subsection{Model and Problem Statement}
\paragraph{Model.} We work with \emph{spiking neural networks} (SNNs) \cite{maass1996computational,maass1997networks,gerstner2002spiking,izhikevich2004model,habenschuss2013stochastic}, in which neurons fire in discrete pulses, in response to a sufficiently high membrane potential. This potential is induced by spikes from neighboring neurons, which can have either an excitatory or inhibitory effect (increasing or decreasing the potential). Our model is \emph{stochastic} -- each neuron functions as a probabilistic threshold unit, spiking with probability given by applying a sigmoid function to the membrane potential. In this respect, our networks are similar to the popular Boltzmann machine \cite{ackley1985learning}, with the important distinction that synaptic weights are not required to be symmetric and, as observed in nature, neurons are either strictly inhibitory (all outgoing edge weights are negative) or excitatory. 
While a rich literature focuses on deterministic threshold circuits \cite{minsky1969perceptrons,hopfield1986computing}
we employ a stochastic model as 
it is widely accepted that neural computation is inherently stochastic \cite{allen1994evaluation,shadlen1994noise,faisal2008noise}, and that while this can lead to a number of challenges, it also affords significant computational advantages \cite{maass2014noise}. 

\paragraph{The WTA Problem.} We focus on the Winner-Take-All (WTA) problem, which is one of the most studied problems in computational neuroscience. 
A WTA network has $n$ input neurons, $n$ corresponding outputs, and a set of auxiliary neurons that facilitate computation. 
The goal is to pick a `winning' input -- that is, the network  should produce a single firing output which corresponds to a firing input. Often the winning input is  the one with the highest firing rate, in which case WTA serves as a neural max function. We focus on the case when all inputs have the same or similar firing rates, in which case WTA serves as a leader election unit. 

WTA is widely applicable, including in circuits that implement visual attention via WTA competition between groups of neurons that process different input classes \cite{koch1987shifts,lee1999attention,itti2001computational}. It is also the foundation of competitive learning \cite{nowlan1989maximum,kaski1994winner,gupta2009hebbian}, in which classifiers compete to respond to specific input types. More broadly, WTA is known to be a powerful computational primitive \cite{maass1999neural,maass2000computational} -- a network equipped with WTA units can perform some tasks significantly more efficiently than with just  linear threshold neurons (McCulloch-Pitts neurons or perceptrons). 

\paragraph{Related Work.}
Due to its importance, there has been significant work on WTA, including in biologically plausible spiking networks \cite{lazzaro1988winner,yuille1989winner,thorpe1990spike,coultrip1992cortical,wang2003k,oster2006spiking,oster2009computation,al2015inherently}.
This work
is extremely diverse -- while mathematical analysis is typically given, different papers show different guarantees and apply varying levels of rigor. To the best of our knowledge, no asymptotic time bounds (e.g., as a function of the number of inputs $n$) for solving WTA in spiking neural networks have been established.\footnote{Aside from immediate bounds for deterministic circuits using many ($\Omega(n)$) auxiliary neurons \cite{lazzaro1988winner,maass2000computational}.} 
Additionally, previous analysis often requires a specific initial network state to show convergence and does not show that the network is self-stabilizing and converges from an arbitrary starting state, as is necessary in a biological system.

Within theoretical computer science, our work is most
inspired by: (1) work on the computational power of spiking neural networks, including the power of WTA as a black-box primitive, most notably by Maass et al. \cite{maass1997networks,maass1999neural,maass2000computational} (2) the pioneering work of Les Valiant on the
neuroidal model \cite{valiant2000circuits,valiant2000neuroidal,valiant2005memorization} and (3) self-stabilization algorithms in distributed networks  \cite{dolev2000self,lynch1996distributed}. We survey this literature in more depth in Appendix \ref{subsec:additionalrw}.

\paragraph{Basic WTA Networks.}
We restrict our attention to a simple network structure that can implement WTA efficiently using a small number of auxiliary neurons. A network consists of three layers: $n$ input neurons $\Input$, $n$ output neurons $\Output$, and $\alpha$ auxiliary neurons $\Inh$. We usually assume all auxiliary neurons are inhibitory, however in Appendix \ref{sec:excitat} give extensions to the more general case where we allow auxiliary neurons to also be excitatory.
Similar to well-known feedforward networks, all synaptic connections are between layers\footnote{Although, due to recurrent connections the network convergence time is not synonymous with the number of layers.} with the exception of an excitatory self-loop from each output $y_i$ to itself. This basic structure is biologically plausible; in particular self-loops and reciprocal excitatory-inhibitory connections (as implemented in our networks) are used in many biological models of WTA computation \cite{yuille1989winner,coultrip1992cortical,roux2015tasks}.

It is well known that inhibition is crucial for solving WTA -- outputs compete for activation via \emph{lateral inhibition} or \emph{recurrent inhibition} \cite{coultrip1992cortical,roux2015tasks}. In our network, outputs fire in response to stimulation by their corresponding inputs, thereby stimulating inhibitors which suppress the activity of other outputs. Once a single winner is selected, it must remain distinguished from the remainder of the outputs. This is achieved via positive feedback -- a consistently firing output will tend to continue firing due to its excitatory self-loop.

\subsection{Our Contribution}
\paragraph{Computational Tradeoffs.}
We explore the tradeoff between the number of inhibitors $\alpha$ used in a WTA network (i.e., the complexity of the network) 
and the time required to select a winning output (to converge to a WTA state). 
In artificial neural networks, inhibitory and excitatory connections are often treated equally, as connections with either positive or negative weights. However, in reality, neurons themselves are either inhibitory or excitatory and do not have outgoing connections of both types. There are many fewer inhibitors (around 15\% of the neural population \cite{rudy2011three,gomez2000gabaergic}), and they typically have restricted connectivity structures, often inhibiting just neurons in their local vicinity \cite{maass2000computational}. This gives natural motivation to understanding how the number of inhibitors used in a network affects its computational power.
We give two main results:
\begin{theorem}[Upper bound]
\label{thm:upper}
(1) For any $\alpha \ge 2$ there exists a basic WTA network with $\alpha$ inhibitors that, from any \emph{arbitrary} starting configuration, converges to a valid WTA state in $O(\NumInh \log^{1/\NumInh} n)$ expected time. 
(2) For any $\theta \ge 1$ there exists a basic WTA network with $\alpha = O(\theta \log^{1/\theta} n)$ inhibitors that converges in $O(\theta)$ expected time.
\end{theorem}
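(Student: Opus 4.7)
The plan is to build, for part~(1), a multi-inhibitor network in which each of the $\alpha$ inhibitors $z_1, \ldots, z_\alpha$ is tuned to a different scale of firing-output count, and together they implement a fast contraction of the firing set. Concretely, I would pick a geometric sequence of thresholds $k_1 > k_2 > \cdots > k_\alpha$ with $k_\alpha = 2$ and $k_j \approx n^{(\alpha - j + 1)/\alpha}$, and set the incoming weights and bias of $z_j$ so that $z_j$ fires in round $t+1$ with very high probability whenever at least $k_j$ outputs fire in round $t$, and with very low probability otherwise. The inhibitor-to-output weights, output biases, and self-loop/input weights are chosen so that when the active inhibitors are exactly $z_1, \ldots, z_j$, each currently firing output survives the next round with a carefully tuned probability $p_j < 1$, while each currently silent output stays silent with overwhelming probability (because it lacks the self-loop excitation).

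With the construction fixed, I would analyze convergence by partitioning time into $\alpha$ phases indexed by the current scale of $F_t \eqdef |\{y_i : y_i \text{ fires at time } t\}|$. In phase $j$, $F_t$ lies in a window around $k_j$; Chernoff concentration over the $F_t$ independent output spikes shows that $F_{t+1}$ is tightly concentrated around $p_j F_t$, and within $O(\log^{1/\alpha} n)$ rounds $F$ drops past $k_{j+1}$ into the next window (the per-round factor $p_j$ is selected so that $\log(n^{1/\alpha})/\log(1/p_j) = O(\log^{1/\alpha} n)$). Summing over $\alpha$ phases yields the claimed $O(\alpha \log^{1/\alpha} n)$ bound. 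The self-stabilization from an arbitrary configuration comes essentially for free: because inhibitors carry no self-loops, the network state after one round depends (given the randomness) only on the firing pattern of outputs from the previous round, so any initial configuration is indistinguishable after one round from a clean start with the induced output pattern, and the phase analysis applies.

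For part~(2), the same template is reused but with $\Theta(\theta \log^{1/\theta} n)$ inhibitors whose thresholds are packed densely enough that a single round contracts $F_t$ by a factor of roughly $n^{1/\theta}$; after $\theta$ rounds the contraction reaches $1$. Once $F_t = 1$, stability of the unique winner follows from the self-loop: the winner's potential from its input plus self-loop exceeds the inhibition contributed by the lowest-threshold inhibitor $z_\alpha$ (which alone fires when exactly one output fires), while every other output lacks the self-loop contribution and so remains silent with overwhelming probability.

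The main obstacle I foresee is the small-count regime. Once $F_t = O(1)$, Chernoff no longer concentrates tightly, and we could in principle oscillate between $F_t \in \{0,1,2,\ldots\}$ rather than collapsing cleanly to $F_t = 1$. Handling this requires designing the lowest inhibitors and their weights so that the induced Markov chain on configurations with a constant number of firing outputs is absorbed quickly at the all-but-one-silent state, which forces a careful constant-state case analysis. This is also precisely where the ``arbitrary starting configuration'' clause bites hardest, since a worst-case initial state could drop us directly into the small-count regime before the phase-by-phase Chernoff arguments have had a chance to run.
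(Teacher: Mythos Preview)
Your high-level architecture (a bank of threshold-triggered convergence inhibitors plus a stability inhibitor, phase-by-phase contraction analyzed via Chernoff, and stability of a single winner via the self-loop) matches the paper's construction. But there is a genuine gap in part~(1): your threshold placement $k_j \approx n^{(\alpha-j+1)/\alpha}$ (i.e.\ the thresholds themselves geometrically spaced) does not give the claimed bound.

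The problem is the bottom window. Once $F_t$ is below the second-lowest threshold you must take the survival probability $p$ to be $\Theta(1)$; otherwise with $F_t = O(1)$ you overshoot to $F_{t+1}=0$ with probability $\approx 1$ rather than hitting $1$ (exactly the small-count issue you flag, but it kicks in earlier than you allow for). With $p=\Theta(1)$ that bottom window, of log-width $\Theta(\log n/\alpha)$ under your spacing, takes $\Theta(\log n/\alpha)$ rounds by itself. For any $\alpha = o(\sqrt{\log n})$ this already exceeds the target $O(\alpha \log^{1/\alpha} n)$: e.g.\ at $\alpha=3$ you get $\Theta(\log n)$ versus the claimed $O((\log n)^{1/3})$. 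Equivalently, your uniform rule ``choose $p_j$ so that $\log(n^{1/\alpha})/\log(1/p_j)=O(\log^{1/\alpha} n)$'' is incompatible with the no-overshoot constraint $p_j \cdot k_{\text{bottom}} = \Omega(1)$ in the low windows.

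The paper avoids this by spacing the \emph{logarithms} of the thresholds geometrically: it puts thresholds at $2^{d_i}$ with $d_i=(\log n)^{i/(\alpha-1)}$, so the bottom window is only $[2,\,2^{(\log n)^{1/(\alpha-1)}})$ and can be traversed with $p=1/2$ in $(\log n)^{1/(\alpha-1)}$ rounds. In each higher window $[2^{d_i},2^{d_{i+1}})$ it takes $p_i = c\log n/2^{d_i}$, which both prevents overshoot (expected survivors $\ge c\log n$) and gives traversal time $\approx d_{i+1}/d_i = (\log n)^{1/(\alpha-1)}$. Summing over $\alpha-1$ windows yields the stated bound. So the missing idea is precisely this doubly-exponential threshold schedule; with your geometric schedule the argument does not close. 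Part~(2) then layers $\theta$ copies of this same schedule at increasing coarseness (groups of $\log^{1/\theta} n$ inhibitors per level), which is a bit more structured than ``pack thresholds densely enough.''
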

For $\alpha \ge \log \log n$ the above gives runtime $\tilde O\left (\frac{\log \log n}{\log \alpha}\right )$. We give a nearly matching lower bound in this case, which holds even if we allow both excitatory and inhibitor auxiliary neurons.
\begin{theorem}[Lower bound]
\label{thm:lower}
Any basic WTA network with $\alpha$ inhibitors requires $\Omega(\log\log n/\log \NumInh)$ rounds to solve WTA in expectation.
\end{theorem}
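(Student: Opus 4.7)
The plan is to use an indistinguishability argument of the style familiar from distributed computing, targeted at the symmetric WTA instance in which all $n$ inputs fire and the initial state is chosen adversarially, forcing the network to break symmetry purely through its internal stochasticity. Let $k_t$ denote the number of firing output neurons at round $t$; the network must drive $k_t$ from (say) $n$ down to exactly $1$. The target bound is a per-round inequality of the form $\log k_{t+1} \geq (\log k_t)/\alpha$ in expectation, with $O(1)$ additive slack. Iterating from $k_0 = n$ gives $\log k_t \geq (\log n)/\alpha^t$, so that $k_t = 1$ forces $\alpha^t = \Omega(\log n)$ and hence $t = \Omega(\log \log n/\log \alpha)$.

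The first step is a standard symmetrization: by randomly permuting output labels (and the matching input labels), I may assume the joint distribution of output firings is permutation invariant. Under this symmetry, conditional on the inhibitor firing pattern in round $t$, each of the $k_t$ currently firing outputs fires in round $t+1$ independently with the same probability $p$, so that $k_{t+1}$ is binomial with mean $k_t p$. The second step exploits that, with only $\alpha$ inhibitors each firing as a sigmoid of a linear function of the output count, the inhibitor pattern carries only a coarse, $\alpha$-level measurement of $k_t$. The indistinguishability step then shows that two values of $k_t$ lying within roughly a $k_t^{1/\alpha}$ multiplicative factor of each other induce statistically close inhibitor distributions, so the outputs' response probability $p$ cannot be tuned sharply enough to reduce $k_t$ past this resolution without a nonnegligible probability of overshooting to $k_{t+1} = 0$, which is also not a WTA state. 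Combining this with the binomial concentration for $k_{t+1}$ yields the per-round bound.

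The main obstacle is turning this intuition into the claimed multiplicative-exponent bound. A naive ``$\alpha$ bits per round'' counting argument would only give $\Omega(\log n/\alpha)$; the doubly-logarithmic improvement must exploit scale-dependence, namely the fact that sigmoid thresholds over $\alpha$ inhibitors can only produce a $\log \alpha$-bit refinement of $\log k_t$ at each scale, rather than $\alpha$ bits of refinement of $k_t$ outright. I expect this to be the technical heart of the proof, requiring a potential argument on $\log k_t$, concentration bounds for binomials with possibly small means, and a scale-by-scale indistinguishability claim tying the inhibitor output distribution to $k_t$ at the appropriate resolution, together with a final averaging step to convert an in-expectation per-round bound into the expected convergence time stated in the theorem.
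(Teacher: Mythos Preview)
Your high-level intuition is sound, but the proposal has a real gap and diverges from the paper's argument in a way that matters.

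\textbf{The central gap.} Your per-round inequality $\log k_{t+1} \ge (\log k_t)/\alpha$ is asserted via the claim that ``two values of $k_t$ lying within roughly a $k_t^{1/\alpha}$ multiplicative factor of each other induce statistically close inhibitor distributions.'' This claim is not true in general, and the paper does not prove anything like it. The $\alpha$ inhibitor thresholds can be placed \emph{anywhere}; nothing prevents two of them from sitting at consecutive integers, so that $k_t=100$ and $k_t=101$ produce different inhibitor patterns. What limits the network is not a uniform resolution bound on $k_t$ but the \emph{total number} of distinguishable patterns, and exploiting that requires comparing many inputs simultaneously rather than following one trajectory.

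\textbf{What the paper actually does.} The argument is a multi-input indistinguishability, not a single-input potential bound. First, a structural lemma (Lemma~\ref{lem:almostdeter}) shows that every inhibitor in the ``convergence'' class behaves, with good probability, as a threshold gate on the output count, outside a narrow critical range $[k(z)/2,2k(z)]$. This is not information-theoretic: it is forced by the requirement that the same inhibitor must be nearly silent in steady state (one firing output) yet active when many outputs fire, which pins down the steepness of its sigmoid. You do not have an analogue of this lemma, and without it the ``$\alpha$-level measurement'' intuition does not go through. Second, the paper takes $\Theta(\log n)$ input vectors with geometrically spaced firing counts, starts all of them from the reset configuration $Y^0=\vec 0$, and maintains a set $\mathcal{X}_t$ of inputs whose entire $t$-round behavior is predictable. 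Because the inhibitors act as thresholds, there are at most $\alpha$ distinct inhibitor firing patterns in each round; the largest group sharing a pattern has size $\ge |\mathcal{X}_{t-1}|/\alpha$, and after discarding $O(\alpha)$ inputs whose predicted range hits a critical region one gets $|\mathcal{X}_t|\ge |\mathcal{X}_{t-1}|/\alpha - O(\alpha)$. Iterating gives $|\mathcal{X}_t|=\Omega(\log n/\alpha^t)$, and while this exceeds $1$ the network has not distinguished two inputs and hence has not solved WTA on at least one of them.

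\textbf{What to change.} Replace the single-trajectory potential argument with the family of $\Theta(\log n)$ density classes and track how many remain indistinguishable. You will need (i) the threshold characterization of the inhibitors, which is where the WTA correctness hypothesis enters, and (ii) a reset/overshoot argument (your instinct here is right) to rule out the case where the predicted output count drops too low, which the paper handles via Observation~\ref{Obs:resetorcontinue}. Your symmetrization and binomial-concentration steps are fine and are used in the paper as well, but they are not where the difficulty lies.
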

\paragraph{Upper Bound Techniques.} Our upper bounds are based on random competition between outputs that fire in response to stimulation from their firing inputs. One ``stability'' inhibitor is responsible for maintaining a WTA steady-state: as soon as just a single output fires in a round it becomes the winner of the network. Its positive feedback self-loop allows it to keep firing in subsequent rounds, while all other outputs do not fire due to inhibition from the stability inhibitor.

In order to reach a round in which just a single output fires, we employ a number of ``convergence inhibitors''. Ideally, if $k$ competing outputs fire in a round, each would fire in the next round with probability $1/k$ and we would have just a single firing output with constant probability. We can approximate this behavior using $\lfloor \log n \rfloor$ convergence inhibitors, each of which acts as a threshold circuit and fires whenever $\ge 2^i$ outputs fire for $i \in 1,...,\lfloor \log n \rfloor$. Thus when $k$ outputs fire, approximately $\log(k)$ inhibitors fire, the inhibition causes outputs to continue firing with probability $\Theta(1/k)$, and  convergence is achieved in constant rounds in expectation. This technique implicitly splits the possible number of firing outputs into $\log n$ \emph{density classes} and uses one inhibitor to ensure fast convergence from each class. To obtain more general runtime tradeoffs, we will use density classes of increasing coarseness, with the inhibitors assigned to each density classes ensuring that the number of firing outputs decreases in few rounds until it falls into a finer density class, and eventually until just a single output fires.


%

\paragraph{Lower Bound Techniques.}
Our lower bound shows that \emph{any} network which solves WTA must have a similar structure to the network described above. The inhibitory neurons can always be roughly be divided into two classes: stability and convergence inhibitors. Further, while randomness is important in breaking symmetry between competing inputs, we show that
in any efficient network, the inhibitors behave in a \emph{nearly deterministic} manner, matching behavior seen in our upper bounds. After significantly constraining inhibitor behavior, we are able to analyze how any network which solves WTA behaves on inputs with varying numbers of firing neurons. Specifically, we consider $\Theta(\log n)$ different inputs configurations, with geometrically increasing numbers of firing input neurons, ranging from $O(1)$ to $O(n)$.  
We show that, after $t$ rounds, with good probability, the network \emph{does not distinguish between} (i.e. behaves identically for) $\Theta(\log n/\alpha^t)$ inputs. 

As long as $\log n/\alpha^t > 2$, after $t$ rounds, there are at least two inputs not distinguished by the network, and so on which the network cannot achieve WTA with good probability. This yields our lower bound of $t=\Omega(\log\log n/\log \alpha)$ rounds in expectation. Our argument uses techniques familiar in distributed computing theory \cite{lynch1989hundred}, showing that limited local information prevents outputs from behaving in distinct manners for a large number of density classes in each round.

We obtain a corresponding lower bound for the number of rounds required to solve WTA \emph{with high probability} by showing that in general, the high probability runtime is $\Omega(\log n/\log \log \log n)$ times the expected runtime. This nearly matches the $O(\log n)$ gap which can be achieved by noting that in $O(\log n)$ runs, any network will converge within its expected runtime at least once with high probability. 
Our conversion result shows that, in our setting, 
expected runtime is a more natural metric -- it is controlled by the number of inhibitors used, whereas the high probability runtime is just a function of expected runtime, independent of the number of inhibitors

\begin {table}[H]\label{tab:upperBounds}
\begin{center}
\begin{tabular}{ | c | c | c | }
  \hline 
\textbf{Inhibitors} & \textbf{Lower Bound (Expected Time)} & \textbf{Upper Bound (Expected Time)}  \\
 \hline			
  Unbounded & $\Omega(1)$ ($\Omega(\log n)$ high probability time) & $O(1)$ with $\NumInh=\Theta(\log^{1/c} n)$  \\
 \hline			
  $1$ & $\Omega(n^c)$ & $O(n^c)$ \\
\hline 
  $2$ & $\Omega(\log n/\log\log n)$ & $O(\log n)$  \\
 \hline			
  $\alpha$ & $\Omega(\log\log n/ \log \NumInh)$ & 
\begin{tabular}{@{}c@{}}$O(\alpha \cdot \log^{1/\alpha} n)$, for $\NumInh = O(\log\log n)$ \\ 
$\tilde O\left (\frac{\log\log n}{\log \NumInh}\right )$, for $\NumInh = \Omega(\log\log n)$\end{tabular}\\
\hline
\end{tabular}
\caption{Expected Time vs. Number of Inhibitors Tradeoff in Basic WTA Networks.}
\end{center}
\end{table}

\subsection{Biological Insights in Our Results}
Previous work has conjectured that widespread use of simple WTA implementations in the brain may explain how complex computation is possible even when inhibition is relatively limited and localized \cite{maass2000computational}. Our work shows that WTA can be achieved and maintained efficietly using very few inhibitors and with a very simple connectivity structure.

Our upper and lower bound constructions have a common take home message that may shed some light into the biological implementations of WTA networks. For instance, the division of inhibitors into ``task preservers" (stability inhibitors) and ``task solvers" (convergence inhibitors) seems fundamental. 
%
%
Further, while randomness is crucial as it allows for symmetry breaking amongst competing outputs, it appears (both in the upper bounds and the corresponding lower bound) that in optimal networks the inhibitors behave almost as deterministic threshold circuits, firing with high probability whenever the number of firing outputs is above a certain level. 
This presents an interesting dichotomy -- while randomness is necessary computationally, it also has a cost in leading to unpredictable behavior amongst the inhibitors which `control' the network.

\paragraph{Road Map:} In Sec. \ref{sec:model} we describe our spiking neural network model and specify the WTA problem. In Sec. \ref{sec:warmup} we give two warm up examples of WTA networks to illustrate the tradeoff between convergence time and network size. The first has two inhibitors and converges to the WTA state within $O(\log n)$ rounds in expectation. The second has $O(\log n)$ inhibitors and $O(1)$ expected runtime. In Sec. \ref{sec:generalUpperBounds}, we provide more delicate constructions for any number of inhibitors $\NumInh$.  Our key technical result appears in Sec. \ref{sec:mainLower} where we provide a runtime lower bound (both for expected and high probability time) for circuits using $\alpha$ inhibitors, for any $\alpha$. Our lower bound nearly matches our upper bounds for $\alpha=\Omega(\log\log n)$. Missing proofs are deferred to the appendix.

\section{Neural Network Model}\label{sec:model}
A \emph{Spiking Neural Network} (SNN) $\Net=\langle \Input,\Output,\Inh, w,b \rangle$ consists of $n$ input neurons $\Input=\{x_1, \ldots, x_{n}\}$, $n$ output neurons $\Output=\{y_1, \ldots, y_{n}\}$, and $\NumInh$ auxiliary neurons $\Inh = \{z_1,...,z_{\NumInh} \}$. The directed, weighted synaptic connections between $\Input$, $\Output$, and $\Inh$ are described by the weight function $w: [\Input \cup \Output \cup \Inh] \times [\Input \cup \Output \cup \Inh] \rightarrow \mathbb{R}$. The in-degree of every input neuron $x_i$ is zero.
Each neuron is either inhibitory or excitatory: 
if $v$ is inhibitory $w(v,u)\leq 0$ for every $u$, and if $v$ is 
excitatory $w(v,u)\geq 0$ for every $u$. 
Finally, for any neuron $v$, $b(v) \in \mathbb{R}_{\geq 0}$ is the activation bias -- as we will see, roughly, $v$'s membrane potential must reach $b(v)$ in order for a spike to occur with good probability.

\paragraph{The Basic WTA Network and its Dynamics:} We focus on a restricted class of \emph{basic SNNs}, in which all auxiliary neurons are inhibitory, inputs only connect to their corresponding outputs, and there are no connections within the inhibitory or output layers, aside from an excitatory self-loop from each output to itself. All outputs have identical parameters, i.e. bias values and edge weights.

We introduce some more concise notation to describe basic SNNs. Let $\weightX>0$  be the  synaptic weight from each input $x_j$ to its corresponding output $y_j$. Let $\weightS >0$ be the weight of the excitatory self-loop from output $y_j$ to itself. Let $\weightZ_{j}\leq 0$ be the weight of the inhibitory synapses from inhibitor $z_j$ to each output neuron. Conversely, let $\weightY_{j}\geq 0$ be the weight of the excitatory synapses from each output in $\Output$ to inhibitor $z_j$. Finally, let $\BiasOut$ be the bias value for each output neuron.
For an illustration of the basic architecture, see Figure \ref{fig:basiccircuit}.
\begin{figure}[t]
\centering
\includegraphics[width=0.38\textwidth]{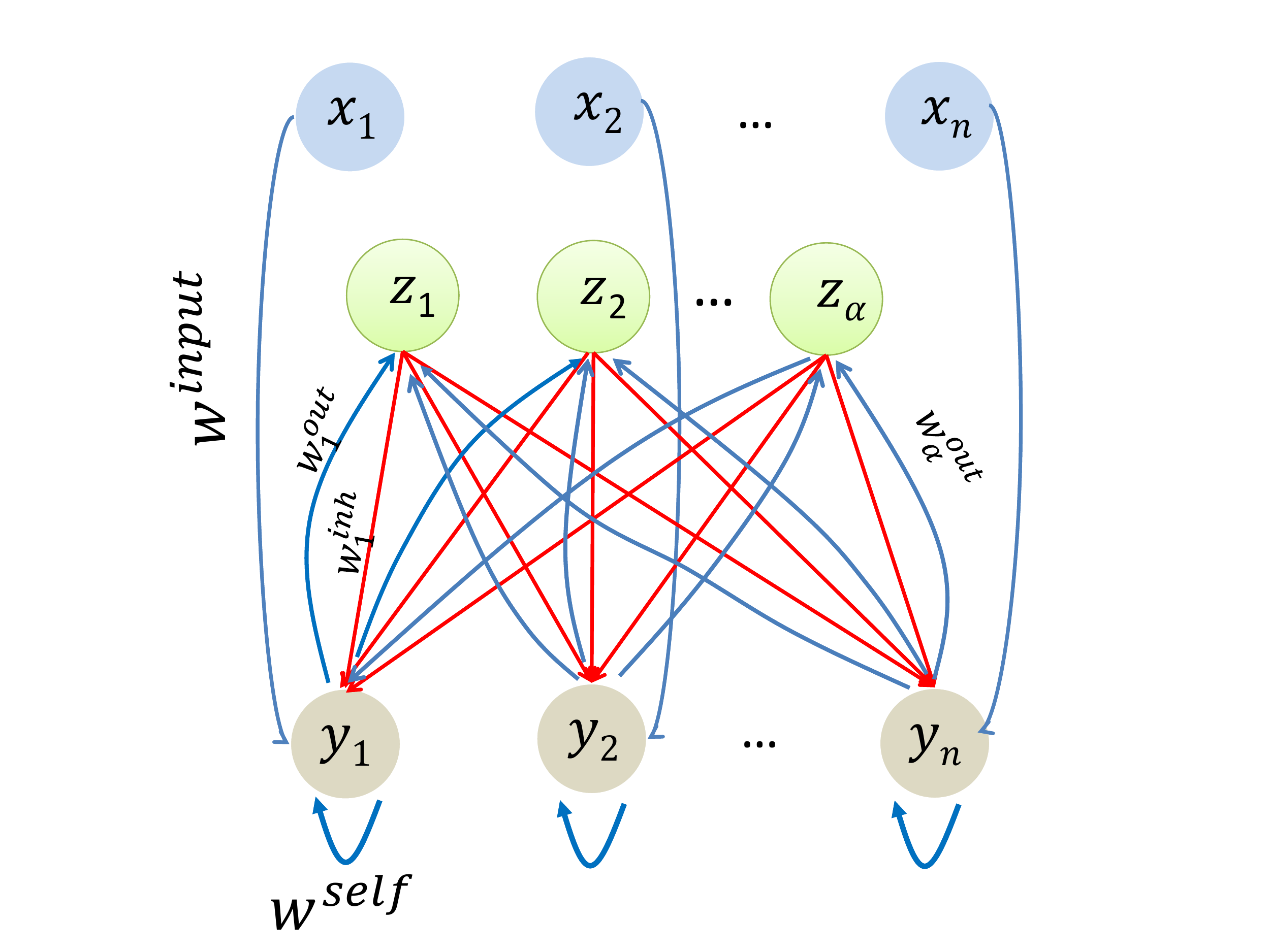}
\caption{Basic WTA Network structure.}
\label{fig:basiccircuit}
\end{figure}
%

The network evolves in discrete, synchronous \emph{rounds} as a Markov chain, with an alternating dynamic between the neurons in $\Input$, $\Output$ and $\Inh$. We give in depth biological motivation in Appx. \ref{append:bio}.

Each round $t$ consists of three sub-rounds denoted by $(t,1),(t,2)$ and $(t,3)$ where the three layers inputs, outputs and inhibitors are scheduled to fire: In the first sub-round $(t,1)$ of each round $t$, the input layer fires. We consider static inputs so each $x_i$ either fires in every round or does not fire in any round. After that, in sub-round $(t,2)$ the output neurons in $\Output$ spike with probabilities dependent on their membrane potentials. Finally, in sub-round $(t,3)$ the inhibitors in $\Inh$ spike in response to their potentials. 
The firing probability of every neuron depends on the firing status of its neighboring neurons in the preceding three sub-rounds (i.e., a length of one round).
This probabilistic firing is modeled using a standard sigmoid 
function. 
For each neuron $u$, let $u^{(t,k)}=1$ if $u$ fires (i.e., generates a spike) in sub-round $(t,k)$ for $k \in \{1,2,3\}$.

Since each neuron is always scheduled to fire in one of $(t,1)$, $(t,2)$ or $(t,3)$ depending on if it is in layer $\Input$, $\Output$, or $\Inh$, for convenience we will often omit the sub-round notation, writing 
$u^t=1$ if $u$ fires in \emph{one} of the sub-rounds $(t,k)$. We call $u^t$, the \emph{firing state} of $u$ in round $t$. Informally, we say that $u$ \emph{fires in round $t$} if $u^t = 1$.
For each output $y_j \in \Output$ , let $pot(y_j,t)$ denote the membrane potential at sub-round $(t,2)$ and $p(y_j,t)$ denote
the corresponding firing probability. These values are calculated as:
\begin{align}
\label{eq:potentialOut}
pot(y_j,t)=  (x_j^{(t,1)} \weightX) + (y_j^{(t-1,2)} \cdot \weightS) &+ \left [\sum_{z_i \in \Inh } z_i^{(t-1,3)} \cdot \weightZ_i \right ]  -\BiasOut \nonumber\\
&\text{ and }p(y_j,t)=\frac{1}{1+e^{-pot(y_j,t)/\lambda}}
\end{align}
where $\lambda > 0$ is a \emph{temperature parameter}, which determines the steepness of the sigmoid. Note that \eqref{eq:potentialOut} incorporates excitatory and inhibitory effects from any spikes occurring within the three sub-rounds before the outputs spike in sub-round $(t,2)$. Specifically, this includes input spikes in sub-round $(t,1)$ along with output and inhibitory spikes in sub-rounds $(t-1,2),(t-1,3)$ respectively.
Applying the same rules, in sub-round $(t,3)$, each inhibitor in $\Inh$ fires with probability $p(z_j,t)$ calculated as:
\begin{align}
\label{eq:potentialInh}
pot(z_j,t)=  \left [\sum_{y_i \in \Output } y_i^{(t,2)} \cdot \weightY_j \right ] - b(z_j) \text{ and }p(z_j,t)=\frac{1}{1+e^{-pot(z_j,t)/\lambda}}.
\end{align}
Again \eqref{eq:potentialInh} incorporates effects from relevant spikes within three sub-rounds $(t-1,3), (t,1)$ and $(t,2)$. However, since the inhibitors are connected only to the outputs, the only sub-round that affects them is $(t,2)$. 
After the inhibitors fire, computation proceeds to round $t+1$, beginning with the firing of the inputs.
\def\APPENDBIO{
In our network, the timing of the neural spikes is determined by two biological parameters, namely, the \emph{refractory period} $\beta$ and the \emph{response latency}, $\Delta$.  The refectory period is the time during which stimulus given to the neuron would not cause a second action potential. The response latency is the delay between the time the action potential reaches the presynaptic terminal of the input neurons and the time the postsynaptic output neuron sends out an action potential (assuming it does).
In our setting we consider the case where $\Delta < \beta$ since for connected neurons in close proximity to each other, and inhibitory neurons with primarily local connections, the response delay is a few hundred of micro-seconds whereas the refractory time is several milliseconds \cite{sabatini1999timing}. WTA networks are basic, local neural primitives that are not believed to involve long range connections, justifying our assumption. 

Every round corresponds to an interval between two pulses of the inputs (hence a round lasts $\beta$ milliseconds). At the beginning of every round, the input layer spikes (at sub-round $(t,1)$ in the notation of our discrete model).
The spikes generated by the inputs invoke an alternating dynamic between the three layers in the circuit. Specifically, with a delay of $\delta$ milliseconds after the input's spike, the outputs spike with probability that is proportional to their total synaptic strengths (in sub-round $(t,2)$). As shown in equation \eqref{eq:potentialOut}, this potential incorporates any spikes which occurred within a $\beta$ millisecond preceding window -- the input spikes in sub-round $(t,1)$ ($\Delta$ milliseconds before), the inhibitor spikes in sub-round $(t-1,3)$ ($\beta -\Delta$ milliseconds before), and the neuron's own self-excitatory output spike in sub-round $(t-1,2)$, $\beta$ milliseconds before. $\Delta$ milliseconds after the outputs spike, the inhibitors spike in sub-round $(t,3)$, again incorporating spikes that occurred with a $\beta$ millisecond window, which due to their limited connectivity structure, just includes the spikes of $\Output$ in sub-round $(t,2)$.

}

\paragraph{Temperature and Background Noise.}
It is clear that the temperature $\lambda$ does not affect the computational power of the network as we can simply adjust all synapse weights and neuron biases by a factor of $\lambda/\lambda'$ to simulate a network with temperature $\lambda'$. Hence, we always choose a $\lambda$ that makes exposition easier.
We assume that neurons in $\Inh,\Output$ have bias $b(v)=\Omega(\lambda \log n)$, so they do not fire with probability $1-1/(1+e^{-c \cdot \log n})=1-1/n^c$ when they receive no external stimulation. We call this the \emph{no-background noise} assumption: the network is quiet when no input is introduced.  

\paragraph{System Configuration.} The configuration $\mathcal{C}^t=(\Input^t,\Output^t,\Inh^t)$ in round $t$ is defined by the firing states\footnote{The firing state of a neuron is a binary number indicating if it is firing or not.} of the corresponding neurons in round $t$ where $\Input^t = [x_1^t,...,x_n^t]$ and $\Output^t$ and $\Inh^t$ are defined analogously. Recall that $x_i^t=1, y_i^t=1, z_i^t=1$ if the input $x_i$ (output $y_i$, inhibitor $z_i$) fires in sub-round $(t,1)$ (resp., $(t,2), (t,3)$).  
We consider a static input setting where $\Input^t =\Input $ for all $t$.\footnote{Note however that our model can easily handle non-static inputs. All algorithms given will converge from an arbitrary initial configuration and so will converge if $\Input$ changes.} We abuse notation slightly, thinking of $\Input$ as a vector of binary input values where 
$x_j=1$ indicates that $x_j$ fires in every round ($x_j^t = 1$ for all $t$) and $x_j=0$ implies that $x_j$ never fires ($x_j^t = 0$ for all $t$). 
In the initial configuration $\mathcal{C}^0$, $\Input^0 = \Input$, $\Output^0$ can be arbitrary, and $\Inh^0$ is determined as in any round according to equation \eqref{eq:potentialInh}.

\paragraph{The WTA Problem.}
A binary winner-take-all network given $n$ inputs should converge to having a single firing output corresponding to a firing input (the `winner'), if one exists. Formally, given $\Input \in \{0,1\}^n$, let 
$f(\Input) = \{\Output \in \{0,1\}^n ~| ~y_i \le x_i \text{ } \forall i\text{ and } \norm{\Output}_1 = \min(1, \norm{\Input}_1) \}$
where $\norm{\cdot }_1$ is the standard $1$-norm, used to denote the number of firing neurons in a set.

We say $\Net$ \emph{satisfies WTA} in round $t$ if $\Output^t \in f(\Input)$. We say $\Net$ \emph{converges to WTA} in $t$ rounds with probability $1-\delta$ if
for every input $\Input \in \{0,1\}^n$ and every initial output configuration $\Output^0$, with probability at least $1-\delta$, 
$\Output^t \in f(x)$ and $\Output^{t'} = \Output^t$ for all $t' \in [t+1,t+n^{c}]$ where $c$ is a positive constant.
That is,
the network satisfies WTA in round $t$ and maintains the satisfying configuration for polynomial in $n$ subsequent rounds.
As our neurons are inherently probabilistic, our definition of convergence is as well -- we will never be able to avoid occasional random deviations from a correct output state and so just demand that the state is maintained a large number of rounds.


We let $\ExpectedT(\Net)$ denote the maximum expected time required to converge to WTA, taken over all possible inputs $\Input$ and initial output configurations $\Output^0$. 
In the same manner, $\HighProT(\Net)$ denotes the maximum time required for the network to converge to WTA with high probability.\footnote{Throughout, \emph{with high probability} (w.h.p.) refers to events occuring with probability $\ge 1- 1/n^c$ for constant $c$.}

  

\section{Warm Up: Two Simple Networks for WTA}
\label{sec:warmup}
We begin by presenting two WTA networks that represent two extremes of the inhibitor-time tradeoff. They also illustrate the rough intuition that will appear in our later network constructions and lower bound strategies.

\paragraph{WTA with two inhibitors.}
In our two inhibitor network we have $\Inh  = \{z_s, z_c\}$. The neuron $z_s$ is a \emph{stability} inhibitor that maintains the WTA state once it has been reached. It fires \whp in sub-round $(t,3)$ whenever at least one output fires in sub-round $(t,2)$. The neuron $z_c$ is a \emph{convergence} inhibitor that fires \whp whenever WTA has not yet been reached -- i.e. whenever $\ge 2$ outputs fire in sub-round $(t,2)$.

We set the weights connecting $z_s$ and $z_c$ to the outputs such that when both fire in round $t$, any output that fired in round $t$ will fire with probability $1/2$ 
in round $t+1$. Any output that \emph{did not fire} in round $t$ will not fire in round $t+1$ \whp as it will not have an active excitatory self-loop and so its membrane potential will be too low to overcome the inhibition.

In this way, as long as $\ge 2$ outputs fire in round $t$, both inhibitors fire \whp and the high level of inhibition causes outputs to `drop out of contention' for the winning position with probability $1/2$. After $O(\log n)$ rounds, nearly all the outputs stop firing and with constant probability there is a round in which exactly $1$ output fires.
Once this round occurs, $z_c$ ceases firing \whp and just $z_s$ fires. This decreased level of inhibition allows the winner to keep firing, as it is offset by the winner's excitatory self-loop. However, it prevents any other output, whose excitatory self-loop is inactive, from firing \whp 
See Figure \ref{fig:two} in Appendix \ref{append:twoi} for illustration of the network with its edge weights.
%
%
We analyze the network in depth in \ref{append:twoi}, showing convergence given any input $\Input$ and initial output configuration $\Output^0$, and yielding:

\begin{theorem}
\label{thm:twoihb}
There exists a basic WTA network $\Net$ with $\NumInh = 2$ inhibitors and $\mathcal{ET}(\Net) = O(\log n)$ and $\mathcal{HT}(\Net) = O(\log^2 n)$.
\end{theorem}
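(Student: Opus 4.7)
The plan is to pin down the network parameters so that the two inhibitors behave as near-deterministic threshold gates and the outputs execute a simple ``halving'' random walk on the number of firing units until exactly one remains. For the inhibitors, I would set $\weightY_s, b(z_s)$ and $\weightY_c, b(z_c)$ so that $pot(z_s,t)$ and $pot(z_c,t)$ lie in windows of width $\Theta(\lambda\log n)$ on either side of zero according to whether $\|\Output^{(t,2)}\|_1 \ge 1$ and $\|\Output^{(t,2)}\|_1 \ge 2$, respectively; standard sigmoid tail bounds then give per-round failure probability $n^{-c}$ for each inhibitor realising its threshold behaviour.

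For the outputs, I would choose $\weightX, \weightS, \weightZ_s, \weightZ_c, \BiasOut$ all of order $\lambda\log n$ so that for every output $y_i$ and every relevant pattern of $(x_i, y_i^{t-1}, z_s^{t-1}, z_c^{t-1})$ the potential $pot(y_i,t)$ lands in one of three regimes: $\Theta(\lambda\log n)$ (fires w.h.p.), exactly $0$ (fires with probability $1/2$), or $-\Theta(\lambda\log n)$ (silent w.h.p.). Concretely, I want (a) if $x_i=1$, $y_i^{t-1}=1$, and both inhibitors fired, $pot = 0$; (b) if $x_i=1$, $y_i^{t-1}=1$, and only $z_s$ fired, $pot = \Theta(\lambda\log n)$; (c) if $x_i=0$ or $y_i^{t-1}=0$ but at least $z_s$ fired, $pot = -\Theta(\lambda\log n)$; and (d) if $x_i=1$ and neither inhibitor fired, $pot = \Theta(\lambda\log n)$ regardless of $y_i^{t-1}$. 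These are a finite set of linear constraints in five parameters, and a direct check shows they admit a solution (for example, with $L = \lambda\log n$, taking $\weightX = \weightS = 4L$, $\weightZ_s = -2L$, $\weightZ_c = -3L$, $\BiasOut = 3L$).

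With regimes (a)--(d) established, let $S^t$ denote the set of outputs firing in round $t$. A single application of (c) forces $S^t \subseteq \{i : x_i = 1\}$ w.h.p.\ after one round, and afterwards $|S^t|$ evolves as a simple Markov chain: from $|S^t| = k \ge 2$, both inhibitors fire w.h.p., and (a) together with (c) give $|S^{t+1}| \sim \mathrm{Bin}(k, 1/2)$; from $|S^t| = 1$, only $z_s$ fires w.h.p., and (b)+(c) keep $|S^{t+1}| = 1$; from $|S^t| = 0$ neither inhibitor fires w.h.p., and (d) yields $|S^{t+1}| = \|\Input\|_1$. The binomial halving chain reaches $\{0,1\}$ in $O(\log n)$ expected steps, and a short recursion for $p_k = \Pr[\text{hit } 1 \text{ before } 0 \mid |S^0| = k]$ (with $p_1 = 1$, $p_2 = 2/3$, and $p_k$ increasing to a constant limit) gives a uniform lower bound $p_k \ge q > 0$. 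Hence in each ``epoch'' of $\Theta(\log n)$ rounds the absorbing good state $|S^t| = 1$ is reached with probability $\Omega(1)$, and once there (b)+(c) preserve it for $n^{c'}$ further rounds by a union bound.

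The expected-time bound $\ExpectedT(\Net) = O(\log n)$ then follows from geometric dominance of the epoch count, and $\HighProT(\Net) = O(\log^2 n)$ follows because $\Theta(\log n)$ independent epochs all fail with probability $(1-q)^{\Theta(\log n)} = n^{-\Omega(1)}$. The main obstacle I expect is not the probabilistic halving analysis, which is standard, but rather verifying that a single weight assignment simultaneously realises regimes (a)--(d) across \emph{all} configurations of $(x_i, y_i^{t-1}, z_s^{t-1}, z_c^{t-1})$ and both possible values of the inhibitor firing pattern inside regime (c); once those linear constraints are solved, the rest of the argument is mechanical modulo union-bounding the $n^{-c}$ per-round failure probabilities over the $n^{O(1)}$-round horizon.
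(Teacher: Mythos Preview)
Your proposal is correct and follows essentially the same approach as the paper: two threshold-like inhibitors (one firing when $\ge 1$ output fires, the other when $\ge 2$), weights tuned so that active outputs halve with probability $1/2$ under double inhibition, stability under single inhibition, and a restart from the empty state; the $O(\log n)$ expected and $O(\log^2 n)$ high-probability bounds then follow from the geometric epoch argument. The only cosmetic differences are your specific parameter values and your recursion for $p_k$ in place of the paper's one-line observation that $\Pr[k_{t+1}=1] \ge \Pr[k_{t+1}=0]$ whenever $k_t \ge 1$, which directly gives the constant success probability per epoch.
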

\def\APPENDTWO{
\begin{figure}[H]
\centering
\includegraphics[width=0.4\textwidth]{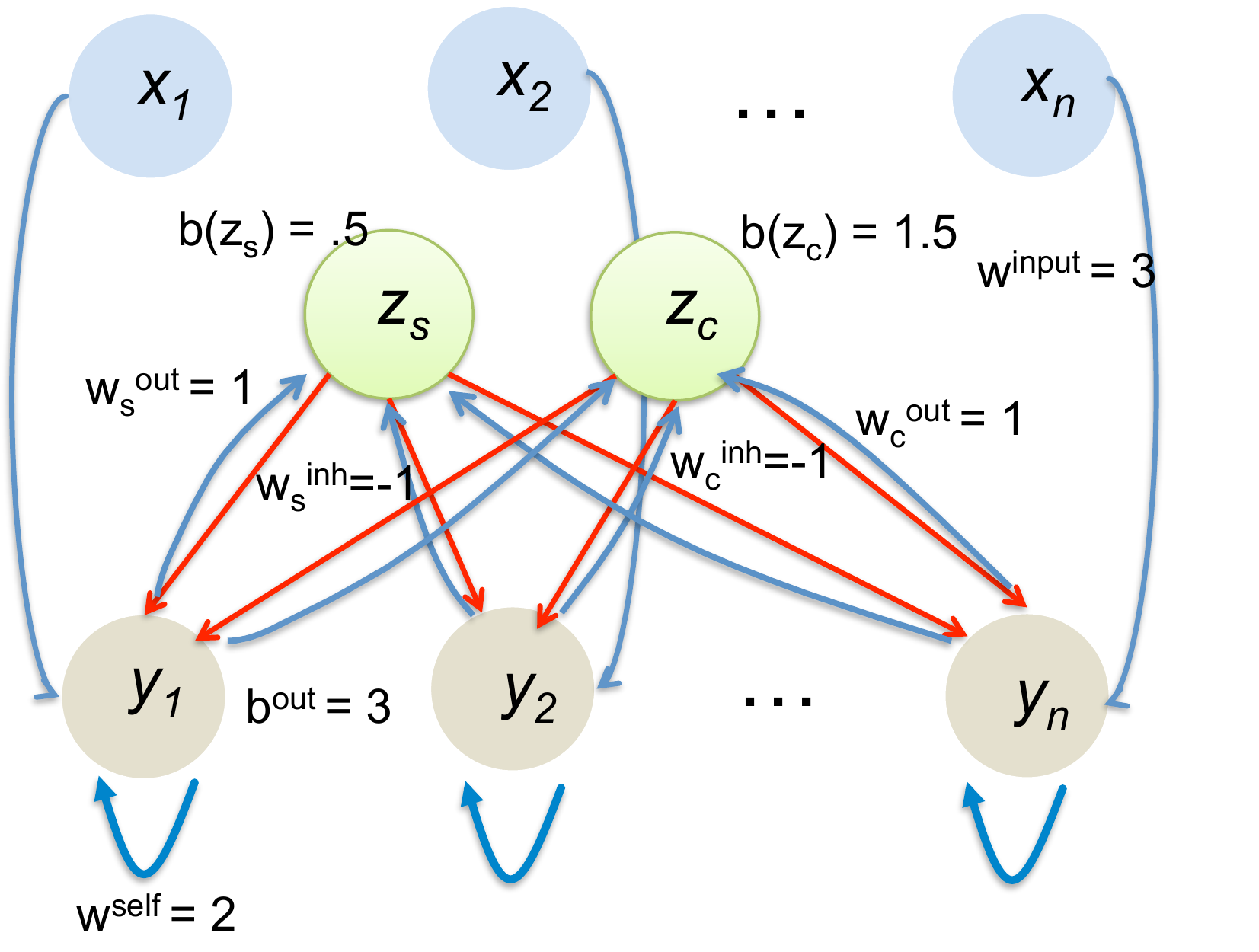}
\caption{Two Inhibitor WTA Network}
\label{fig:two}
\end{figure}
\dnsparagraph{Proof of Theorem \ref{thm:twoihb} (Two Inhibitor Upper Bound)}

Formally the parameters of the network are set as follows: assume w.l.o.g. that $\lambda = 1/(c_1\log n)$ for large constant $c_1$.
For both inhibitors, set the excitatory output to inhibitor weights to $\weightY_s = \weightY_\ell = 1$ and $b(z_s) = .5$, $b(z_c) = 1.5$. Thus, by equation \eqref{eq:potentialInh} $z_s$ fires \whp in sub-round $(t,3)$ whenever at least one output fires in sub-round $(t,2)$, and $z_c$ fires \whp whenever at least two outputs fire.

Set the inhibitor to output weights to $\weightZ_s  = \weightZ_\ell = -1$, the excitatory input to output connection weight to $\weightX = 3$, and the excitatory output to output self-loop to $\weightS = 2$. Finally, set the output bias to $\BiasOut = 3$.

The above parameters insure that only outputs corresponding to firing inputs ever fire \whp Additionally, if we have not yet reached WTA and both $z_s$ and $z_c$ fire in sub-round $(t,3)$, any output that fired in sub-round $(t,2)$ will fire with probability $1/2$ in sub-round $(t+1,2)$. If we have reached WTA and just $z_s$ fires, any output (the winner) that fired in round $t$ will fire in round $t+1$ \whp In either case, any output that did not fire in round $t$ will not fire \whp in round $t+1$.

We now give a formal proof of the theorem.
First note that if the input $\Input = \vec{0}$ then in every round, each output has potential $pot(y_j, t) \le \weightS - \BiasOut = -1$ and so, recalling that $\lambda = 1/(c_1 \log n)$, fires with probability at most $\frac{1}{1 + e^{c_1 \log n}} \le 1/n^c$ for some large constant $c$ in any round. So \whp no outputs fire in each round, which is the valid output given $X = \vec{0}$ and so $\Net$ trivially converges to WTA. 
So for the remainder of the section we focus on the case in which $\Input$ has at least one firing input.
We show that $\Net$ satisfies the following conditions, which imply Theorem \ref{thm:twoihb}:
\begin{claim}[Stability]
\label{cl:stability}
If $\Net$ satisfies WTA in round $t$ with $y_j^t = 1$, then $\Net$ satisfies WTA in round $t+1$ with $y_j^{t+1} = 1$ \whp
\end{claim}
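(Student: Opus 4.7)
The plan is to unwind the round-$t+1$ dynamics in two stages, first computing the inhibitor potentials in sub-round $(t,3)$ and then the output potentials in sub-round $(t+1,2)$, and showing that in each case the magnitude of the potential is a positive constant, so that the sigmoid with temperature $\lambda=1/(c_1\log n)$ produces firing (or non-firing) probability $1-1/n^{c}$ for an arbitrarily large constant $c$.

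First I would examine the inhibitors at sub-round $(t,3)$. Since we assume $\Net$ satisfies WTA in round $t$, exactly one output $y_j$ fires in sub-round $(t,2)$, so by \eqref{eq:potentialInh}, $pot(z_s,t)=\weightY_s-b(z_s)=1-0.5=0.5$ and $pot(z_c,t)=\weightY_\ell-b(z_c)=1-1.5=-0.5$. With the temperature choice, $z_s^{t}=1$ and $z_c^{t}=0$ each with probability $\ge 1-1/n^{c}$. All subsequent calculations will condition on this event.

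Next I would check the winner $y_j$ at sub-round $(t+1,2)$. Because $y_j$ is the winner, $x_j=1$, $y_j^{t}=1$, and conditioned on the inhibitor event above, equation \eqref{eq:potentialOut} gives
\[
pot(y_j,t+1)=\weightX+\weightS+\weightZ_s+0-\BiasOut=3+2-1-3=1,
\]
so $y_j$ fires in round $t+1$ with probability $\ge 1-1/n^{c}$. For any other output $y_i$, $i\neq j$, we have $y_i^{t}=0$, so
\[
pot(y_i,t+1)=x_i\weightX+\weightZ_s-\BiasOut\le 3-1-3=-1,
\]
regardless of whether $x_i$ fires, and thus $y_i$ does not fire with probability $\ge 1-1/n^{c}$.

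Finally I would take a union bound over the $O(n)$ firing events (two inhibitors and $n$ outputs); by choosing $c_1$ large enough, the total failure probability is at most $1/n^{c'}$ for any desired constant $c'$. This shows that with high probability, $z_s$ fires, $z_c$ is silent, $y_j$ fires, and all other outputs remain silent in round $t+1$, matching the definition of WTA with $y_j^{t+1}=1$. No step is genuinely hard here; the only thing to be careful about is that all the potential computations condition correctly on the sub-round structure of \eqref{eq:potentialOut} and \eqref{eq:potentialInh}, so that the high-probability inhibitor event from sub-round $(t,3)$ feeds into the output calculation at sub-round $(t+1,2)$ without double-counting the temperature-blow-up.
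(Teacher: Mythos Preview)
Your proposal is correct and follows essentially the same approach as the paper's proof: compute the inhibitor potentials in sub-round $(t,3)$ to conclude $z_s$ fires and $z_c$ does not \whp, then plug this into the output potentials at $(t+1,2)$ to get $pot(y_j,t+1)=1$ and $pot(y_{j'},t+1)\le -1$. The only differences are cosmetic --- you make the union bound over the $O(n)$ events explicit, whereas the paper leaves it implicit.
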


\begin{claim}[Convergence]
\label{lem:live}
Letting $t=c_2 \log n$ for constant $c_2$, for any input $\Input$ with $\norm{X}_1 \ge 1$ and any starting configuration $C^0$, $\Net$ satisfies WTA in round $C^{t'}$ for some $t' < t$, with constant probability.
\end{claim}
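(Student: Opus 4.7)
The plan is to reduce the random dynamics of the two-inhibitor network to an independent ``halving'' process and then apply a simple survival-time analysis. The first step is to instantiate \eqref{eq:potentialOut} with the specific parameters fixed in the proof of Claim \ref{cl:stability} and record three clean rules, each holding with probability at least $1 - 1/\poly(n)$ per round: (i) any $y_j$ with $x_j = 0$ has $pot(y_j, t+1) < 0$ whenever some output fires in round $t$, so invalid firings present in $\Output^0$ cannot persist past round $1$; (ii) if $\|\Output^t\|_1 \ge 2$ and $\Output^t \subseteq \{j : x_j = 1\}$, both $z_s$ and $z_c$ fire, yielding $pot(y_j, t+1) = 0$ for $y_j \in \Output^t$ and $pot(y_j, t+1) \le -2$ for every other $y_j$, so $\Output^{t+1}$ is a random subset of $\Output^t$ in which each element is kept with independent probability exactly $1/2$; and (iii) if $\Output^t = \vec{0}$, then neither inhibitor fires and each $y_j$ with $x_j = 1$ fires in round $t+1$ independently with probability $1/2$.

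Using (i) and (iii) I would next handle the arbitrary initial configuration by a short case split on $\|\Output^0\|_1$: in all three subcases ($0$, $1$, and $\ge 2$) the network reaches, within at most $t_0 \le 2$ rounds, a clean state with $\Output^{t_0} \subseteq \{j : x_j = 1\}$ and, with probability bounded below by an absolute constant, also $m := \|\Output^{t_0}\|_1 \ge 1$. The only failure mode is a round in which rule (iii) is triggered and every one of the $\|\Input\|_1 \ge 1$ independent Bernoulli$(1/2)$ coins comes up tails, which has probability at most $1/2$. If $m = 1$ the clean state is already WTA and Claim \ref{cl:stability} preserves it forever; otherwise I may assume $m \ge 2$.

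The heart of the argument is then a coupling of the remaining dynamics to an independent survival process. For each $j \in \Output^{t_0}$ and $s \ge 1$, introduce an independent fair coin $c_{j,s}$ and define $\tilde{k}_s := |\{j \in \Output^{t_0} : c_{j,1} = \cdots = c_{j,s} = 1\}|$, so that $\tilde{k}_s \sim \text{Bin}(m, 2^{-s})$ and $\tilde{k}_s$ is monotonically non-increasing in $s$. By rule (ii), as long as $\tilde{k}_{s-1} \ge 2$ the actual firing set $\Output^{t_0+s}$ can be coupled to $\{j : c_{j,1} = \cdots = c_{j,s} = 1\}$, so $\|\Output^{t_0+s}\|_1 = \tilde{k}_s$ under the coupling. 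Choosing $s^* := \lceil \log m \rceil$ gives $\E[\tilde{k}_{s^*}] \in (1/2, 1]$, and a direct binomial calculation yields $\Pr[\tilde{k}_{s^*} = 1] = m \cdot 2^{-s^*}(1 - 2^{-s^*})^{m-1} \ge 1/(2e) = \Omega(1)$. On this constant-probability event, let $\tau := \min\{s : \tilde{k}_s = 1\} \le s^*$; monotonicity of $\tilde{k}$ guarantees $\tilde{k}_{s-1} \ge 2$ for all $s < \tau$, so the coupling remains valid through round $t_0 + \tau$, forcing $\|\Output^{t_0+\tau}\|_1 = 1$ with the sole firing output corresponding to a firing input. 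Claim \ref{cl:stability} then maintains WTA for the required $n^c$ rounds, giving convergence within $t_0 + s^* \le 2 + \lceil \log n \rceil$ rounds with the desired constant probability.

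The main obstacle I anticipate is the bookkeeping for the initial cleanup — in particular the $\|\Output^0\|_1 = 1$ case with an invalid firing, which goes silent for one round before rule (iii) re-seeds the network — and correctly initializing the coupling to begin from whichever of round $1$ or round $2$ first produces the clean state. After that, one only needs to union-bound the $O(\log n)$ per-round ``\whp'' inhibitor events along the coupling, which subtracts a negligible $1/\poly(n)$ from the constant success bound.
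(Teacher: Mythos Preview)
Your proposal is correct and follows essentially the same three-case decomposition as the paper: clean up the arbitrary start in $O(1)$ rounds, then analyze the independent halving process among the valid firing outputs. Your final step---fixing $s^* = \lceil \log m \rceil$, directly bounding $\Pr[\tilde{k}_{s^*} = 1] \ge 1/(2e)$, and backtracking via monotonicity to the first hitting time---is a slightly cleaner substitute for the paper's per-round inequality $\Pr[k_{t+1}=1] \ge \Pr[k_{t+1}=0]$, but the overall argument is the same.
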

Since Claim \ref{lem:live} holds for any starting configuration, we can simply
apply it $\Theta(\log n)$ times to show that \whp within $\Theta(\log^2 n)$ rounds, there will be a round in which WTA is satisfied, and hence $\Net$ will converge to WTA by Claim \ref{cl:stability}. Additionally, it gives $\mathcal{ET}(\Net) = O(\log n)$ as letting $c_1$ be the constant probability of reaching WTA in $O(\log n)$ rounds, we have:
\begin{align*}
\mathcal{ET}(\Net) = O \left ( \sum_{i=0}^\infty (1-c_1)^i \cdot c_1\log n \right ) = O(\log n).
\end{align*}
This gives us Theorem \ref{thm:twoihb}.

\begin{proof}[Proof of Claim \ref{cl:stability}]
$\Net$ satisfies WTA in round $t$ with output $y_j$ firing, so we have 
\begin{align*}
pot(z_s,t) = 1 \cdot \weightY_s - b(z_s) = .5 \text{ and } pot(z_c,t) = 1\cdot \weightY_s - b(z_c) = -.5.
\end{align*}
Thus, recalling that we have $\lambda = 1/(c_1 \log n)$, in round $t$ $z_s$ fires with probability $\frac{1}{1 + e^{-.5c_1 \log n}} \ge 1 - 1/n^c$ for large $c$ and $z_c$ fires with probability $\frac{1}{1 + e^{.5c_1 \log n}} \le 1/n^c$ for large $c$. So \whp just $z_s$ fires in round $t$.
This gives that \whp
$$pot(y_j,t+1) = (1 \cdot \weightZ_s) + (0 \cdot \weightZ_\ell) + (1 \cdot \weightS) + \weightX - \BiasOut = -1 + 2 + 3 - 3 = 1.$$
So $y_j$ fires with probability $\frac{1}{1+e^{c_1\log n}} \ge 1-1/n^c$ in round $t+1$. In contrast, for any $j' \neq j$, $y_{j'}$ does not fire in round $t$ so we have \whp
$$pot(y_{j'},t+1) \le (1 \cdot \weightZ_s) + (0 \cdot \weightZ_\ell) + (0 \cdot \weightS) + \weightX - \BiasOut = -1 + 3 - 3 = -1.$$
Therefore $y_j'$ fires with probability $\le 1/n^c$ in round $t+1$ so WTA is satisfied with output $y_j$ firing in round $t+1$ \whp
\end{proof}

\begin{proof}[Proof of Claim \ref{lem:live}]
Recall that we only consider $\norm{X}_1 \ge 1$ as convergence to WTA is trivial when $X = \vec{0}$.
We analyze three simple cases depending the initial configuration $C^0$:

\paragraph{Case 0: No output $y_j$ with $x_j = 1$ fires in $\Output^0$.}
We first consider the subcase that no output (regardless of the value of $x_j$) fires in $\Output^0$.
In this case, $pot(z_s,0) = -b(z_s) = -.5$ and  $pot(z_c,0) = -b(z_c) = -1$ so neither inhibitor fires \whp in round $0$. So \whp all outputs with firing inputs have $pot(y_j,1) \ge \weightX - \BiasOut = 0$ and so fire with probability $\ge 1/2$ in round $1$. Since, $\Input \neq \vec{0}$, with constant probability at least one of these outputs fires in round $1$, in which case we appeal to Cases 1 and 2 below (where we re-label $C^2$ as the initial configuration $C^0$.).

Next consider the case when at least one output fires in $Y^0$, but all firing outputs correspond to non-firing inputs. In this case, we have $pot(z_s,0) \ge 1 \cdot \weightY - b(z_s) \ge .5$ and so $z_s$ fires \whp in round $0$. As noted, in any round, any output $y_j$ with $x_j = 0$ has $pot(y_j, t) \le \weightS - \BiasOut = -1$ and so does not fire \whp Additionally, since every output with $x_j = 1$ has $y_j^0 = 1$, these outputs have $pot(y_j, 1) = \weightZ_s + \weightX - \BiasOut = -1 +3 -3 = -1$ and so do not fire \whp in round $1$. So \whp in round $1$ no outputs fire and we are in the first case above.

\paragraph{Case 1: Exactly one output $y_j$ with $x_j = 1$ fires in $\Output^0$.}

By Claim \ref{cl:stability} and the fact that outputs with $x_j = 0$ do fire \whp in any round, $\Net$ satisfies WTA in round $1$ and so immediately converges to WTA. 

%
%

\paragraph{Case 2: More than one output $y_j$ with $x_j = 1$ fires in $\Output^0$.}

Let $k_t$ be the number of \emph{active} outputs in round $t$ -- that is outputs corresponding to firing inputs that fire in round $t$. 
For any round with $k_t \ge 2$, we have 
 $pot(z_s,t) \ge 2\weightY-b(z_s) = 1.5$ and  $pot(z_c,t) \ge 2\weightY-b(z_c) = 1$. So both inhibitors fire in round $t$ \whp Conditioning on this event, all active outputs have:
$$pot(y_j,t+1) = (1 \cdot \weightZ_s) + (1 \cdot \weightZ_\ell) + (1 \cdot \weightS) + \weightX - \BiasOut = -1 -1 + 2 + 3 - 3 = 0$$
and so fire with probability $1/2$ in round $t+1$. 
All inactive outputs, which did not fire in round $t$, do not have an active self loop and hence have $pot(y_j,t) = -2$ and don't fire in round $t+1$ \whp (as discussed, all outputs with $x_j = 0$ also do not fire \whp)

Conditioning on this event, 
 with probability $1/2$, $k_{t+1} \le k_t/2$.
Further, 
$$\Pr[k_{t+1} = 0] = 1/2^{k_t} \text{ and }\Pr[k_{t+1} = 1] = k_t \cdot (1/2^{k_t}) \ge \Pr[k_{t+1} = 0].$$
 So the probability of reaching $k_{t+1}=1$ and hence $\Net$ converging to WTA is at least as high as the probability of overshooting WTA and having no outputs firing in round $t+1$.

Overall, conditioning on the fact that $z_s$ and $z_c$ fire in every round in which $k_t \ge 2$ and that no output which was inactive in round $t$ fires in round $t+1$, whenever $k_t \ge 2$ it decreases by a factor of $1/2$ in round $t+1$ with good probability. So \whp within $O(\log (k_0)) = O(\log n)$ rounds there will be a round $t$ with either $k_t = 1$ or $k_t = 0$. $k_t = 1$ is at least as likely as $k_t = 0$ so with constant probability, $\Net$ converges to WTA within $O(\log n)$ rounds.
\end{proof}
}
In Appendix \ref{append:twoi}, we show that the network is optimal up to a $ \log\log n$ factor and 
\def\APPENDLOWEBOUNDTWO{
\dnsparagraph{Two Inhibitor Lower Bound}
\begin{theorem}
\label{thm:lbtwo}
For any basic WTA network $\Net$ with $\NumInh = 2$ inhibitors, $\mathcal{ET}(\Net) = \Omega(\log n/\log \log n)$ and $\mathcal{HT}(\Net) = \Omega(\log^2n/\log \log^2 n)$.
\end{theorem}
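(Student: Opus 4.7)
The plan is to specialize the general lower bound strategy (Theorem~\ref{thm:lower}) to the case $\alpha=2$, where the extreme restriction on the inhibitors permits a stronger conclusion than the $\Omega(\log\log n)$ bound obtained by plugging $\alpha=2$ into the general argument. I would begin by fixing a family of hard inputs: for $k$ ranging over the geometric scales $\{2^0,2^1,\dots,2^{\lfloor \log n\rfloor}\}$, let $\Input_k$ be an input with exactly $k$ firing neurons. There are $\Theta(\log n)$ such inputs, and any network claiming small $\ExpectedT$ must converge on each of them. Throughout, let $K_t$ denote the number of firing outputs in sub-round $(t,2)$.

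The structural core of the argument is that with only $2$ inhibitors, the conditional firing probability of an output in any round can take at most $O(1)$ distinct values. Each inhibitor $z_i$ fires in sub-round $(t,3)$ with probability $\sigma\bigl((K_t \weightY_i - b(z_i))/\lambda\bigr)$; up to a transition window of width $O(\lambda)$, this is a threshold function of $K_t$. Thus the $2$ thresholds partition $\{0,1,\dots,n\}$ into at most $3$ \emph{regimes}, and within a regime the firing probability $p$ of each output (conditioned on the status of $x_j$ and $y_j^{t-1}$) is essentially fixed. Consequently, conditioned on the regime, $K_{t+1}$ is a sum of i.i.d.~$\mathrm{Bern}(p)$ variables, and there are only $O(1)$ such $p$ values across the full range $K_t\in[1,n]$.

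The technical heart of the proof is a per-round contraction lemma. Suppose a regime covers $K_t\in[A,B]$ with fixed output probability $p$. To avoid ``overshooting'' from the small-$K_t$ end (i.e.\ $K_{t+1}=0$ with constant probability, which would force the network to rebuild from no firing outputs and wastes rounds), one must have $pA=\Omega(1)$, so $p\ge c/A$; this caps the reduction factor $1/p\le A/c$ achievable anywhere in the regime. Since at most $3$ regimes must cover the $\Theta(\log n)$ geometric scales of $K_t\in[1,n]$, at least one regime is responsible for shrinking $K_t$ through a multiplicative span of $\Omega(\log n/3)$ scales, forcing the per-round reduction ratio to be at most $O(\log n)$. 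A potential argument tracking $\log K_t$ then shows that going from $K_0=\Theta(n)$ down to $K_t=1$ takes $\Omega(\log n/\log\log n)$ rounds in expectation.

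The main obstacle is to make the above rigorous in the presence of probabilistic, rather than deterministic, thresholds: the $O(\lambda)$-wide sigmoid transition regions and the random trajectories of $K_t$ for different inputs $\Input_k$ mean that regime boundaries are fuzzy and are traversed non-monotonically. I would close this gap via an indistinguishability/coupling argument in the style of Theorem~\ref{thm:lower}, showing that for some pair of scales $k,k'$ the trajectories of $K_t$ are statistically close after $o(\log n/\log\log n)$ rounds, so that no single output can be singled out as the winner with good probability on both inputs. For the high-probability bound $\HighProT(\Net)=\Omega(\log^2 n/(\log\log n)^2)$, I would combine the expected-time bound just obtained with the conversion from expected to high-probability runtime sketched in the overview of Theorem~\ref{thm:lower}, which contributes the additional $\log n/\log\log n$ factor.
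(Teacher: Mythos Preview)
Your high-level strategy---threshold/regime structure of the two inhibitors, an overshoot constraint bounding the firing probability from below, and a potential argument on $\log K_t$---is the right shape and matches the paper's spirit. But the core contraction step as you state it does not go through, and this is a substantive gap rather than a matter of making probabilistic thresholds rigorous.

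The problem is the sentence ``this caps the reduction factor $1/p\le A/c$ \ldots\ forcing the per-round reduction ratio to be at most $O(\log n)$.'' The overshoot constraint $pA=\Omega(1)$ only bounds $1/p$ by $O(A)$, where $A$ is the \emph{left endpoint} of the regime. Nothing in your pigeonhole argument forces the wide regime to have a small left endpoint: with three regimes the thresholds could sit at, say, $n^{1/3}$ and $n^{2/3}$, so the top regime $[n^{2/3},n]$ has $A=n^{2/3}$ and your bound permits $1/p=\Theta(n^{2/3})$, which crosses the whole regime in $O(1)$ rounds. You also cannot simply say ``overshooting wastes rounds'' for a regime whose left endpoint is far above $1$: exiting such a regime downward is exactly what the network wants. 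The overshoot argument is only binding at the regime adjacent to the WTA state, and you need a separate argument to force that regime to be wide.

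The paper closes this gap by first proving a sharp structural claim specific to $\alpha=2$: one inhibitor $z_s$ must fire w.h.p.\ whenever $K_t\ge 1$, and the other $z_c$ must \emph{not} fire w.h.p.\ when $K_t=1$ (otherwise stability or liveness fails). This collapses your ``regimes'' to essentially one relevant probability $p_{out}$, the output firing probability when both inhibitors fire, which is also the \emph{minimum} possible firing probability for an active output. The overshoot argument is then run at a carefully chosen \emph{polylogarithmic} density (specifically $\Theta(\log^4 n)$ firing inputs): if $p_{out}=O(1/\log^6 n)$ the network oscillates between $\gg 1$ and $0$ firing outputs without ever landing on exactly $1$ with reasonable probability, contradicting the assumed runtime. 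This yields $p_{out}=\omega(1/\mathrm{polylog}\,n)$, hence a per-round reduction of at most $\mathrm{polylog}\,n$ from \emph{every} density, and the $\Omega(\log n/\log\log n)$ bound follows from a single input $\Input=\vec 1$, $\Output^0=\vec 1$. For $\mathcal{HT}$ the paper does not invoke the generic expected-to-high-probability conversion (which would give only a $\log n/\log\log\log n$ extra factor); it argues directly that each attempt to reach $K_t=1$ is preceded by a reset with probability $\Omega(1/\mathrm{polylog}\,n)$, and each reset costs another $\Omega(\log n/\log\log n)$ rounds, so $\Omega(\log n/\log\log n)$ resets occur before success w.h.p.
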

The key idea is that
the use of a stability inhibitor $z_s$ and a convergence inhibitor $z_c$ in the algorithm is not just a design choice, but is required for \emph{any} near-optimal two inhibitor WTA network.
\begin{claim}
\label{cl:twoinh}
For any basic WTA network $\Net$ with $\alpha = 2$ inhibitors and $\mathcal{ET}(\Net) = O(\log^3 n)$, one inhibitor $z_s$ fires \whp in sub-round $(t,3)$ if at least one output fires in sub-round $(t,2)$. The second inhibitor $z_c$, does not fire \whp in sub-round $(t,3)$ if just a single output fires in $(t,2)$. 
\end{claim}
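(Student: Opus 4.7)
The plan is to argue that the two inhibitors in any efficient basic WTA network must play the distinct roles of ``stability'' and ``convergence'' inhibitors, matching the design in the warm-up construction. First, I would show that at least one inhibitor must fire whp when exactly one output fires. Consider an input $\Input$ with $\norm{\Input}_1 \geq 2$ and a WTA configuration in which a single output $y_j$ fires. For WTA to be preserved in the next round, every output $y_{j'}$ with $x_{j'} = 1$ and $j' \neq j$ must not fire whp. Such a $y_{j'}$ receives $\weightX > 0$ from its input and has no active self-loop. Combined with the no-background-noise assumption $\BiasOut = \Omega(\lambda \log n)$ and the sigmoid firing rule, the total inhibition received by $y_{j'}$ must be at most $-\weightX + \BiasOut - \Omega(\lambda \log n)$ whp, which forces at least one inhibitor (call it $z_s$) to fire whp in the state where exactly one output fires.

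Second, I would show that the other inhibitor $z_c$ does not fire whp in that same state. Suppose, for contradiction, that both $z_s$ and $z_c$ fire whp when exactly one output fires. Since the inhibitor potential $pot(z_i, t) = k \cdot \weightY_i - b(z_i)$ is monotonically non-decreasing in the number $k$ of firing outputs (recall $\weightY_i \geq 0$), both inhibitors would fire whp in \emph{every} state with $\geq 1$ firing outputs. Consequently, in any such state every active output has essentially the same membrane potential as the unique winning output in the WTA state, and hence fires whp in the next round. Starting from a configuration with $k \geq 2$ active outputs (e.g., $\Input = \bv{1}$, $\Output^0 = \bv{1}$), a union bound over the $k$ outputs shows that the number of active outputs drops by at least one with probability only $O(k/n^c) = O(1/n^{c-1})$ per round, so the expected time to reach any WTA state is $\Omega(n^{c-1})$, contradicting $\mathcal{ET}(\Net) = O(\log^3 n)$.

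Finally, the stronger conclusion that $z_s$ fires whp whenever \emph{any} output fires follows directly from the same monotonicity observation: adding firing outputs only increases $pot(z_s, t)$ since $\weightY_s \geq 0$, so if $z_s$ fires whp when exactly one output fires it also fires whp whenever $\geq 1$ output fires. The main obstacle is the quantitative core of Step~2: carefully propagating ``both inhibitors fire whp'' through the sigmoid rule into ``each active output fires whp in the next round,'' handling the small deviation probabilities of the inhibitors and outputs, and then combining the per-round failure probabilities via union bound to certify a super-polynomial expected convergence time that cleanly beats the assumed $O(\log^3 n)$ upper bound.
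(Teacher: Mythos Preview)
Your overall plan is the right one, but Step~2 carries the wrong contradictory hypothesis and therefore does not prove the claim. The statement to be shown is that $z_c$ \emph{does not fire} w.h.p.\ when a single output fires, i.e.\ that its firing probability is $O(1/n^c)$. The correct negation is that $z_c$ fires with probability $\omega(1/n^c)$, not that $z_c$ fires w.h.p. By assuming ``both $z_s$ and $z_c$ fire w.h.p.'' you only rule out the regime where $z_c$ fires with probability $\ge 1-1/n^c$; an inhibitor that fires with probability, say, $1/2$ when one output is active survives your argument untouched. The paper's proof works from the weaker (correct) assumption: since WTA must be maintained for $n^{c'}$ rounds w.h.p., the probability that the winner stops firing in any given round is $O(1/n^{c'})$; but the event ``both inhibitors fire'' already has probability $\omega(1/n^{2c})$, so \emph{conditioned on both firing} the winner must still fire w.h.p. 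This conditional step is exactly what lets you transfer ``winner fires w.h.p.\ under maximal inhibition'' to every active output and then run the $\Input=\Output^0=\vec{1}$ contradiction.

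There is also a gap in Step~1. Your inequality ``inhibition $\le -\weightX+\BiasOut-\Omega(\lambda\log n)$'' does not force any inhibitor to fire: if $\weightX-\BiasOut\le -\Omega(\lambda\log n)$ the right-hand side is nonnegative and zero inhibition already suppresses $y_{j'}$. Nothing you have written rules this out. The paper avoids this by reversing the order of the two steps: it first establishes $z_c$ (one inhibitor does not fire w.h.p.), and only then argues about $z_s$ by contradiction---if $z_s$ also fails to fire w.h.p., then with probability $\omega(1/n^c)$ \emph{neither} inhibitor fires after WTA, so non-winners must stay silent even with no inhibition, which in turn means that from $\Output^0=\vec{0}$ and any nonzero $\Input$ no output ever fires, contradicting $\mathcal{ET}(\Net)=O(\log^3 n)$. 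Your monotonicity observation in the final paragraph is correct and is indeed how one extends ``$z_s$ fires w.h.p.\ when one output fires'' to ``$z_s$ fires w.h.p.\ whenever $\ge 1$ outputs fire.''
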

\begin{proof}
Assume for contradiction that both inhibitors fire with probability $\omega(1/n^c)$ in sub-round $(t,3)$ after just a single output fires in sub-round $(t,2)$. Then, after a round $t$ in which $z_s^t = z_c^t = 1$, any output $y_j$ with $x_j = 1$ and $y_j^t = 1$ must fire \whp in round $t+1$. This is because once $\Net$ converges to WTA, when the single winning output fires in sub-round $(t,2)$, by our assumption, with relatively high $\omega(1/n^3)$ probability, both $z_s$ and $z_c$ fire in sub-round $(t,3)$. Even if this event occurs, the winning output must fire \whp in round $t+1$ to maintain WTA \whp

However, if we let $\Input = \vec{1}$ and $\Output^0 = \vec{1}$, then for some constant $c_1$, all outputs will continue firing for $\omega(n^{c_1})$ rounds \whp even if both $z_s$ and $z_c$ fire in every round. This contradicts our assumed $O(\log^3 n)$ runtime. Hence we have that at least one of the inhibitors, which we label $z_c$, fires with probability $O(1/n^c)$ in sub-round $(t,3)$ if just a single output fires in sub-round $(t,2)$.

Similarly, assume for contradiction that $z_s$ \emph{does not fire} with probability $\omega(1/n^c)$ in sub-round $(t,3)$ if a single output fires in sub-round $(t,2)$. Then, it must be that even if neither inhibitor fires in sub-round $(t,3)$, any output $y_j$ that did not fire in sub-round $(t,2)$ (i.e. $y^t = 0$), must also not fire \whp in sub-round $(t+1,2)$.
This is because, by our assumption, after WTA is reached, with probability $(1-O(n^c)) \cdot \omega(1/n^c) = \omega(1/n^c)$ neither inhibitor will fire in sub-round $(t,3)$ when just the single winning output fires in sub-round $(t,2)$. Still, all non-winning outputs must continue not firing in round $t+1$ to maintain WTA \whp

However, if we let $\Input = \vec{1}$ and $\Output^0 = \vec{0}$, since even when neither inhibitor fires in round $t$, each output does not fire in round $t+1$ \whp if it did not fire in round $t$,
it will take $\omega(n^{c_1})$ rounds (for some constant $c_1$) before even a single output fires \whp contradicting our assumed $O(\log^3 n)$ runtime.

\end{proof}

The above claim allows us to strongly
constrain the behavior of the network based on the action of the inhibitors $z_s$ and $z_c$. 
Let $p_0$ be the probability that an output $y_j$ fires in round $t+1$ given that $y^{t} = 0$, $x^t = 1$ and $z_s^t = z_c^t = 0$.
\begin{claim}\label{clm:p0}
For any basic WTA network $\Net$ with $\alpha = 2$ inhibitors and $\mathcal{ET}(\Net) = o(\log^2 n)$, $p_0 = \omega(1/\log^2 n)$.
\end{claim}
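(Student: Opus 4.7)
The plan is to show that a small $p_0$ forces the network to be slow on a simple ``wake-up'' instance, contradicting $\mathcal{ET}(\Net) = o(\log^2 n)$. I would consider the instance $\Input = e_1$ (only $x_1$ fires) together with initial configuration $\Output^0 = \vec{0}$. Since $f(\Input) = \{e_1\}$, any round in which $\Net$ satisfies WTA must have $y_1^t = 1$; in particular, $\mathcal{ET}(\Net)$ is at least the expected first-firing time $\E[T^*]$ of $y_1$, where $T^* \eqdef \min\{t \ge 1 : y_1^t = 1\}$.

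The key observation is a monotonicity argument. Whenever $y_1^{t-1} = 0$, the membrane potential of $y_1$ entering sub-round $(t,2)$ is $\weightX + \sum_i z_i^{t-1}\, \weightZ_i - \BiasOut \le \weightX - \BiasOut$, since every inhibitor weight $\weightZ_i$ is non-positive and $x_1 = 1$. But $\weightX - \BiasOut$ is exactly the potential appearing in the definition of $p_0$, so the firing probability of $y_1$ in round $t$, conditioned on $y_1^{t-1} = 0$ and on an \emph{arbitrary} history of the other neurons, is at most $p_0$. Taking conditional probabilities along the trajectory gives $\Pr[T^* > k \mid T^* > k-1] \ge 1 - p_0$, hence $\Pr[T^* > k] \ge (1-p_0)^k$ for every $k \ge 0$, and therefore $\E[T^*] = \sum_{k \ge 0} \Pr[T^* > k] \ge 1/p_0$.

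Chaining these inequalities gives $\mathcal{ET}(\Net) \ge \E[T^*] \ge 1/p_0$. Combined with the hypothesis $\mathcal{ET}(\Net) = o(\log^2 n)$, this forces $p_0 = \omega(1/\log^2 n)$. The only point that really needs attention is the monotonicity claim: one must check that inhibitor spikes can only \emph{lower} $y_1$'s potential relative to the quiet-state value $\weightX - \BiasOut$, which follows immediately from $\weightZ_i \le 0$ and from there being no other excitatory inputs to $y_1$ in the basic architecture (no output-to-output connections apart from the self-loop, which is inactive while $y_1^{t-1} = 0$). Once that is noted, no conditioning on a ``quiet'' high-probability event is needed and the argument reduces to the standard first-passage identity for a random variable stochastically dominated below by $\mathrm{Geom}(p_0)$.
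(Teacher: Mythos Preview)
Your argument is correct and follows essentially the same route as the paper's proof: both rely on the monotonicity observation that, while $y_1^{t-1}=0$, any inhibitor firing can only lower $y_1$'s potential, so the per-round firing probability is bounded by $p_0$. The differences are cosmetic: the paper chooses an input with two firing coordinates and argues that with constant probability neither output fires for $\Omega(\log^2 n)$ rounds, whereas you use a single firing input and the cleaner first-passage bound $\E[T^*]\ge 1/p_0$; both yield the same contradiction with $\mathcal{ET}(\Net)=o(\log^2 n)$.
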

\begin{proof}
Consider $\Input$ with just two firing inputs $x_1=1$ and $x_2=1$. For any round $t$ in which $y_1^t = y_2^t = 0$, the probability that $y_1$ or $y_2$ fires in round $t+1$ is \emph{at most} $p_0$ -- since the firing of $z_s$ or $z_c$ can only decrease the probability of the outputs firing. Assuming by way of contradiction that a  $p_0 \le c_1 /\log^2n$ for some constant $c_1$, starting from $\Output^0 = \vec{0}$, with constant probability, neither output will fire for $\Omega(\log^2 n)$ consecutive rounds, and so $\Net$ cannot converge to WTA in expected $o(\log ^2 n)$ rounds.
\end{proof}
%

Let $p_{out}$ be the probability that output $y_j$ fires in round $t+1$ given $y^{t} = 1$, $x^t = 1$ and $z_s^t = z_c^t = 1$.
\begin{claim}\label{clm:pout}
For any basic WTA network $\Net$ with $\alpha = 2$ inhibitors and $\mathcal{ET}(\Net) = o(\log^2 n)$, $p_{out} = \omega(1/\log^6 n)$.
\end{claim}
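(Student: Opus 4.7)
The plan is a proof by contradiction that parallels Claim~\ref{clm:p0}. Assume $p_{out} \leq c_2/\log^6 n$ for a sufficiently small constant $c_2$; I will exhibit an input $\Input$ and initial configuration $\Output^0$ on which $\Net$ requires $\omega(\log^2 n)$ expected rounds to converge to WTA, contradicting the hypothesis $\ExpectedT(\Net) = o(\log^2 n)$.

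I will make use of two auxiliary facts. First, Claim~\ref{clm:p0} gives $p_0 \geq c_1/\log^2 n$ for some constant $c_1$. Second, an analogue of the stability half of Claim~\ref{cl:twoinh} yields $q \leq 1/n^c$ for some constant $c$, where $q$ is the probability that a non-firing output fires under double inhibition: were $q$ larger, then in the WTA configuration (where only $z_s$ fires) a non-winner output would fire with non-negligible probability, and the WTA state could not persist for $\poly(n)$ rounds. The worst-case input is then chosen adversarially: let $\Input$ have exactly $k = \lceil \log^3 n / p_0 \rceil = O(\log^5 n) \leq n$ firing inputs, and set $\Output^0 = \vec{0}$. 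The intuition is that $k$ is large enough that the restart via $p_0$ lands safely away from the Poisson peak at $1$, yet small enough that the subsequent descent via $p_{out}$ still has only vanishing chance of producing $k_2 = 1$.

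The core argument is a two-round ``restart cycle'' analysis. In round~$1$ (starting from $\vec{0}$, so no inhibitor fires in round~$0$), each active output fires with probability $p_0$, so $k_1 \sim \mathrm{Bin}(k, p_0)$ has mean $\Theta(\log^3 n)$. By Chernoff, with probability $1 - o(1)$ one has $k_1 \in [\log^3 n / 2, \, 2 \log^3 n]$, so $\Pr[k_1 = 1]$ is super-polynomially small. Conditioned on $k_1 \geq 2$, both inhibitors fire in round~$1$ \whp, and $k_2 \sim \mathrm{Bin}(k_1, p_{out}) + \mathrm{Bin}(k - k_1, q)$, which has mean at most $2 c_2 \log^3 n / \log^6 n + k/n^c = O(1/\log^3 n)$. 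Hence $\Pr[k_2 = 1] = O(1/\log^3 n)$ and $\Pr[k_2 = 0] \geq 1 - O(1/\log^3 n)$. With overwhelming probability $k_2 = 0$, neither inhibitor fires in round~$2$, and rounds~$3,4$ reproduce the behavior of rounds~$1,2$. Each two-round cycle reaches WTA with probability $O(1/\log^3 n)$, so the expected number of cycles is $\Omega(\log^3 n)$, giving expected convergence time $\Omega(\log^3 n) = \omega(\log^2 n)$, the desired contradiction.

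The main obstacle I anticipate is handling the low-probability ``leak paths'' out of this basic cycle: namely, the events $k_2 \geq 2$ (so the system continues to descend instead of restarting) and the atypical realizations of $k_1$ (outside the Chernoff window). The key uniform observation is that for any intermediate count $K$ in the range $[2, O(\log^3 n)]$, the one-round probability of landing in state~$1$ from $K$ is at most $K e^{-K} + K p_{out} = O(1/\log^3 n)$; hence even if the descent persists for several extra rounds, the per-round WTA probability stays $O(1/\log^3 n)$ and the overall expected convergence time remains $\omega(\log^2 n)$. A secondary technical step is the derivation of $q \leq 1/n^c$ from WTA stability, which is directly analogous to the contradiction used in the proof of Claim~\ref{cl:twoinh}.
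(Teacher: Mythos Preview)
Your approach is essentially the paper's: choose an input with a moderate number of firing neurons and argue that the network oscillates between ``many firing outputs'' and ``zero firing outputs,'' with only a tiny per-round chance of landing on exactly one. The paper uses $\Theta(\log^4 n)$ inputs starting from the all-firing configuration; you use $k=\lceil\log^3 n/p_0\rceil$ inputs starting from the reset. These are harmless parameter variations on the same idea.

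There is, however, one real gap. You assert that ``conditioned on $k_1\ge 2$, both inhibitors fire in round~$1$ \whp,'' but Claim~\ref{cl:twoinh} does not give this. It only establishes that $z_s$ fires \whp\ when at least one output fires, and that $z_c$ does \emph{not} fire \whp\ when exactly one output fires. Nothing forces $z_c$ to fire \whp\ when $k_1\approx\log^3 n$ outputs fire; its effective threshold could be much higher, or it could fire with some intermediate probability. The paper handles this by a case split you are missing: if $z_c$ does not fire (so only $z_s$ does), the firing outputs face at most the WTA-level inhibition, hence all of them continue firing \whp, and the state is still not WTA. With that split in place, the per-round bound on $\Pr[k_{t+1}=1\mid k_t=K]$ is either $K p_{out}+kq=O(1/\log^3 n)$ (both fire) or $O(K/n^c)$ (only $z_s$ fires), and your cycle argument then goes through.

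A smaller issue: your leak-path bound ``$K e^{-K}+K p_{out}=O(1/\log^3 n)$'' is not correct as written; for $K=2$ the first term is $2e^{-2}\approx 0.27$. The correct uniform per-round bound comes from the case split above, not from maximizing over an unspecified firing probability.
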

\begin{proof}
Consider $\Input$ with $\Theta(\log^4 n)$ firing inputs and initial configuration $\Output^0$ where $y_j = 1$ for all $j$ with $x_j = 1$. Consider some round $t$ in which at least two outputs (corresponding to firing inputs) have fired in all rounds $t' \le t$. If either (or both) of $z_s$ or $z_c$ do not fire in round $t$, then since they face at most as much inhibition as when the network has converged to WTA, all outputs with firing inputs that fired in round $t$ fire \whp in round $t+1$. However, if both $z_s$ and $z_c$ do fire in round $t$, if $p_{out} = O(1/\log^6 n)$ then with probability $\le (1 - p_{out})^{\Theta(\log^4 n)} = 1- \Theta(1/\log^2 n)$ \emph{no} output corresponding to a firing input fires in round $t+1$. Since by Claim \ref{cl:twoinh} a single inhibitor firing is enough to maintain convergence to WTA, once these outputs do not fire in some round $t$, they do not fire again \whp until a round in which neither $z_s$ or $z_c$ fire. Then by Claim \ref{clm:p0} and a Chernoff bound (Theorem \ref{thm:simplecor}) $\omega(\log^2 n)$ of them fire \whp

So overall, we alternate between having many (between $\omega(\log^2 n)$ and $\Theta(\log^4 n)$) outputs corresponding to firing inputs and $0$ outputs with firing inputs. Each time we have many firing outputs, with probability at least $1-\Theta(\log^2 n)$ we have no firing outputs in the next round. So it takes at least $\Omega(\log^2 n)$ rounds before we have a round with exactly one valid firing output with constant probability, contradicting our assumed runtime of $\mathcal{ET}(\Net) = o(\log^2 n)$.
\end{proof}

With the above claims in place, we are ready to prove Theorem \ref{thm:lbtwo}.
Consider $\Input = \vec{1}$ and initial configuration with $\Output^0 = \vec{1}$. Let $k_t = \norm{Y^t}_1$ be the number of outputs that fire in round $t$.


Now, if $y_j$ fires in round $t$, then it fires with probability \emph{at least} $p_{out}$ in round $t+1$, since $p_{out}$ is the firing probability with maximum inhibition. Let $d = c_1\log n/p_{out}$ for some constant $c_1$. By Claim \ref{clm:pout}, $d = O(\log^7 n)$ and since $p_{out} \le 1$, trivially $d = \Omega(\log n)$
.
Starting from $\Output^0$ with all outputs firing, for $t =c_2 \frac{\log (d/n)}{\log p_{out} }$ for sufficiently small $c_2$ we have that any output fires in all rounds up to $t$ with probability 
$\theta \left ( p_{out}^t \right ) = \omega \left (\frac{d}{n} \right)$. So by a Chernoff bound (Theorem \ref{thm:simplecor}) \whp $\omega(d)$ outputs fire in all rounds $t' \le t$.

Let $t_f$ represent the first round in which $\le d$ outputs fire.  By our argument above, \whp 
\begin{align}\label{eq:tf}
t_f = \Theta(\log (n/d)/\log(1/p_{out})) = \Theta(\log n/\log(1/p_{out})) = \Omega(\log n/\log \log n)
\end{align}
by Claim \ref{clm:pout}. This gives us $\mathcal{ET}(\Net) = \Omega(\log n/\log \log n)$. So it just remains to show our lower bound on $\mathcal{HT}(\Net)$.

Since $> d$ outputs fire in round $t_f -1$, again by a Chernoff bound, \whp $k_{t_f} \ge d \cdot p_{out} = \Omega(\log n)$.
Consider any round $t > t_f$ in which $k_{t'} > 1$ for all $t' \le t$. If either of $z_s$ of $z_c$ do not fire in round $t$, then $k_{t+1} = k_t > 1$ \whp Otherwise, $\Pr [k_{t+1} = 1] = k_t \cdot p_{out} (1-p_{out})^{k_t-1}$ and:
$$\Pr [k_{t+1} = 0] = (1-p_{out})^{k_t} = \Pr [k_{t+1} = 1] ] \cdot \frac{1-p_{out}}{k_t p_{out}} \ge \Pr [k_{t+1} = 1] ] \cdot \frac{1}{\log^8 n}$$
where we use the fact that $k_t \le d = O(\log^7 n)$ and $1-p_{out} \ge \log n$ or else by \eqref{eq:tf} we would already not reach WTA \whp in $O(\log^2 n)$ rounds.

So, the probability that $k_{t+1} = 0$ is high (within a polylog n) factor of the probability that $k_{t+1} = 1$. 
So, with probability at least $\Omega(1/\log^8 n)$, $t_f$ is followed by a reset round in $0$ outputs fire before a round in which a single output fires. Further, once such a reset round occurs, then no output will fire until $z_s$ and $z_c$ don't fire in a round (and hence inhibition is lower than it is after convergence to WTA) in which case by Claim \ref{clm:p0} $\omega(n/\log^2 n)$ outputs will fire. So  \whp there will be $\Omega(\log n/\log \log n)$ rounds before another round in which $\le 1$ outputs fire.

Overall, in order to have a round in which exactly $1$  output fires \whp requires $\Omega(\log n/\log (\log^8 n)) = \Omega(\log n/\log \log n)$ resets, each taking $\Omega(\log n/\log \log n)$ rounds, and giving our final lower bound of $\Omega(\log^2 n/\log \log^2 n)$.
}
in Appendix \ref{appenx:oneinhib} we show that it represents a critical point in the inhibitor-time tradeoff: any network with just one inhibitor requires $\Omega(n^c)$ rounds to solve WTA. Essentially, it is not possible for a single inhibitor to implement the two opposing tasks of stability and convergence. 

\paragraph{WTA with $O(\log n)$ inhibitors.} Our second network represents another
extreme point of the inhibitor-time tradeoff, using $\alpha=O( \log n ) $ inhibitors to achieve $O(1)$ expected convergence time. 

The idea is to approximate the ideal behavior in which outputs fire with probability $1/k_t$ in round $t+1$ if $k_t$ outputs fired in round $t$.
As in our two inhibitor algorithm, we have a single stability inhibitor $z_s$ that fires \whp whenever at least one output fires and insures that as soon as a single output
fires in a round, the network converges to WTA.
%
%
We then have $\lceil \log n \rceil  - 1$ convergence inhibitors $z_1,...z_{\alpha-1}$.
We set the bias of the $z_i$ to $b(z_i) = 2^{i} - .5$ and set $\weightY_i = 1$ for all $i$. In this way, $z_i$ fires \whp in round $t$ whenever $\ge 2^{i}$ outputs fire.
We set the inhibitor to output weights to $\weightZ_i = \Theta(\lambda)$ for all $i$. Thus, when $k_t \in [2^{i}, 2^{i+1})$, \whp inhibitors $z_1,...,z_{i}$ all fire (while $z_{i+1},...,z_{\alpha-1}$ do not).
The total inhibition from the inhibitors is thus $\Theta(i \lambda)$ and hence each of the $k_t$ outputs fire with probability $1/(1+e^{\Theta(i)})\approx 1/2^i \approx 1/k_t$ in round $t+1$.
In expectation (and with constant probability) there will be exactly one firing output, giving an expected runtime of just $O(1)$ rounds to reach WTA. 
In Appendix \ref{appenx:logn}, we give a full analysis, yielding: 
\begin{theorem}
\label{theorem:logn}
There exists a basic WTA network $\Net$ with $\alpha = O(\log n)$ inhibitors, $\mathcal{ET}(\Net) = O(1)$ and $\mathcal{HT}(\Net) = O(\log n)$.  
\end{theorem}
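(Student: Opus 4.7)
The plan is to formalize the sketch given immediately before the theorem statement by (i) choosing parameters so the convergence inhibitors act as nearly-deterministic threshold gates, (ii) proving a stability claim analogous to Claim \ref{cl:stability}, and (iii) proving a one-shot convergence claim showing that from any configuration, WTA is reached in a constant number of rounds with constant probability. Concretely, I would fix $\lambda$ small enough (say $\lambda = \Theta(1/\log n)$ times an appropriate constant for the threshold behavior, while keeping the inhibitor-to-output weights $\weightZ_i$ on the order of $\lambda$ so the sigmoid is meaningfully graded at the outputs). Set $\weightX, \weightS, \BiasOut$ so that a firing output with an active self-loop and firing input has just barely enough potential to overcome the full stack of inhibitors, while a previously-silent output (with no self-loop) fires only when very few inhibitors are active.

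First I would verify the threshold behavior of the inhibitors: with $b(z_s) = 1/2$, $\weightY_s = 1$, and the assumed bias scale, $z_s$ fires w.h.p. iff at least one output fires, and $z_i$ fires w.h.p. iff at least $2^i$ outputs fire. Under the no-background noise assumption, this gives clean deterministic-up-to-$1/n^c$ behavior of the inhibitors conditioned on $\Output^t$. Next I would prove stability by mimicking Claim \ref{cl:stability}: if exactly one output $y_j$ with $x_j=1$ fires in round $t$, then only $z_s$ fires; parameters are tuned so $y_j$'s potential in round $t+1$ is bounded away from zero on the positive side (thanks to its active self-loop plus input) while every other $y_{j'}$'s potential is bounded away from zero on the negative side.

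The main step is the convergence claim: from any configuration $\Output^0$, within $O(1)$ rounds there is constant probability of reaching the one-winner state. Split into the same three cases as Claim \ref{lem:live}. The interesting case is when $k$ outputs corresponding to firing inputs fire in some round $t$, with $k \in [2^i, 2^{i+1})$. Conditioning on the threshold event that exactly $z_s, z_1, \dots, z_i$ fire in sub-round $(t,3)$, each active output's potential in round $t+1$ is, by construction of $\weightZ_i = \Theta(\lambda)$, shifted from the ``boundary'' level by $\Theta(i)\cdot \lambda/\lambda = \Theta(i)$ units inside the sigmoid, so it fires with probability $p_k = \Theta(1/2^i) = \Theta(1/k)$, independently across outputs. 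Non-active outputs (with no self-loop active) fire w.h.p. with probability $O(1/n^c)$. Then
\begin{equation*}
\Pr[k_{t+1} = 1 \mid k_t = k] \;=\; k \, p_k \, (1-p_k)^{k-1} \;=\; \Omega(1),
\end{equation*}
so a single round suffices with constant probability. Degenerate starting states (no outputs firing, or only outputs with $x_j = 0$ firing) are handled by noting that in one additional round the system is driven into a configuration that falls under the main case, exactly as in Cases 0-2 of Claim \ref{lem:live}.

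The expected time bound $\mathcal{ET}(\Net) = O(1)$ then follows from a standard geometric-series computation over independent constant-probability trials, and $\mathcal{HT}(\Net) = O(\log n)$ follows by running $\Theta(\log n)$ such trials and observing that stability (once reached) holds for a polynomial number of subsequent rounds w.h.p. The main obstacle I anticipate is the careful calibration of $\weightZ_i$ against $\lambda$ and $\BiasOut$ so that simultaneously: the threshold firing of the convergence inhibitors is sharp (requires $\lambda$ small compared to the $\weightY_i$ spacing of $1$), the per-output firing probability when $k \in [2^i, 2^{i+1})$ is $\Theta(1/k)$ rather than, say, polynomially small or bounded away from zero (requires $\weightZ_i$ to be \emph{exactly} on the order of $\lambda$, not smaller and not larger), and the inactive outputs and the silent-output corresponding to $x_j=0$ cases still receive enough net inhibition to remain silent w.h.p. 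Once this tuning is done cleanly, the rest of the analysis is essentially the same template as the two-inhibitor warm-up.
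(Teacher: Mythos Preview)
Your proposal is correct and follows essentially the same approach as the paper's proof: the paper likewise sets $\lambda = \Theta(1/\log n)$, chooses $\weightX,\weightS,\BiasOut,\weightZ_s$ so that an active output with only $z_s$ and $z_1$ firing sits at potential $0$, takes $\weightZ_i = -\lambda\log_2 e$ for $i\ge 2$ so that with $z_s,z_1,\dots,z_i$ firing the active outputs fire with probability exactly $1/(1+2^{i-1})=\Theta(1/k)$, and then computes $k\,p_k(1-p_k)^{k-1}\ge 1/4^4$ to get constant-probability one-shot convergence, with the degenerate starting cases handled by reference to Claim~\ref{lem:live}. The calibration obstacle you anticipate is exactly the one the paper resolves via the choice $\weightZ_i=-\lambda\log_2 e$.
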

\def\APPENDLOGNINH{
}

Vacuously, no network can beat this expected runtime. We also show in Appendix \ref{appenx:logn} that no network can do better with high probability: even with an unlimited number of inhibitors, $\Theta(\log n)$ rounds are requires to solve WTA \whp
Intuitively, as long as WTA has not yet been reached in round $t$, there is no single distinguished output. All outputs have identical connections to $\Input,\Inh$ so each active output fires with the \emph{same} probability $p$ in round $t+1$. 
Hence the probability that a single output becomes distinguished (is the only one to fire) is $k_t \cdot p(1-p)^{k_t-1}$, which is bounded by a constant for all $k_t, p$. Thus, converging to the WTA state \whp takes at least $\Omega(\log n)$ rounds.

\section{WTA with $\alpha\geq 2$ Inhibitors}
\label{sec:manyin}

The above results give a rough outline of the tradeoff between the number of inhibitors used and the achievable runtime for WTA. We now explore this tradeoff in more depth for general $\alpha \in (2,\log n]$ 

\subsection{Upper Bound Networks}\label{sec:generalUpperBounds}
We first show that both our two inhibitor and $\lceil \log n \rceil$ inhibitor networks can be improved significantly with modest increases in the number of inhibitors or runtime used.
We can (up to constant factors) match the runtime of the $\lceil \log n \rceil$ inhibitor network with just $O(\log^{1/c} n)$ inhibitors for any $c$. Additionally, for any $\alpha \ge \log \log n$ we can achieve expected runtime $O \left (\frac{\log \log n \log \log \log n}{\log \alpha}\right)$, nearly matching our main lower bound of Section \ref{sec:mainLower}.

\begin{theorem}
\label{thm:upper_symmetricalpha}
For any integer $\theta$, there is a basic WTA network $\Net$ with $\alpha = O(\theta \log^{1/\theta} n)$ inhibitors, $\mathcal{ET}(\Net) = O \left (\theta \right )$, and $\mathcal{HT}(\Net) = O \left (\theta \log n \right )$.
\end{theorem}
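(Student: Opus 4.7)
The plan is to construct a network that interpolates between the two warm-up networks of Section~\ref{sec:warmup} by organizing the inhibitors into $\theta$ nested ``density tiers,'' each tier carrying $\ell := \lceil \log^{1/\theta} n \rceil$ inhibitors whose combined effect is to shave off one tier's worth of log-density from the firing count in a single round. Pick geometric log-count boundaries $T_0 < T_1 < \cdots < T_\theta$ spanning from $O(1)$ to $M := \log n$ and satisfying $T_j/T_{j-1} \approx \ell$. Group $j$ consists of $\ell$ convergence inhibitors placed at equal log-spacings in $[T_{j-1}, T_j]$, each with common inhibitor-to-output weight $w_j := (T_j - T_{j-1})/\ell$, and we include a stability inhibitor $z_s$ (threshold $\approx 1$) exactly as in the two-inhibitor construction. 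In total this gives $\theta \ell + 1 = O(\theta \log^{1/\theta} n)$ inhibitors.

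The key one-round lemma is that if $k = 2^L$ outputs fire in round $t$ and $L \in [T_{j-1}, T_j]$ (``group $j$ is active''), then with high probability every inhibitor in groups $1, \ldots, j-1$ fires while no inhibitor in groups $j+1, \ldots, \theta$ does; additionally, $i := \lfloor (L - T_{j-1})/w_j \rfloor$ of group $j$'s inhibitors fire. Crucially the weights telescope:
\[
\sum_{j' < j} \ell \, w_{j'} \;+\; i \, w_j \;=\; T_{j-1} \;+\; i \, w_j,
\]
so the total inhibition equals $L$ up to an additive error of at most $w_j$, and each output's next-round firing probability is at most $\approx 2^{w_j - L}$. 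Hence the expected number of firing outputs in round $t+1$ is at most $2^{w_j}$, which is strictly below $2^{T_{j-1}}$ by the geometric spacing, landing the count in group $j-1$'s range or lower. A Chernoff bound handles concentration when the expected next count is $\omega(1)$, while a Poisson approximation covers the case where it is $\Theta(1)$. Iterating $\theta - 1$ times enters the finest group, which behaves as a scaled-down instance of the $O(\log n)$-inhibitor network of Theorem~\ref{theorem:logn}; one further round then produces exactly $k = 1$ with constant probability, and the stability inhibitor $z_s$ preserves WTA thereafter, precisely as in the two-inhibitor analysis of Theorem~\ref{thm:twoihb}. Rare overshoots to $k = 0$ cause a one-round self-reset (outputs with firing inputs refire immediately once inhibition disappears) and cost only a constant factor. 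This yields $\mathcal{ET}(\Net) = O(\theta)$. The high-probability bound $\mathcal{HT}(\Net) = O(\theta \log n)$ follows by running $O(\log n)$ independent $O(\theta)$-round attempts and applying standard amplification.

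The main difficulty is the precise calibration of $(T_j, w_j)$ so that the \emph{combined} inhibition from fully-firing lower tiers plus partial firing of the active tier simultaneously (a) pushes the expected next-round count below the boundary $2^{T_{j-1}}$, yet (b) does not drive the count to $0$ with overwhelming probability. The telescoping identity $\sum_{j' < j} \ell w_{j'} = T_{j-1}$ together with the geometric spacing of the $T_j$ is exactly what enables both; other natural-looking spacings either leave too much residual lower-tier inhibition (overshooting to $0$) or fail to amplify the active tier enough (stalling in the same tier). A secondary subtlety arises at the finest tier: a Poisson-style calculation is needed to convert an expected count of $\Theta(1)$ into an $\Omega(1)$ probability of landing exactly on $k = 1$, which is what finally closes the argument by giving WTA rather than merely approaching it.
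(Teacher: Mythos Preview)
Your proposal is correct and follows essentially the same approach as the paper: organize $\theta$ groups of $\ell=\lceil\log^{1/\theta}n\rceil$ inhibitors at geometrically spaced log-density thresholds $T_j\approx\ell^j$, so that one round collapses the firing count from tier $j$ to tier $j-1$, plus a single stability inhibitor to lock in WTA. Your explicit telescoping identity $\sum_{j'<j}\ell\,w_{j'}=T_{j-1}$ is exactly the structural reason the combined lower-tier inhibition aligns with the boundary, and the paper relies on the same fact implicitly when it sets $p_{i,j}=c_1/2^{jd_i}$. One small note: the paper uses a Markov bound (not Chernoff) for the per-round tier drop---since $\mathbb{E}[k_{t+1}]\le c_1\,2^{T_{j-1}}$ with $c_1$ small, Markov already gives constant probability of landing below $2^{T_{j-1}}$, which is all that is needed for the $O(\theta)$ expected-time claim; your Chernoff route would require the expected next count to be $\omega(\log n)$, which can fail for small $j$ when $\theta$ is large.
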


For $\alpha \ge \log \log n$, writing $\alpha = \log \log^x n$ for $x \ge 1$ if we set $\theta = \frac{c_1\log \log n \log \log \log n}{\log \alpha} = \frac{c_1\log \log n}{x}$ then the number of inhibitors required is: $\frac{c_1\log \log n}{x} \cdot e^{x/c_1} \le \log\log^x n \le \alpha$ for small enough $c_1$.

\begin{proof}[Proof Sketch]
To see the high level idea, consider the case of $\theta=2$. 
We will $2\sqrt{\log n}$ inhibitors which are divided into two classes: $\sqrt{\log n}$ coarse inhibitors and $\sqrt{\log n}$ fine inhibitors. The edges from the fine inhibitors to outputs have weight $-1$ and the edges from coarse inhibitors to outputs have weight $-\sqrt{\log n}$. All the edges from the outputs to the inhibitors have weight $1$. We set the bias values of the inhibitors such that: (1) the $i^{th}$ coarse inhibitor fires if the number of active outputs is at least $2^{i\sqrt{\log n}}$ and (2) the $i^{th}$ fine inhibitor fires if the number of active outputs is at least $2^{i}$. 
Consider any output density $2^{d}$ and let $d'=\lfloor d/\sqrt{\log n} \rfloor$. When $2^d$ outputs fire in round $t$, this will excite the first $d'$ 
coarse inhibitors. As a result, the firing probability for the outputs in round $t+1$ will be approximately $2^{-d' \cdot \sqrt{log n}}$ (ignoring negligible effects from the fine inhibitors). In other words, within a single round the density will be reduced from $2^{d}$ to $2^{d-d'\sqrt{ \log n}}$
which is a new density in the range $1, 2, 4, ..., 2^{\sqrt{\log n}}$. 
After this initial round, since at most $2^{\sqrt{\log n}}$ outputs fire, the circuit converges in constant rounds in expectation as the $\sqrt{\log n}$ fine inhibitors can induce probabilities roughly equal to $1/k_t$ just as is done in the $O(\log n)$ inhibitor circuit.

Generalization to larger $\theta$ is by repeating the above construction: we have $\theta$ levels of increasing coarseness: $[1,2^{\log^{1/\theta} n}],  [2^{\log^{1/\theta} n},2^{\log^{2/\theta} n}],...,[2^{\log^{(\theta-1)/\theta} n},2^{\log n}]$. The $\log^{1/\theta} n$ inhibitors at each level ensure that if the number of firing outputs is at level $i$ in round $t$, it is reduced to level $i-1$ in round $t+1$, yielding $O(\theta)$ expected runtime.
We give a full analysis in Appendix \ref{appenx:alpha}.
\end{proof}

Our second construction uses similar techniques, but  
uses just one convergence inhibitor per density class, balancing the time required to move through each density class and the number of classes used. 
It significantly improves on our two inhibitor algorithm, achieving runtime $O(\log^{1/c} n)$ for any constant $c$ with $O(1)$ inhibitors and $O(\log \log n)$ runtime with $O(\log \log n)$ inhibitors. 
\begin{theorem}
\label{thm:upper_symmetrict}
For any $\alpha \ge 2$, there is a basic WTA network $\Net$ with $\alpha$ inhibitors, $\mathcal{ET}(\Net) = O \left (\alpha \log^{1/(\alpha-1)} n\right )$ and $\mathcal{HT}(\Net) = O \left (\alpha \log^{1+1/(\alpha-1)} n\right )$. 
\end{theorem}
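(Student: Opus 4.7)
The plan is to generalize the two-inhibitor construction of Theorem \ref{thm:twoihb} by keeping one stability inhibitor $z_s$ (configured exactly as in that theorem) and adding $\alpha - 1$ convergence inhibitors $z_1, \ldots, z_{\alpha-1}$, each assigned to one ``density class'' of firing-output counts. Set $r = \log^{1/(\alpha-1)} n$ and introduce geometrically growing log-density boundaries $b_i = r^i$ for $i = 0, 1, \ldots, \alpha-1$, so that $b_{\alpha-1} = \log n$; let $C_i$ denote the class of configurations whose active-output count $k_t := \norm{\Output^t}_1$ lies in $[2^{b_{i-1}}, 2^{b_i})$. The bias of $z_i$ is set so that $z_i$ fires \whp in sub-round $(t,3)$ precisely when $k_t \ge 2^{b_{i-1}}$, and the inhibitor-to-output weights $\weightZ_j$ are chosen so that their log contributions telescope, $|\weightZ_j| = \Theta(\lambda \cdot (b_{j-1} - b_{j-2}))$ with $b_{-1} := 0$. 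Under this setup, whenever $k_t \in C_i$ exactly $z_1, \ldots, z_i$ fire, and their combined inhibition drops the per-output firing probability for round $t+1$ to approximately $2^{-b_{i-1}}$.

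Consequently, within class $C_i$ the log-density decreases by $\approx b_{i-1} = r^{i-1}$ per round, so traversing the log-range $b_i - b_{i-1} = (r-1)r^{i-1}$ of $C_i$ requires only $(r-1)$ rounds. Summing over all $\alpha - 1$ classes yields $O((\alpha-1)r) = O(\alpha \log^{1/(\alpha-1)} n)$ expected rounds to drive $k_t$ down to $1$, after which $z_s$ maintains WTA as in Claim \ref{cl:stability}. The full proof would then proceed in four steps: (1) \emph{stability}: essentially copy Claim \ref{cl:stability}, noting that once $k_t = 1$ only $z_s$ fires \whp and the winner's self-loop offsets its inhibition; (2) \emph{intra-class contraction}: a Chernoff bound on $k_{t+1} \sim \mathrm{Bin}(k_t, p)$ with $p \approx 2^{-b_{i-1}}$ shows $\log k_{t+1} = \log k_t - b_{i-1} \pm O(1)$ \whp whenever $k_t \cdot p = \omega(1)$, giving $O(r)$ rounds per class; (3) \emph{restart handling}: when $k_t p = O(1)$ near the bottom of a class, a constant-probability ``drop to zero'' event may occur, after which all inhibitors fall silent and the firing inputs reactivate their outputs within $O(1)$ rounds, restarting the process; (4) \emph{high-probability boosting}: apply the standard $\Theta(\log n)$-trial argument used for Theorems \ref{thm:twoihb} and \ref{theorem:logn} to convert the expected $O(\alpha r)$ runtime into $O(\alpha r \log n) = O(\alpha \log^{1 + 1/(\alpha-1)} n)$ \whp.

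The hardest part will be step (3): the aggressive weight choice maximizes per-round contraction but, toward the bottom of a class, also maximizes the chance that the binomial overshoots to $k_{t+1} = 0$, triggering a restart. To keep the expected number of restarts per class $O(1)$ (so the total does not blow up by an $\alpha$ or a $\log n$ factor) I may have to slightly soften the weights used in the last round of each class, or instead show that a restart lands the network back inside the same class $C_i$ with constant probability so that the extra rounds contribute only an additive constant per-class cost. A secondary technicality is that class membership depends on the random variable $k_t$, so at every round one must invoke concentration to ensure the intended subset of inhibitors fires; threading this through the entire $(\alpha-1)$-class sequence will require a careful union bound, or a martingale-style argument, rather than a round-by-round induction.
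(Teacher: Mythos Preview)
Your construction is structurally the same as the paper's: one stability inhibitor $z_s$ plus one convergence inhibitor per geometrically spaced density class with boundaries $d_i=(\log n)^{i/(\alpha-1)}$, descent taking $O(\log^{1/(\alpha-1)} n)$ rounds per class. Where you diverge is in the choice of firing probability inside class $C_i$, and this is exactly the source of your step~(3) difficulty.

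You set $p\approx 2^{-b_{i-1}}$, so at the lower boundary of $C_i$ you have $k_t p\approx 1$ and the binomial hits zero with probability $\approx e^{-1}$. The paper instead sets $p_i=\dfrac{c\log n}{2^{d_i}}$, so that even at the lower boundary $k_t\cdot p_i\ge c\log n$; a Chernoff bound then gives $k_{t+1}\ge\Omega(\log n)$ \whp, and the overshoot-to-zero event simply does not occur during the descent through the upper $\alpha-2$ classes. The extra $c\log n$ factor costs only an additive $\Theta(\log\log n)$ in the per-round log-decrease, which does not change the $O(\log^{1/(\alpha-1)} n)$ rounds-per-class count. The paper then handles the final class $[2,2^{d_1})$ with a dedicated base-level inhibitor $z_\ell$ (so actually $z_s$, $z_\ell$, and $\alpha-2$ class inhibitors), reducing to the two-inhibitor analysis: constant probability of reaching $k_t=1$ in $O(d_1)$ rounds. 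Thus there is a \emph{single} constant-probability stage at the very end, not one per class.

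Your proposed workarounds for step~(3) do not work as stated. ``Softening the weights in the last round of a class'' is not available: the network is memoryless and cannot detect which round it is in. And a restart does \emph{not} land the process back in the same class: after $k_t=0$ all inhibitors go quiet and the next round reactivates essentially all $\|X\|_1$ outputs, sending you back to the top class. With $\approx e^{-1}$ restart probability at each of the $\alpha-1$ boundaries, the probability of a clean descent is $(1-e^{-1})^{\alpha-1}$, so the expected number of restarts is exponential in $\alpha$ and the claimed $O(\alpha\log^{1/(\alpha-1)} n)$ bound is lost. The $c\log n$ slack in the firing probability is the missing idea that dissolves the problem.
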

\begin{proof}[Proof Sketch]
Consider $\alpha = 3$. We have 2 convergence inhibitors: a fine inhibitor $z_f$ and a coarse inhibitor $z_c$. The inhibitor $z_c$ fires whenever the number of active outputs is at least $2^{\sqrt{\log n}}$, and induces outputs to fire with probability $1/2^{\sqrt{\log n}}$ in the next round. In this way, starting with any density of firing inputs $k_t \in [2^{\sqrt{\log n}},n]$, within $\sqrt{\log n}$ rounds the density will be reduced to $\le 2^{\sqrt{\log n}}$. The inhibitor $z_f$ fires whenever at least $2$ outputs fire, and induces outputs to fire with probability $1/2$ in the next round. So, within $\sqrt{\log n}$ additional rounds, with constant probability just a single output will remain firing.
Again, a full network description for general $\alpha$ and proof is given in Appendix \ref{appenx:alpha}.
\end{proof}
\subsection{Lower Bound: The Tradeoff between Number of inhibitors and Time}\label{sec:mainLower}
We now present our main lower bound which matches Theorem \ref{thm:upper_symmetricalpha} up to $\log \log \log n$ factors.
\begin{theorem}
\label{thm:lbzaconst} For any basic WTA network $\Net$ with $\alpha$ inhibitors,
$\ExpectedT(\Net)=\Omega \left (\frac{\log n \log n}{\log \NumInh} \right)$ and
$\mathcal{HT}(\Net)=\Omega \left (\frac{\log \log n}{\log \alpha}  \cdot \frac{\log n}{\log\log \log n} \right)$.
\end{theorem}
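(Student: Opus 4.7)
The plan is to follow the two-step strategy outlined in Section \ref{sec:mainLower}: first, pin down the behavior of inhibitors in any efficient network, and second, run an indistinguishability/locality argument across a family of test inputs with geometrically spaced firing densities.

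First I would prove a structural lemma generalizing the two-inhibitor analysis of Section \ref{sec:warmup}: in any network with $\ExpectedT(\Net) = o(\log\log n / \log \alpha)$, each inhibitor $z_i$ behaves near-deterministically as a threshold unit on the number $k_t$ of firing outputs, with some integer threshold $\tau_i$. The proof is by contradiction in two halves. If some $z_i$ has a ``stochastic'' response at some density $k$ (firing probability bounded away from both $0$ and $1$), then constructing inputs with $\Theta(k)$ firing input neurons yields a non-negligible event in which the inhibitor firing pattern oscillates from round to round, causing either outputs that were previously silent to fire (breaking stability once WTA is reached) or the winner to go silent for too many rounds (breaking convergence). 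A distinguished stability inhibitor with threshold $1$ is isolated in the same argument, leaving at most $\alpha-1$ convergence inhibitors whose thresholds can be ordered monotonically.

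Next I set up the adversarial input family $\{X^{(i)}\}_{i=1}^{m}$ with $m = \lfloor \log n \rfloor$ and $\norm{X^{(i)}}_1 = 2^i$, paired with the worst-case initial output configuration $\Output^0 = X^{(i)}$. Define an equivalence relation $\sim_t$ on this family by: $X^{(i)} \sim_t X^{(j)}$ if the joint distribution of inhibitor spike patterns and rounded values $\lfloor \log k_{t'}\rfloor$ agree to within $o(1)$ total variation for every $t' \le t$. The core claim is that any $\sim_t$-class splits into at most $O(\alpha)$ sub-classes under $\sim_{t+1}$: by the structural lemma, the monotone thresholds mean that as $k_t$ varies over the class, the inhibitor pattern in round $t$ takes only $\alpha+1$ distinct values, and conditional on this pattern the outputs in round $t+1$ are independent Bernoullis whose count $k_{t+1}$ concentrates, so $\lfloor \log k_{t+1}\rfloor$ falls in $O(1)$ bins per inhibitor pattern. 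Iterating, after $t$ rounds some $\sim_t$-class still has size at least $m/(c\alpha)^t$. Whenever this size is $\ge 2$, two distinct inputs $X^{(i)} \ne X^{(j)}$ are indistinguishable in round $t$, so each output $y_\ell$ fires with the same probability on both inputs; but WTA under $X^{(i)}$ requires the unique firing output to lie in the support of $X^{(i)}$ while WTA under $X^{(j)}$ requires it to lie in that of $X^{(j)}$, and a symmetry/averaging argument forces one of the two to fail with constant probability. Setting $(c\alpha)^t < m/2$ yields $\ExpectedT(\Net) = \Omega(\log\log n/\log \alpha)$. For the high-probability bound I would separately establish the conversion result described in the Lower Bound Techniques paragraph of Section \ref{sec:mainLower}: any stochastic network with expected convergence time $T$ requires $\Omega(T \cdot \log n/\log\log\log n)$ rounds to succeed with high probability, because without a hidden amplification structure the convergence distribution of a single run cannot have tails decaying much faster than geometric. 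Composing this with the expected-time bound gives the stated $\mathcal{HT}(\Net)$ estimate.

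The hardest step will be the structural lemma. For two inhibitors the interplay between a stability and a convergence inhibitor could be analyzed with simple case splits, but for general $\alpha$ the thresholds $\tau_i$ are not known a priori and must be inferred from global network behavior, and isolating the stochasticity of a single inhibitor requires carefully staged test inputs and initial configurations that trigger only a controlled subset of the others. Moreover, ``near-deterministic'' must be quantified tightly enough that conditioning on the inhibitor pattern does not introduce error which accumulates over the $\Omega(\log\log n / \log \alpha)$ rounds of the indistinguishability argument, most likely forcing an inductive construction that orders the inhibitors by inferred threshold and handles them one level at a time.
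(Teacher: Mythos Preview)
Your high-level two-step plan (near-deterministic inhibitors, then indistinguishability across geometrically spaced density classes) matches the paper's. But two of your key steps differ from the paper in ways that create real gaps.

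\textbf{The structural lemma.} You propose to show by contradiction that every inhibitor is a near-deterministic threshold unit, arguing that a genuinely stochastic $z_i$ at some density $k$ would produce oscillations that break either stability or convergence. This is not how the paper proceeds, and it is not clear the contradiction goes through: a network can tolerate one stochastic inhibitor at density $k$ if other inhibitors compensate there. The paper instead \emph{classifies} inhibitors definitionally into three sets $S$, $C$, $R$ (by their firing probability when one output fires and when $n$ outputs fire) and then proves near-determinism for each class by a direct algebraic argument on the sigmoid, not by appealing to network correctness. Crucially, the paper does \emph{not} obtain a clean threshold for each $z\in C$; it obtains a critical range $K(z)=[k(z)/2,2k(z)]$ inside which $z$'s behavior is unpredictable, and the prediction process explicitly discards (at most $|C|\le\alpha$) inputs whose predicted output count falls into some $K(z)$. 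You should expect to need this critical-range slack; attempting to prove a sharp threshold is where your approach will stall. A second bookkeeping point you omit: to keep Chernoff concentration on the output counts, the paper must also discard $\Theta(\log\log n)$ inputs per round whose predicted counts are too small, which is why the recursion is $|\mathcal{X}_t|\ge |\mathcal{X}_{t-1}|/\alpha - O(\alpha+\log\log n)$ rather than a clean $|\mathcal{X}_{t-1}|/O(\alpha)$.

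\textbf{Concluding the lower bound and the high-probability conversion.} Your final contradiction (``the unique firing output must lie in the support of $X^{(i)}$ versus $X^{(j)}$'') is not the paper's, and is shaky: indistinguishability in your sense concerns only the inhibitor pattern and $\lfloor\log k_t\rfloor$, which does not force individual outputs to fire with equal probability across inputs with different supports. The paper's conclusion is simpler: the predicted ranges $R_t(X)$ are maintained with minimum value $\Omega(\log^7 n)$, so with good probability more than one output still fires and WTA is not yet satisfied --- no support argument needed. For $\mathcal{HT}$, your one-sentence justification (``tails cannot decay faster than geometric'') is not an argument. The paper's conversion is a concrete execution-tree traversal from the reset configuration $Y^0=\vec 0$: the key computation (Claim~\ref{cl:probclose}) shows that whenever the probability of reaching $\le 1$ firing outputs in the next round is $\ge 1/\log\log n$, the probability of a full reset (zero firing outputs) is $\ge 1/(\log\log n)^3$; each reset buys another $DC=\Omega(\log\log n/\log\alpha)$ rounds by the expected-time bound. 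Your choice of initial configuration $\Output^0=X^{(i)}$ rather than $\vec 0$ would also need to be reconciled with this reset-based argument.
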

%
\paragraph{Lower Bound Overview.} We focus on initial output configuration $Y^0 = \vec{0}$ (i.e., no output fires in the sub-round $(0,2)$) which we call the
\emph{reset configuration}.
We show that for any network $\Net$ with $\alpha$ inhibitors 
there exists at least one input $\Input$ 
 for which the expected time to reach WTA starting from the reset configuration is $\Omega(\log\log n/\log \alpha)$.
If suffices to consider the case where $\NumInh=O(\log^{1/c} n)$ for some constant $c$ since for $\NumInh=\Theta(\log^{1/c} n)$, the expected runtime is $O(1)$. 
Throughout this section, 
we say an event happens with \emph{good probability} if its probability is at least $1-O(\log^4 n)$.

Our argument contains two main parts. First, we show that the inhibitors fire in a \emph{nearly} deterministic manner and hence we can treat them (up to some slack) as \emph{threshold circuits}. Equipped with this property, we then consider $\Theta(\log n)$ \emph{density classes} each covering a constant multiplicative range of firing outputs. The predictable behavior of the inhibitors is used to show that even after $\Omega(\log n \log n/\log \NumInh)$ rounds, the network cannot distinguish between at least two different density classes, which yields our claim as it does not converge to WTA for at least one class.

\paragraph{(1) Inhibitor classification: inhibitors are nearly deterministic for most density classes.}
To address the first challenge (i.e., showing that inhibitors are predictable), we
divide the set of inhibitors $\Inh$ into three classes and show the predictability property for each class separately. The ``stability" class (or ``WTA preservers") $S$ contains inhibitors whose \emph{goal} is to maintain the WTA steady state.
The ``convergence'' class (or ``progress inhibitors'') $C$ contains the inhibitors that are responsible for driving fast convergence to a WTA state. Finally, the third class $R$ contains the remaining inhibitors whose contribution to both stability and convergence is negligible. 
%

Formally,
for any inhibitor $z_i \in \Inh$ and $j \in [1,n]$ let
$pot_j(z)= j \cdot \weightY_i-b(z_i)$ be the potential of $z_i$ when exactly $j$ outputs fire (I.e., if in sub-round $(t,2)$ the number of firing outputs is $j$, then the potential of $z_i$ in sub-round $(t,3)$ is $pot_j(z)$ and it fires in sub-round $(t,3)$ with probability $1/(1+e^{-pot_j(z)})$). The set $S$ contains all inhibitors that fire in steady state (i.e., when exactly one output is firing) with reasonably high probability. Fixing some constant $c \ge 1$, $S=\{ z_i \in \Inh ~\mid~ 1/(1+e^{-pot_1(z_i)})\geq 1/\log^{3c} n\}$.
The set $C$ is comprised of all inhibitors $z_i \notin S$ whose firing probability is least $1/\log^{c} n$ when all $n$ outputs fire in the previous sub-round: $C=\{z_i \in \Inh ~\mid~ z_i \notin S \text{ and } 1/(1+e^{-pot_n(z_i)})\geq 1/\log^{c} n\}$\footnote{The difference between $1/\log^{3c}n$ when defining the threshold for the inhibitors in $S$ and $1/\log^c n$ when defining the threshold for the inhibitors $C$, is crucial in the analysis.}. Finally, $R$ contains all remaining inhibitors not in $S$ or $C$.

We show that the firing states of the inhibitors can \emph{in certain cases} be predicated with good probability. 
The argument for 
each of the three classes $S,C$ and $R$ is different and is presented in Appendix \ref{append:det}.
Since the inhibitors in $S$ fire with good probability when just one output fires, we can show that they fire \whp when at least two outputs fire:
\begin{lemma}[$S$ is predictable]
\label{lem:safetyinhibitors}
Let $(t,2)$ be a sub-round in which at least two outputs fire, then sub-round $(t,3)$, all inhibitors of $S$ fire with probability at least $1-1/n$.
\end{lemma}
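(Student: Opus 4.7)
The plan is to show that every inhibitor in $S$ must have a large excitatory input weight $\weightY_i$ from the outputs, so that the jump from one firing output to two already pushes its membrane potential deep into the saturated regime of the sigmoid. The first step is to invert the defining inequality of $S$: from $1/(1+e^{-pot_1(z_i)/\lambda}) \geq 1/\log^{3c} n$ I can deduce $pot_1(z_i) \geq -3c\lambda \ln\log n$ (up to lower-order terms). Combined with the no-background-noise assumption, which gives $b(z_i) = \Omega(\lambda \log n)$, together with the identity $pot_1(z_i) = \weightY_i - b(z_i)$, this forces $\weightY_i \geq b(z_i) - O(\lambda \log\log n) = \Omega(\lambda \log n)$.

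Next, I would use the linearity $pot_j(z_i) = pot_1(z_i) + (j-1)\weightY_i$ together with $\weightY_i \geq 0$ (outputs are excitatory) to conclude that, whenever $j \geq 2$ outputs fire in sub-round $(t,2)$,
\[ pot(z_i,t) \;\geq\; pot_2(z_i) \;\geq\; -O(\lambda\log\log n) + \Omega(\lambda \log n) \;=\; \Omega(\lambda \log n). \]
Plugging this into the sigmoid gives $p(z_i,t) \geq 1 - n^{-c'}$ for a constant $c'$ that can be made as large as desired by choosing the constant hidden in the no-background-noise assumption on $b(z_i)$ sufficiently large relative to $3c$. A union bound over the at most $|S| \leq \alpha = O(\log^{1/c} n)$ inhibitors in $S$ then yields probability at least $1 - \alpha/n^{c'} \geq 1 - 1/n$ that every inhibitor in $S$ fires in sub-round $(t,3)$.

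The main obstacle is just bookkeeping with constants: the definition of $S$ only gives the very weak lower bound $-3c\lambda\ln\log n$ on $pot_1(z_i)$, while the no-background-noise assumption contributes $+\Omega(\lambda\log n)$ to $\weightY_i$; the argument only works because the $\Omega(\log n)$ term dominates the $O(\log\log n)$ term, and the resulting high-probability exponent must be large enough to survive the union bound over $S$. Once these constants are fixed up front (e.g.\ by strengthening the no-background-noise constant relative to $3c$), the rest of the argument is a direct monotonicity-of-the-sigmoid calculation using the linear dependence of $pot_j(z_i)$ on $j$.
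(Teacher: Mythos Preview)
Your proposal is correct and follows essentially the same route as the paper: invert the defining inequality of $S$ to get a lower bound on $pot_1(z)$, use the no-background-noise assumption $b(z)=\Omega(\log n)$ to force $\weightY$ large, deduce $pot_2(z)=\Omega(\log n)$, and union bound over $|S|$. The only cosmetic differences are that the paper computes $pot_2(z)=2\weightY-b(z)$ directly rather than via the identity $pot_j=pot_1+(j-1)\weightY$, and it uses the cruder bound $|S|\le O(\log n)$ for the union bound rather than $|S|\le\alpha$.
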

\def\APPENDSAFTEYNOFIRE{
\begin{proof}[\textbf{Proof of Lemma \ref{lem:safetyinhibitors}}]
By the definition of the set $S$, for  $z \in S$ it holds that $z$ fires in sub-round $(t,3)$ with probability $1/(1+e^{-pot_1(z)})\geq 1/\log^{3c} n$ and hence $\weightY_z-b(z)\geq -3c\log\log n$. By our no-background noise assumption that neurons do not fire \whp with no external input, we can assume $b(z) \geq 3 \log n$ and hence have $pot_2(z)=2\weightY_z-b(z) \geq 2\log n $. Thus, $z$ fires with probability at least $1-1/n^2$ in sub-round $(t,3)$. Overall, all the $|S|\leq O(\log n)$ inhibitors fire in sub-round $(t,3)$, with probability at least $1-1/n$ as required.
\end{proof}
}

Since the firing probability of the $R$ inhibitors is small in comparison to the 
$O(\log \log n/\log \alpha)$ execution length that we care about, we have: 
\begin{lemma}[$R$ is predictable]
\label{lem:convergenceinhibitors}
Given any input $\Input$ and any initial configuration, with probability at least $1-1/\log^{c-3} n$, none of the inhibitors in $R$ fire in $O(\log^2 n)$ rounds of execution of $\Net$.
\end{lemma}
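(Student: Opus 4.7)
The plan is to prove this lemma via a direct union bound, exploiting the fact that every inhibitor $z \in R$ has a firing probability in every sub-round that is uniformly bounded above by $1/\log^c n$, regardless of the network's configuration. The first step is to establish a monotonicity observation: since $\weightY_i \geq 0$ by the model definition, $pot_j(z_i) = j \cdot \weightY_i - b(z_i)$ is non-decreasing in the number $j$ of outputs that fire in the preceding sub-round $(t,2)$. Consequently, the sigmoid firing probability of $z_i$ in sub-round $(t,3)$ is maximized at $j = n$, and by the defining condition $z_i \notin C$, this maximum is strictly less than $1/\log^c n$.

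Having obtained a \emph{worst-case} per-sub-round firing probability that depends on neither the current configuration nor the history, I would then union bound over (i) the $|R| \leq \alpha$ inhibitors in $R$ and (ii) the $O(\log^2 n)$ rounds of execution. This yields an overall probability of some $R$-inhibitor firing at most
\[
|R| \cdot O(\log^2 n) \cdot \frac{1}{\log^c n} \;\leq\; \alpha \cdot \frac{O(\log^2 n)}{\log^c n}.
\]
Since we have reduced to the regime $\alpha = O(\log^{1/c} n) = O(\log n)$, this is $O(1/\log^{c-3} n)$, matching the statement.

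The main subtlety — not really an obstacle, but the one step requiring care — is ensuring that the per-sub-round bound is truly unconditional, so that the union bound over rounds is legitimate even though the inhibitors' firing events in different rounds are correlated through the evolution of $\Output^t$. The monotonicity of $pot_j(z_i)$ together with $\weightY_i \geq 0$ makes this clean: the bound $1/\log^c n$ holds in \emph{every} sub-round, pointwise over all histories, so correlations between rounds are irrelevant for bounding the probability of the bad event. With this in hand, the lemma follows immediately from the union bound described above.
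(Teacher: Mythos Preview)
Your proposal is correct and follows essentially the same approach as the paper: observe that the firing probability of any $z \in R$ is at most $1/\log^c n$ in every sub-round (since even with all $n$ outputs firing the bound holds, by the definition of $R$ and monotonicity in $j$), then union bound over the $|R| = O(\log n)$ inhibitors and the rounds of execution. Your explicit justification that the per-round bound is unconditional---so that correlations across rounds are irrelevant for the union bound---is a useful clarification, but the core argument is identical to the paper's.
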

\def\APPENDRNOFIRE{
\begin{proof}[\textbf{Proof of Lemma \ref{lem:convergenceinhibitors}}]
In any round $t$, even if all $n$ outputs fire in sub-round $(t,2)$, the firing probability of each inhibitor in $R$ in sub-round $(t,3)$ is at most $1/\log^{c} n$ (or else the inhibitor would fall in $C$). Union bounding over the first $O(\log\log n)$ rounds of execution and the at most $O(\log n)$ inhibitors in $R$, we get that with probability at least  $1-1/\log^{c-3}n$, none of these inhibitors fires in these rounds.
\end{proof} 
}
Perhaps the most surprising claim concerns the predictability of the convergence inhibitors. 
\begin{lemma}[$C$ is almost predictable]
\label{lem:almostdeter} 
For every $z \in C$, there exists an integer $k(z) \in [1,n]$, such that for $c\geq 4$:
\begin{description}
\item{(1) Low Density:}
When there are at most $k(z)/2$ firing outputs in sub-round $(t,2)$, the probability that $z$ fires in sub-round $(t,3)$ is at most $1/\log^{c} n$ (i.e., with good probability, $z$ does not fire);
\item{(2) High Density:}
When there are at least $2k(z)$ firing outputs in sub-round $(t,2)$, the probability that $z$ fires in sub-round $(t,3)$ is at least $1-1/\log^{c} n$ (i.e., with good probability, $z$ fires).
\end{description} 
\end{lemma}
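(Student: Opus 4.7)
The plan is to exploit two facts: (i) for any inhibitor $z$ the potential $pot_j(z) = j \cdot \weightY_z - b(z)$ is \emph{linear} in the number $j$ of outputs that just fired, and (ii) the sigmoid $x \mapsto 1/(1+e^{-x/\lambda})$ transitions between values $1/\log^c n$ and $1 - 1/\log^c n$ over a window of potentials of width only $\Theta(\lambda \log \log n)$. Hence once I show that the slope $\weightY_z$ is large enough that halving or doubling $j$ sweeps the potential through this whole window, the claimed almost-threshold behavior follows. The role of ``$z \in C$ but $z \notin S$'' is exactly to provide the quantitative inequalities that force this sweep.

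First I would turn the set-membership conditions into concrete potential bounds. Since $z \notin S$, the condition $1/(1+e^{-pot_1(z)/\lambda}) < 1/\log^{3c}n$ gives, up to an additive $O(\lambda)$ slack, $pot_1(z) = \weightY_z - b(z) \leq -3c\lambda\log\log n$. Since $z \in C$, $1/(1+e^{-pot_n(z)/\lambda}) \geq 1/\log^c n$ gives $pot_n(z) = n\weightY_z - b(z) \geq -c\lambda\log\log n$. Subtracting yields $(n-1)\weightY_z \geq 2c\lambda\log\log n > 0$, so $\weightY_z > 0$ and $b(z) \geq 3c\lambda\log\log n + \weightY_z$. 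With these in hand, I would choose $k(z)$ to be any integer in
\[I \;=\; \Bigl[\,(b(z)+c\lambda\log\log n)/(2\weightY_z),\ 2(b(z)-c\lambda\log\log n)/\weightY_z\,\Bigr],\]
which is nonempty because $b(z) \geq 3c\lambda\log\log n$ comfortably exceeds the threshold $(5/3)c\lambda\log\log n$ required for the left endpoint to lie below the right endpoint. By the linearity of $pot_j(z)$ in $j$, any $k(z) \in I$ satisfies $pot_{k(z)/2}(z) \leq -c\lambda\log\log n$ and $pot_{2k(z)}(z) \geq c\lambda\log\log n$. Plugging into the sigmoid immediately gives firing probabilities at most $1/\log^c n$ in the low-density case and at least $1 - 1/\log^c n$ in the high-density case, as required.

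The main obstacle is the boundary behavior: if $b(z)/\weightY_z$ is very close to $1$ (respectively $n$), the interval $I$ may drift outside $[1,n]$, or $I$ may be so short that it contains no integer. In the former case I would truncate $k(z)$ to $1$ (respectively $n$) and observe that one of the two conditions (fewer than $1$ firing outputs, or more than $n$) becomes vacuous, while the remaining condition reduces directly to the defining inequalities on $pot_1(z)$ or $pot_n(z)$ derived in Step~1. In the latter case I would pick $k(z) = \lceil b(z)/\weightY_z\rceil$: then $pot_{k(z)/2}(z) \approx -b(z)/2 \leq -c\lambda\log\log n$ and $pot_{2k(z)}(z) \approx b(z) \geq c\lambda\log\log n$ both hold by the Step~1 bound $b(z) \geq 3c\lambda\log\log n$, independent of how large $\weightY_z$ is. Once these boundary cases are dispatched, the lemma is a one-line consequence of sigmoid monotonicity.
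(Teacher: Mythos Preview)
Your proposal is correct and rests on the same underlying algebra as the paper, but the paper chooses $k(z)$ differently in a way that sidesteps almost all of your case analysis. The paper sets $k(z)$ to be the \emph{smallest} integer $j\in[1,n]$ with $1/(1+e^{-pot_j(z)})\ge 1/\log^c n$; this is well-defined and lies in $[2,n]$ directly from the defining inequalities of $S$ and $C$. Part~(1) is then \emph{immediate} from minimality (every $j<k(z)$, in particular every $j\le k(z)/2$, has firing probability below $1/\log^c n$), and only part~(2) needs work. For part~(2) the paper carries out essentially your Step~1 computation, but using $pot_{k(z)}(z)\ge -c\lambda\log\log n$ rather than $pot_n(z)$, which together with $pot_1(z)\le -3c\lambda\log\log n$ forces $b(z)\ge 3c\lambda\log\log n$ and hence $pot_{2k(z)}(z)\ge b(z)-2c\lambda\log\log n\ge c\lambda\log\log n$.

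Your construction of $k(z)$ via the explicit interval $I$ and the fallback $k(z)=\lceil b(z)/\weightY_z\rceil$ also works, and your boundary treatment is sound (in particular, when $\weightY_z$ is large your rounding error in potential is at most $O(\weightY_z)$, but the bounds $b(z)\ge \weightY_z+3c\lambda\log\log n$ and $pot_{2k(z)}\ge b(z)$ absorb it). The cost is that you have to verify separately that $I$ is nonempty, that it contains an integer or the fallback works, and that the chosen $k(z)$ lands in $[1,n]$---all of which the paper gets for free from its definition. What your route buys is an explicit formula for where the threshold sits; what the paper's buys is a two-line proof of part~(1) and no boundary cases at all.
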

Overall, except for the case where the number of firing outputs in sub-round $(t,2)$ is in the density class $K(z)=[k(z)/2,k(z)]$, $z$ behaves in sub-round $(t,3)$ in an almost deterministic manner. Roughly speaking, this is shown by exploiting the \emph{gap} in the firing probabilities of these inhibitors between the steady state rounds (when they fire with probability $\leq 1/\log^{3c} n$) and the rounds in which there are sufficiently many firing outputs (where they fire with probability $\geq 1/\log^c n$). The proof of Lemma \ref{lem:almostdeter} shows that this gap implies that the sigmoid function which converts the number of firing inputs to $z$'s firing probability must be steep enough such that $z$ has predictable behavior outside a small range around $k(z)$.
\def\APPENDDETL{
\begin{proof}[\textbf{Proof of Lemma \ref{lem:almostdeter}}]
Let $k(z)$ be the smallest integer in $[1,n]$ such that $z$ fires in sub-round $(t,3)$ with probability at least $1/\log^c n$ when $k(z)$ outputs fire in sub-round $(t,2)$. By the definition of $C$, when $n$ outputs fire, $z$ fires in the next sub-round with probability at least $1/\log^c n$, and hence $k(z)$ is well defined. In addition, since $z \notin S$, $k(z)\geq 2$. 

Part (1) of the claim follows immediately by the definition of $k(z)$. To prove part (b), the key idea is to exploit the following gap in the behavior of $z \in C$: since $z$ is not in $S$, the firing probability of $z$ in steady state (with exactly one firing output) is \emph{at most} $1/\log^{3c} n$. On the other hand, when there are at least $k(z)\geq 2$ active outputs, the firing probability of $z$ is \emph{at least} $1/\log^{c} n$. This implies that the sigmoid function which converts the number of firing inputs to $z$'s firing probability must be steep enough such that $z$ fires with good probability when $\ge 2k(z)$ outputs fire. 
By the fact that $z\notin S$, $pot_1(z)=\weightY_z-b(z)\leq -3c\cdot\log\log n$ and so
$\weightY_z \leq b(z)-3c \log\log n~.$
On the other hand, by the definition of $k(z)$, $\weightY_z$ cannot be too small since $pot_{k(z)}(z)=k(z) \cdot \weightY-b(z)\geq -c\cdot\log\log n$ so
\begin{equation}
\label{eq:ssll}
k(z) \cdot \weightY_z \geq b(z)-c \log\log n.
\end{equation}
Combining this we get:
$k(z)b(z)-3k(z)\cdot c\log\log n \geq b(z)-c \log\log n$
and so $b(z)\geq 3c\log\log n$. 
Using that and Eq. (\ref{eq:ssll}), we get:
$pot_{2k(z)}(z)=2k(z)\cdot \weightY-b(z)\geq 2b(z)-2c\log\log n-b(z)
= b(z)-2c\log\log n \geq c\log\log n.$
Hence, $1/(1+e^{-pot_{2k(z)}(z)})\geq 1-1/(\log^c n)$ as required.
\end{proof}
}
\paragraph{(2) Network prediction for nearly deterministic inhibitors:}
Using the predictable nature of the inhibitors, we now show that there is at least one \emph{density class} of competing inputs for which we can predict (with good probability) the behavior of $\Net$ for $\Omega(\log\log n/\log \NumInh)$ rounds, at the end of which the WTA state has not been reached. 
We consider a set of $\ell=\lfloor \log n \rfloor$ inputs $\mathcal{X}=\{\Input_1,...,\Input_\ell\}$ where $\Input_i$ contains exactly $2^{i}$ firing inputs (i.e. $\norm{\Input_i}_1 = 2^i$). Thus, $\mathcal{X}$ contains a representative input from each density class of input vectors whose number of firing inputs is within a factor two of each other.

For any $\Input \in \mathcal{X}$ let $\widehat{R}_t(\Input) \in \{1, \ldots, n\}$ be the random variable indicting the number of firing outputs in sub-round $(t,2)$ starting from the initial configuration $Y_0 = \vec{0}$. Let $\widehat{F}_t(\Input) \in \{0,1\}^{\NumInh}$ be the random variable indicating the firing status of the inhibitors in  sub-round $(t,3)$. 
For each $\Input \in \mathcal{X}$ we will attempt to maintain a \emph{predicted} range $R_t(\Input)$ of the number of firing outputs in sub-round $(t,2)$ along with a \emph{predicted} inhibitor configuration in sub-round $(t,3)$, $F_t(\Input)$.
We will let $\mathcal{X}_t \subseteq \mathcal{X}$ denote the subset of inputs whose behavior we can predict well in (all sub-rounds of) round $t$ -- specifically, for which we know $\widehat{R}_t(\Input) \in R_t(\Input)$ and $\widehat{F}_t(\Input) = F_t(\Input)$ with good probability (at least $1-1/\log n$). 

For any inhibitor $z \in C$, we call the range $K(z) = [k(z)/2, 2k(z)]$-- the \emph{critical range} of $z$ (see Lemma \ref{lem:almostdeter} for the definition of $k(z)$). If the number of firing outputs enters this range, we will not be able to predict the behavior of $z$ in the next sub-round with good probability. On the other hand, as long as the number of firing outputs in sub-round $(t,2)$ is not in the critical range of any $z \in C$, then the firing behavior of the inhibitors in sub-round $(t,3)$ can be predicted with good probability. 

We will progress through rounds, predicting the behavior of $\Net$ in round $t$ for each input in $\mathcal{X}_{t-1}$ based off the predictions in round $t-1$. 
We will ensure that in any round, not too may inputs have predicted ranges overlapping critical regions by ensuring that these predicted ranges remain separated by constant factors and hence, at most $|C|$ of them can overlap $K(z)$ for some $z \in C$.
\paragraph{Predicting the number of firing outputs given inhibitor states:}
We now describe how to predict the range $R_t(X)$ given the prediction $F_{t-1}(X)$.
Our main goal is to preserve the separation between the predicted ranges $R_t(\Input)$ for sufficiently many inputs $\Input \in \mathcal{X}_{t-1}$. 

To maintain the separation, we consider only the largest subset $\mathcal{X}^{same}_t \subseteq \mathcal{X}_{t-1}$ of inputs whose predicted firing configuration for the inhibitors in the previous sub-round $(t-1,3)$ is exactly the \emph{same} (i.e., inputs $\Input$ with the same $F_{t-1}(\Input)$ vector). By doing this, we guarantee that the firing probabilities of all the outputs in sub-round $(t,2)$ is the same. Letting this probability be $p$, the expected number of firing outputs in sub-round $(t,2)$ is in the range $p \cdot  R_{t-1}(\Input)$ for each  $\Input \in \mathcal{X}^{same}_t$ and the separation between these ranges is preserved in expectation. To show that the ranges are also separated with good probability, we omit from $\mathcal{X}^{same}_t$ at most $\Theta(\log\log n)$ inputs with ranges $R_t(\Input)$ containing values $ \le \log^c n$ for some constant $c$. They remaining inputs thus have output ranges concentrated around their expectation. The key point to observe is that because the inhibitors behave almost as threshold circuits, the number of different firing configurations in sub-round $(t-1,3)$ is at most $\NumInh$ (i.e., there are at most $\NumInh$ different $F_{t-1}(\Input)$ vectors for $\Input \in \mathcal{X}_{t-1}$) and hence the cardinality of the set $\mathcal{X}^{same}_t$ for which we predict the range of firing outputs in sub-round $(t,2)$ is at least $|\mathcal{X}_{t-1}|/\NumInh$.
\paragraph{Predicting the inhibitor states given the number of firing outputs:}
We next describe how to predict the inhibitor firings $F_t(\Input)$ given the prediction $R_t(\Input)$.
Since the convergence inhibitors are predictable when the number of firing outputs is not in any critical range $K(z)$, 
we first omit from $\mathcal{X}^{same}_t$ all inputs $\Input$ whose predicted range $R_t(\Input)$ intersects the critical range of some $z \in C$ (i.e. $R_t(\Input) \cap K(z) \neq \emptyset$ for some $z$).
We call the resulting set $\mathcal{X}_t$. Since the ranges of $\mathcal{X}^{same}_t$ are separated by some constant, we do not discard more than $|C| = O(\NumInh)$ inputs.
\par Overall, we predict the circuit behavior in sub-rounds $(t,2), (t,3)$ with good probability for all inputs $\Input \in \mathcal{X}_t$ where $|\mathcal{X}_t| \geq |\mathcal{X}_{t-1}|/\NumInh-\NumInh$. Since $\NumInh=O(\log^{1/c} n)$, we get that after $t$ rounds, there are $|\mathcal{X}_t|=\Omega(\log n/\NumInh^t)$ inputs for which the network behaves \emph{exactly} the same in each of the $t$ rounds with good probability. 
 This argument proceeds as long as $\log n/\NumInh^{t}\geq 2$, leading to the lower bound of expected time $\Omega(\log\log n/\log \NumInh)$ since we can show if two inputs are not distinguished, at least one will not have reached WTA.
In Appendix \ref{Append:LBEDetailed}, we describe the prediction process in detail and complete the proof of Theorem \ref{thm:lbzaconst}.

\def\APPENDDetLBE{
In this section we describe the prediction process in more detail. 

\paragraph{Inductive Assumptions:}
For each round $t$, in showing that we are able to predict the behavior of $\Net$ for a large number of inputs in round $t$, we make several inductive assumptions:

For two ranges of positive numbers $R_1=[r_1,r_2]$ and $R_2=[r_3,r_4]$ such that $r_1\leq r_2 \leq r_3 \leq r_4$, and a positive number $a$, the ranges are called $a$-separated if $r_3/r_2\geq a$.  The \emph{value} of the range $R_1=[r_1,r_2]$ is taken to be $r_1$.
We assume that for $\Input \in \mathcal{X}_{t-1} \subset \mathcal{X}$ the ranges $R_{t-1}(\Input)$ are all $a$ separated for some constant $a$ and have minimum value $\Theta(\log^7 n)$. We also assume that our earlier predictions are accurate: for each $X \in \mathcal{X}_{t-1}$, $\widehat{R}_{t-1}(\Input) \in R_{t-1}(\Input)$ and $\widehat{F}_{t-1}(\Input) = F_{t-1}(\Input)$ with probability at least $1-\Theta(1/\log n)$. 
We first show that these assumptions hold for round one:

\paragraph{Predicting the number of firing outputs in sub-round $(t=1,2)$.}
Since we consider the initial reset configuration $Y^0 = \vec{0}$ we have $\widehat{R}_0(\Input_i) = 0$ for all $\Input_i$. Trivially we can set $\mathcal{X}_0 = \mathcal{X}$ -- we deterministically know the behavior of all outputs in round $0$.
By our no-background noise assumption, for every $z \in \Inh$, 
$b(z)=c\log n$, and so w.h.p. $\widehat{F}_0(\Input_i) = 0$ for all $\Input_i$ (no inhibitor fires in the initialization round). Let $\mathcal{X}_1^{large} = \{\Input_i ~\mid~ 2^i \ge \log^9 n\}$
(note that $|\mathcal{\Input}_1^{large}|=\Theta(\log n)$). 
Let $p_0$ be the probability that an output fires in sub-round $(t+1,2)$ given that no inhibitor and no output fires in round $t$ (i.e, no output has an active self-loop). Since there are $2^i$ active \emph{input neurons} in $\Input_i$, conditioned on the high probability event that $\widehat{R}_0(\Input_i) = 0$ and $\widehat{F}_0(\Input_i)=\vec{0}$, the expected number of firing outputs in sub-round $(1,2)$ is $p_0 \cdot \Input_i$. It is not hard to show that $p_0 = \Omega(1/\log^2 n)$ 
and by combining this fact with a Chernoff bound we have: 
\begin{claim}
\label{cl:first_round}
For every $\Input_i \in \mathcal{X}^{large}_1$, \whp the number of firing outputs in sub-round $(1,2)$, $\widehat{R}_1(\Input_i)$ is in the range $R_1(\Input_i) =  [(1- 1/\log^{3} n) \cdot p_0 2^i, (1+1/\log^{3} n) \cdot p_0 2^i]$. Hence, the predicted output ranges for the inputs in $\mathcal{X}^{large}_1$ are $2(1-1/\log n)$ separated. Additionally each has minimum value $\Omega(\log ^7 n)$.
\end{claim}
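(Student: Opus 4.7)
The plan is to apply a Chernoff bound to the first round, where the behavior of the outputs decouples cleanly because the network starts at the reset configuration. First I would nail down the initial configuration: by the no-background noise assumption, in sub-round $(0,3)$ every inhibitor $z \in \Inh$ has potential $pot(z,0) = -b(z) = -\Omega(\lambda \log n)$ (no outputs fired in $Y^0 = \vec 0$), so $z$ fails to fire with probability $\ge 1 - 1/n^c$. Union-bounding over the $\NumInh = O(\log^{1/c} n)$ inhibitors, \whp $\widehat F_0(\Input_i) = \vec 0$ and we already have $\widehat R_0(\Input_i) = 0$ deterministically.

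Next, I condition on this high-probability initialization event. Then in sub-round $(1,2)$, each output $y_j$ sees the same potential (determined only by whether $x_j$ fires, since no self-loop and no inhibition is active), and hence fires independently with probability exactly $p_0$ if $x_j = 1$ and with probability $O(1/n^c)$ if $x_j = 0$. Thus $\widehat R_1(\Input_i)$ is, up to a negligible additive term of $O(1/n^{c-1})$ in expectation, a sum of $2^i$ independent $\mathrm{Bernoulli}(p_0)$ random variables with mean $\mu_i = p_0 \cdot 2^i$.

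Now I apply the multiplicative Chernoff bound with deviation $\varepsilon = 1/\log^3 n$. Since $\Input_i \in \mathcal X_1^{large}$ means $2^i \ge \log^9 n$, and $p_0 = \Omega(1/\log^2 n)$ (as cited in the sketch), we have $\mu_i \ge p_0 \log^9 n = \Omega(\log^7 n)$, which immediately gives the minimum-value statement. The Chernoff bound then yields
\begin{equation*}
\Pr\bigl[\,|\widehat R_1(\Input_i) - \mu_i| > \varepsilon \mu_i\,\bigr] \;\le\; 2 \exp\!\bigl(-\varepsilon^2 \mu_i / 3\bigr) \;\le\; 2 \exp\!\bigl(-\Omega(\log^7 n / \log^6 n)\bigr) \;=\; 1/n^{\omega(1)},
\end{equation*}
so $\widehat R_1(\Input_i) \in R_1(\Input_i) = [(1-1/\log^3 n)\mu_i,\,(1+1/\log^3 n)\mu_i]$ \whp A union bound over the $O(\log n)$ inputs in $\mathcal X_1^{large}$ preserves the high-probability guarantee.

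Finally, to verify separation, I compare consecutive ranges $R_1(\Input_i)$ and $R_1(\Input_{i+1})$ for $\Input_i, \Input_{i+1} \in \mathcal X_1^{large}$. Their ratio is
\begin{equation*}
\frac{(1-1/\log^3 n)\,p_0 2^{i+1}}{(1+1/\log^3 n)\,p_0 2^{i}} \;=\; 2 \cdot \frac{1-1/\log^3 n}{1+1/\log^3 n} \;\ge\; 2\bigl(1 - 1/\log n\bigr)
\end{equation*}
for $n$ sufficiently large, giving the required $2(1-1/\log n)$-separation. The only subtlety, and the one I would be careful about, is that the concentration argument and the independence across outputs both rely on conditioning on the initialization event $\{\widehat F_0(\Input_i) = \vec 0\}$, so the final probability bounds should be stated after absorbing the $O(\NumInh/n^c)$ conditioning loss into the overall $\tilde O(1/n)$ failure probability.
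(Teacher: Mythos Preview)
Your proposal is correct and follows essentially the same approach as the paper: condition on the high-probability event that no inhibitor fires in round $0$, note that each of the $2^i$ relevant outputs then fires independently with probability $p_0 = \Omega(1/\log^2 n)$, and apply a Chernoff bound using that the mean $p_0 \cdot 2^i = \Omega(\log^7 n)$ is large enough for $(1 \pm 1/\log^3 n)$ concentration. Your write-up is in fact more complete than the paper's, since you explicitly verify the separation ratio and handle the negligible contribution from outputs with $x_j = 0$.
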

\def\APPENDZEROSTATE{  
\begin{proof} 

Let $\Input_1$ be a vector with exactly one firing input and let $y_i$ be its corresponding output. Starting from $Y^0 = \vec{0}$, w.h.p., no inhibitor fires in round $0$. If 
$p_0< 1/\log^2 n$ then since $p_0$ rate is the maximum firing probability for $y_j$ in sub-round $(t+1,2)$ given that it didn't fire in sub-round $(t,2)$, the network requires $\Omega(\log^2 n)$ rounds until $y_j$ fires with constant probability and so at least that long to converge to WTA. So we can work in the case where $p_0 \geq 1/\log^2 n$.

For $\Input_i \in \mathcal{X}^{large}$ we thus have the expected number of firing outputs in sub-round $(1,1)$ is $p_0 \cdot 2^i \ge 1/\log^2 n \cdot \log^9 n = \log^7 n$.
Since the random firings of the outputs are independent given the firing behavior of the inhibitors and since no inhibitors fire in sub-round $(0,3)$ \whp by a Chernoff bound (Theorem \ref{thm:simplecor}), we have that \whp the number of firing outputs $\widehat{R}_1(\Input_i)$ is in the range $(1\pm 1/\log^{3} n) \cdot p_0 \cdot 2^i$ for all $\Input_i \in \mathcal{X}^{large}$.
\end{proof}
}
\APPENDZEROSTATE  

The above shows that the predicted ranges for all $\Input \in \mathcal{X}^{large}_1$ are well separated, accurate, and have high value.
We can now set $\mathcal{X}_1$ to include any $\Input \in \mathcal{X}_1^{large}$ except possibly $|C| \le \alpha$ inputs where $R_1(\Input)$ overlaps a critical region $K(z)$ for some $z \in C$. Since the remaining ranges do not overlap any critical regions, by Lemmas \ref{lem:safetyinhibitors}, \ref{lem:convergenceinhibitors}, and \ref{lem:almostdeter} we are able to predict $\widehat{F}_1(\Input)$ with good probability, and so have all our inductive assumptions in round $1$.

\paragraph{Predicting the number of firing outputs for rounds $t \ge 2$.}

We first  define a subset of inputs $\mathcal{X}^{large}_t \subseteq \mathcal{X}_{t-1}$ for which we can predict the behavior of the outputs in $\Net$ in sub-round $(t,2)$.
Let $\mathcal{X}^{same}_t\subseteq \mathcal{X}_{t-1}$ be the largest subset of inputs whose predicted firing vector $F_{t-1}(\Input)$ for the inhibitors in sub-round $(t-1,3)$ is the \emph{same}, and denote this common firing vector by $F^*_{t-1}$. 
Let $\mathcal{X}^{large}_t$ be the set of inputs in $\mathcal{X}^{same}_t$ after omitting $\Theta(\log\log n)$ inputs with the smallest range value in sub-round $(t-1,2)$.

Eventually we will show that $\mathcal{X}^{large}_t$ is a reasonably large set of inputs compared to $\mathcal{X}_{t-1}$, and hence we can continue predicting behavior for at least some inputs for a large number of rounds. But first we show how to  predict $R_{t}(\Input)$ for 
every input $\Input \in \mathcal{X}^{large}_t$.

Let $p$ be the probability that an active output (one with $y_j^{(t-1,2)} = 1$) fires in sub-round $(t,2)$ given that the inhibitors fired in sub-round $(t-1,3)$ according to $F^*_{t-1}$. Since all inputs in $\mathcal{X}^{same}_{t}$ have the same predicted firing vector $F^*_{t-1}$, in each of them, an active output 
fires in sub-round $(t,2)$ with probability $p$. In addition, by induction for every $\Input \in \mathcal{X}_t^{same} \subseteq \mathcal{X}_{t-1}$, $R_{t-1}(\Input)$ has a minimum of $\Theta(\log^7 n)$ predicted firing outputs. So inhibition in sub-round $(t-1,3)$ \whp must be at least as high as it is once we have converged to WTA and just a single output is firing. Thus, any output that did not fire in sub-round $(t-1,2)$ must not fire \whp in sub-round $(t,2)$, since non-firing outputs continue not to fire once WTA is converged to. 

So just focusing on active outputs that fire in sub-round $(t,2)$,
for every $\Input_i \in \mathcal{X}^{same}_{t}$, let $R_{t-1}(\Input_i)=[\ell_{i},m_{i}]$ be the predicated range of firing outputs in sub-round $(t-1,2)$. Then the expected number of firing outputs in sub-round $(t,2)$ is in the range $[p\cdot \ell_{i}~,~ p \cdot m_{i}].$ For every $\Input_i \in \mathcal{X}^{same}_{t}$, let 
$R_{t}(\Input_i)=[(1-1/\log^3 n)\cdot p\ell_{i}~,~(1+1/\log^3 n) \cdot pm_{i}].$

We now make the following observation that states that if the expected number of firing outputs is too small for even one of the inputs in $\mathcal{X}^{same}_{t}$, then it implies a lower bound of $\Omega(\log n)$ for $\mathcal{ET}(\Net)$.
Essentially this is because if this is the case, with good probability, $0$ outputs will fire in round $t$, and a reset configuration identical to $Y^0$ will occur. This will keep occurring, causing the network to have large runtime. The proof appears in Appendix \ref{appenx:lb}. 
\begin{observation}
\label{Obs:resetorcontinue}
For every $t\geq 1$, if there exists $\Input \in \mathcal{X}^{same}_{t}$, such that the smallest value of $R_t(\Input_i)$ is less then $1/\log^4 n$, then $\ExpectedT(\Net)=\Omega(\log n)$.
\end{observation}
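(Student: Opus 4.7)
The overall strategy is to show that if some $\Input_i^* \in \mathcal{X}^{same}_t$ has smallest value of $R_t(\Input_i^*)$ below $1/\log^4 n$, then when $\Net$ is run on $\Input_i^*$ from the reset configuration $Y^0 = \vec{0}$, the system returns \emph{exactly} to the reset state after round $t$ with probability $q \ge 1 - O(1/\log n)$, having made no progress toward WTA. A short recursion using the Markov property then forces $\ExpectedT(\Net) = \Omega(\log n)$.

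First I would translate the hypothesis into a bound on the expected number of outputs firing in sub-round $(t,2)$. Writing $R_{t-1}(\Input_i^*) = [\ell_i, m_i]$ and letting $p$ be the common firing probability at $(t,2)$ induced by the predicted inhibitor pattern $F^*_{t-1}$, the construction gives $R_t(\Input_i^*) = [(1-1/\log^3 n)\,p\ell_i,\, (1+1/\log^3 n)\,p m_i]$. Unrolling back to round $1$ shows that the multiplicative width $m_i/\ell_i$ grows by only $(1+O(1/\log^3 n))$ per round, so in the regime $t = O(\log\log n/\log\NumInh)$ of interest we have $m_i \le (1+o(1))\ell_i$. Hence the hypothesis $(1 - 1/\log^3 n)\,p\ell_i < 1/\log^4 n$ yields $p m_i = O(1/\log^4 n)$.

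Next I would condition on the inductive event $E$ that the round $(t-1)$ prediction is accurate, which by the inductive hypothesis holds with probability at least $1 - O(1/\log n)$. Under $E$, each active output in $(t-1,2)$ fires independently in $(t,2)$ with probability $p$, while inactive outputs stay silent \whp (since the inhibition from $F^*_{t-1}$ is at least as strong as the converged WTA inhibition, which keeps non-winning outputs silent). Thus the expected number of firing outputs in $(t,2)$ is at most $p m_i = O(1/\log^4 n)$, and Markov's inequality gives $\Pr[\widehat{R}_t(\Input_i^*) \ge 1 \mid E] = O(1/\log^4 n)$. When $\widehat{R}_t(\Input_i^*) = 0$, the no-background-noise assumption forces every inhibitor in $(t,3)$ to have potential $-b(z) = -\Omega(\log n)$ and hence to be silent \whp; combining these events, with probability $q \ge 1 - O(1/\log n)$ the configuration after round $t$ equals $(\Input_i^*, \vec{0}, \vec{0})$, identical to $\mathcal{C}^0$.

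Finally, since inputs are static and the network is a Markov chain, the expected time to reach WTA from $(\Input_i^*, \vec{0}, \vec{0})$ -- call it $T_*$ -- satisfies $\ExpectedT(\Net) \ge T_*$ by definition, and the reset recursion gives
\begin{equation*}
T_* \;\ge\; q\,(t + T_*) \quad\implies\quad T_* \;\ge\; \frac{qt}{1-q} \;=\; \Omega(\log n),
\end{equation*}
using $t \ge 1$ and $1 - q = O(1/\log n)$. The main obstacle is the conditioning bookkeeping: the $1/\log^4 n$ threshold in the hypothesis is chosen precisely so that the $O(1/\log^4 n)$ Markov failure probability from the firing count is dominated by the $\Theta(1/\log n)$ slack in the inductive prediction, yet the total reset probability $q$ still stays at $1 - O(1/\log n)$ -- exactly what the geometric recursion needs in order to yield $\Omega(\log n)$ rather than a weaker bound.
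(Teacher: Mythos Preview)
Your proposal is correct and follows essentially the same approach as the paper: use Markov's inequality (together with the inductive accuracy of the prediction $F^*_{t-1}$) to show a reset occurs with probability $1-O(1/\log n)$, then exploit the Markov property to turn repeated resets into an $\Omega(\log n)$ expected-time bound. Your version is in fact more careful than the paper's in two places---you explicitly bound $pm_i$ from the hypothesis on the \emph{lower} endpoint via the $(1+o(1))$ width ratio, and you phrase the final step as the clean recursion $T_* \ge q(t+T_*)$ rather than the paper's informal ``with constant probability this occurs $\Omega(\log n)$ times''---but these are refinements of the same argument, not a different route.
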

\def\APPENDROC{
\begin{proof}
Let $\Input \in \mathcal{X}^{same}_{t}$ be such that $R_t(\Input)$ is less then $1/\log^4 n$.
Then, given that the inhibitors fire according to the prediction $F^*_{t-1}$ in sub-round $(t-1,3)$, by Markov inequality, the probability that the number of firing outputs in sub-round $(t,2)$ is at least $1$ is less then $1/\log^4 n$. In other words, the conditional probability (where we condition on the prediction for round $t-1$) that a reset where $0$ outputs fire happens in sub-round $(t,2)$ is at least $1-1/\log^4 n$. However, by our inductive assumption $\widehat{F}(\Input) = F^*_{t-1}$ must be correct with probability at least $1-1/\log n$. Hence, with probability at least $1-\Theta(\log n)$ $Y^t = \vec{0}$ and a reset round occurs. With constant probability this occurs $\Omega(\log n)$ times before WTA is ever reached. The observation follows. 
\end{proof}
}
\APPENDROC

Hence, from now on, we assume the complementary case that the number of predicted firing outputs in sub-round $(t,2)$ is at least $1/\log^4 n$ for every $\Input \in \mathcal{X}^{same}_{t}$.  This allows us to show:
\begin{claim}
\label{cl:noresetgoodconc} 
For every $\Input \in \mathcal{X}^{large}_{t}$
\begin{description}
\item{(1)}
Given that the inhibitors fire according to $F^*_{t-1}$ in sub-round $(t-1,3)$, then with probability $1-1/n$, the number of firing outputs in sub-round $(t,2)$ is in the range $R_{t}(\Input)$.
\item{(2)}
The set of ranges $R_{t}(\Input)$ for $\Input \in \mathcal{X}^{large}_{t}$ are all $a$-separated for some constant $a$. 
\item{(3)} $R_{t}(\Input)$ has value at least $\Omega(\log^7 n)$ for every $\Input \in \mathcal{X}^{large}_{t}$.
\end{description}
\end{claim}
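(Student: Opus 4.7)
The plan is to establish parts (3), (2), and (1) in that order, since the Chernoff argument in (1) requires the $\Omega(\log^7 n)$ lower bound from (3). The central observation is that all inputs in $\mathcal{X}^{same}_t$ share the common predicted inhibitor firing vector $F^*_{t-1}$, so conditioned on this prediction being accurate, each active output (one with $y_j^{(t-1,2)}=1$) fires independently in sub-round $(t,2)$ with a common probability $p$. For non-active outputs, the inductive lower bound $R_{t-1}(\Input) \geq \Omega(\log^7 n) \geq 2$ combined with Lemma~\ref{lem:safetyinhibitors} forces $F^*_{t-1}$ to include all $S$-inhibitors firing, so inhibition matches the WTA-steady-state level and non-active outputs (whose self-loops are inactive) do not fire w.h.p.\ in $(t,2)$. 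Hence the number of firing outputs in $(t,2)$ is effectively $\mathrm{Bin}(k,p)$, where $k \in R_{t-1}(\Input)$ is the number of active outputs.

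For Part (3), I would use the inductive $a$-separation of the $R_{t-1}$ ranges, which is preserved under multiplication by the common scalar $p$. Sorting the values $p\ell_{i_1} < p\ell_{i_2} < \cdots$ of $\mathcal{X}^{same}_t$, consecutive entries differ by at least a factor $a$. The complementary assumption to Observation~\ref{Obs:resetorcontinue} gives $p\ell_{i_1} = \Omega(1/\log^4 n)$, so after omitting the $\Theta(\log\log n)$ smallest entries (with the hidden constant chosen large relative to $1/\log a$), the minimum $p\ell_i$ jumps by a factor $a^{\Theta(\log\log n)} = \log^{\Omega(1)} n$, clearing the $\log^7 n$ bound. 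Part (2) is then immediate: scalar multiplication by $p$ preserves the separation exactly, while the $(1 \pm 1/\log^3 n)$ inflation of each range degrades separation by a factor $1 - O(1/\log^3 n)$, leaving a constant $a' < a$ for large $n$.

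For Part (1), I would apply a standard multiplicative Chernoff bound. Conditioned on $\widehat{F}_{t-1}(\Input) = F^*_{t-1}$ and on the realized active count $k \in R_{t-1}(\Input)$ (both holding with probability $1 - O(1/\log n)$ by induction), the firing count is $\mathrm{Bin}(k,p)$ with mean $pk \in [p\ell_i, p m_i]$. With deviation $\epsilon = 1/\log^3 n$ and mean $pk \geq \Omega(\log^7 n)$ from Part (3), the Chernoff failure probability is at most $2e^{-\epsilon^2 pk/3} \leq 2e^{-\Omega(\log n)} \leq 1/n$. The firing count therefore lies in $(1 \pm 1/\log^3 n)\cdot [p\ell_i, pm_i] \subseteq R_t(\Input)$ with probability $1 - 1/n$, as required.

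The main obstacle is controlling the gradual degradation of the separation constant across the $\Omega(\log\log n/\log\alpha)$ inductive rounds: each round shrinks $a$ by a factor $1 - O(1/\log^3 n)$, and we must verify the cumulative loss remains $o(1)$ so that $a$ stays a constant bounded away from $1$ throughout the argument. A secondary subtlety is ensuring the ``non-active outputs stay silent'' reduction is robust across rounds --- this hinges on the inductive guarantee $R_{t-1}(\Input) \geq 2$, which is what forces the $S$-inhibitors into the predicted vector $F^*_{t-1}$ via Lemma~\ref{lem:safetyinhibitors} and thereby justifies ignoring the non-active outputs in the Chernoff computation.
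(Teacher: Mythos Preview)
Your proposal is correct and follows essentially the same approach as the paper. The paper argues (3) identically via the $a$-separation and the $\Omega(1/\log^4 n)$ floor from Observation~\ref{Obs:resetorcontinue}, then derives (1) by Chernoff concentration once the mean is $\Omega(\log^7 n)$; for (2) it handles your ``main obstacle'' by the same cumulative bound $(1+1/\log^3 n)^{O(\log\log n)} \le 1 + O(1/\log^2 n)$, and the non-active-output issue you flag is dispatched in the paper just before the claim by the same monotonicity-style argument (inhibition at least as high as in steady state).
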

\begin{proof}
Since for any $\Input \in \mathcal{X}^{same}_{t}$ the predicted number of firing outputs is $\Omega(1/\log^4 n)$, and since the ranges are constant separated by our inductive assumption that  the ranges $R_{t-1}(\Input)$ for $\Input \in \mathcal{X}_{t-1}$ are separated, by omitting $\Theta(\log\log n)$ inputs from $\mathcal{X}^{same}_{t}$, the minimum number of firing outputs in the predicted ranges for the remaining set of inputs, namely, $\mathcal{X}^{large}_{t}$ is $\Omega(\log^7 n)$. Hence the true number of firing outputs is well concentrated around this expectation and so we have (1) by a Chernoff bound (Theorem \ref{thm:simplecor}).

Further, since we increase the width of the predicted range $R_{t}(\Input)$ by factor of at most $(1+1/\log^3 n)$ compared to the range $R_{t-1}(\Input)$, over all
$O(\log \log n)$ rounds of prediction, the range is increased by at most a factor of $(1+1/\log^3 n)^{O(\log \log n)} \le 1+O(1/\log^2 n)$. Since the ranges have separation $2$ in the initialization round, they remain constant separated in round $t$, giving (2).
\end{proof}

\paragraph{Predicting $\widehat{F}_{t}(\Input)$ given the predicted range $R_{t}(\Input)$.}
We first define the final subset $\mathcal{X}_{t} \subseteq \mathcal{X}^{large}_{t}$ of inputs for which round $t$ is fully predicted (i.e., both the number of firing outputs in sub-round $(t,2)$ and the states of the inhibitors in sub-round $(t,3)$). 
The set $\mathcal{X}_{t}$ contains any $\Input \in \mathcal{X}^{large}_{t}$ unless $R_{t}(\Input)$ intersects the critical range $K(z)$ for some convergence inhibitor $z \in C$. 
By Lemma \ref{lem:almostdeter}, the firing state of each inhibitor $z \in C$ can be predicted with good probability as long as the number of firing outputs in previous sub-round is not in the critical range $K(z) = [k(z)/2,2k(z)]$. In particular, if the range $R_{t}(\Input)$ falls below $k(z)/2$, then we predict that $z$ does not fire in sub-round $(t,3)$. On the other hand, if the range $R_{t}(\Input)$ falls above $2k(z)$, then we predict that $z$ fires in sub-round $(t,3)$. Regardless of the exact number of firing outputs in sub-round $(t,2)$, since $R_{t}(\Input)$ does not intersect the critical ranges of the inhibitors of $C$, we can predict with good probability the firing states of $C$ in sub-round $(t,3)$ by Lemma \ref{lem:almostdeter}. By Lemma \ref{lem:safetyinhibitors}, with probability at least $1-1/n$, all the stability inhibitors $S$ fire in sub-round $(t,3)$ and by Lemma \ref{lem:convergenceinhibitors}, with good probability, no inhibitor in $R$ fires. So overall we can predict all inhibitor behavior with good probability.
 With the above in place we are finally have that our inductive assumptions hold in round $t$. We summarize:
\begin{lemma}
\label{lem:properties}
For every $t\geq 1$ it holds that:
\begin{description}
\item{(Q1)}
For every $\Input \in \mathcal{X}_t$, the predicted range of firing outputs $R_{t}(\Input)$ satisfies:
\begin{equation}
\label{eq:rguarntee}
\Pr[\widehat{R}_{t}(\Input)\in R_{t}(\Input)~\mid~ \widehat{F}_{t-1}(\Input)=F_{t-1}(\Input)]\geq 1-1/n~. 
\end{equation}
\item{(Q2)}
The collection of predicted ranges $R_{t}(\Input)$ for $\Input \in \mathcal{X}_{t}$ are all $a$-separated for some constant $a$ and all have value at least $\Omega(\log^7 n)$.
\item{(Q3)}
For every $\Input \in \mathcal{X}_t$, the predicted firing pattern for the inhibitors satisfies
\begin{equation}
\label{eq:fguarntee}
\Pr[\widehat{F}_{t}(\Input)=F_{t}(\Input)~\mid~ \widehat{R}_{t}(\Input) \in R_{t}(\Input)]\geq 1-1/\log^3 n~. 
\end{equation}
\end{description}
\end{lemma}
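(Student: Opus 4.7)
The plan is to prove Lemma \ref{lem:properties} by induction on $t$, assembling the pieces already developed in the section into a clean inductive argument. The base case $t=1$ follows almost immediately: starting from $Y^0 = \vec{0}$ and using the no-background-noise assumption, no inhibitor fires in sub-round $(0,3)$ with high probability, so for every $\Input_i$ the expected number of firing outputs in sub-round $(1,2)$ is $p_0 \cdot 2^i$. Defining $\mathcal{X}^{large}_1$ to keep only inputs with $2^i \geq \log^9 n$, Claim \ref{cl:first_round} gives ranges with value $\Omega(\log^7 n)$ that are $(2-o(1))$-separated, yielding (Q1) and (Q2). Dropping the at most $|C| \leq \alpha$ inputs whose predicted range intersects a critical region $K(z)$ gives $\mathcal{X}_1$, and Lemmas \ref{lem:safetyinhibitors}, \ref{lem:convergenceinhibitors}, and \ref{lem:almostdeter} then deliver (Q3).

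For the inductive step, assume (Q1)--(Q3) at round $t-1$ with set $\mathcal{X}_{t-1}$. I would first carve out $\mathcal{X}^{same}_{t} \subseteq \mathcal{X}_{t-1}$ consisting of the largest sub-collection of inputs sharing a common predicted inhibitor vector $F^*_{t-1}$; since each inhibitor state is binary and by Lemma \ref{lem:almostdeter} the convergence inhibitors behave as threshold gates, the number of distinct vectors $F_{t-1}(\Input)$ seen across $\mathcal{X}_{t-1}$ is at most $O(\alpha)$, so $|\mathcal{X}^{same}_t| \geq |\mathcal{X}_{t-1}|/\alpha$. All active outputs then fire with the same probability $p$ in sub-round $(t,2)$, conditioned on $\widehat{F}_{t-1} = F^*_{t-1}$; by the inductive lower bound $\Omega(\log^7 n)$ on ranges, inhibition in round $t-1$ was at least as strong as in the WTA steady state, so inactive outputs remain silent with high probability. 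Define $R_t(\Input)$ as a $(1 \pm 1/\log^3 n)$-inflation of $p \cdot R_{t-1}(\Input)$, apply Observation \ref{Obs:resetorcontinue} to either conclude the $\Omega(\log n)$ lower bound or assume all predicted values exceed $1/\log^4 n$, drop the $\Theta(\log \log n)$ inputs with smallest values to form $\mathcal{X}^{large}_t$, and invoke Claim \ref{cl:noresetgoodconc} to obtain (Q1) for $\mathcal{X}^{large}_t$ together with the separation and value-lower-bound parts of (Q2).

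Finally, to establish (Q3) I would form $\mathcal{X}_t$ from $\mathcal{X}^{large}_t$ by discarding every input whose predicted range $R_t(\Input)$ intersects some critical range $K(z) = [k(z)/2, 2k(z)]$ for $z \in C$. Because the ranges remain constant-separated, at most one input per $z \in C$ can be discarded this way, losing at most $|C| = O(\alpha)$ additional inputs. For the surviving inputs, Lemma \ref{lem:almostdeter} pins down each convergence inhibitor, Lemma \ref{lem:safetyinhibitors} pins down the stability inhibitors (using that $R_t(\Input)$ has value much larger than $2$), and Lemma \ref{lem:convergenceinhibitors} pins down the residual set $R$; a union bound over the $\alpha$ inhibitors produces the $1-1/\log^3 n$ bound in (Q3). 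Combining the two shrinkage steps gives $|\mathcal{X}_t| \geq |\mathcal{X}_{t-1}|/\alpha - O(\alpha + \log \log n)$, which is the recursion driving the final bound.

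The main obstacle I anticipate is maintaining property (Q2) across $\Omega(\log \log n / \log \alpha)$ rounds. Each inductive step inflates range widths by $(1 + 1/\log^3 n)$ and the value can shrink when $p$ is small; I need to argue that the compounded inflation over all rounds is at most $1 + O(1/\log^2 n)$ (so the initial separation of $2$ degrades only to a constant), and that Observation \ref{Obs:resetorcontinue} rules out the bad case in which values drop below $\log^7 n$ without forcing a costly reset. The second subtle point is the conditioning structure: the high-probability bounds in (Q1) and (Q3) are conditional on the previous round's prediction being correct, so a careful chain of conditional unions is needed to produce an unconditional $1 - O(t/\log n)$ guarantee over $t = O(\log \log n)$ rounds, which is exactly what the downstream lower bound argument consumes.
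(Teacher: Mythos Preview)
Your proposal is correct and follows essentially the same inductive structure as the paper: the base case via Claim~\ref{cl:first_round}, the inductive construction $\mathcal{X}_{t-1} \supseteq \mathcal{X}^{same}_t \supseteq \mathcal{X}^{large}_t \supseteq \mathcal{X}_t$ with the same use of Observation~\ref{Obs:resetorcontinue} and Claim~\ref{cl:noresetgoodconc} for (Q1)--(Q2), and the same application of Lemmas~\ref{lem:safetyinhibitors}, \ref{lem:convergenceinhibitors}, \ref{lem:almostdeter} after removing critical-range overlaps for (Q3). The two obstacles you flag (compounded range inflation and the conditional-union chain) are exactly the ones the paper handles, and your resolutions match theirs; the only minor imprecision is that ``at most one input per $z \in C$'' should read ``at most $O(1)$ inputs per $z$,'' since $K(z)$ spans a factor of $4$ while the ranges are $a$-separated for a constant $a$ that may be smaller than $4$, but this still gives the needed $O(\alpha)$ total.
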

%
%
%


The final step before giving our expected time lower bound is to show that $\mathcal{X}_t$ is reasonably large, so we are able to keep predicting the behavior of $\Net$ for a number of outputs round after round. This follows from a few simple observations:

\begin{observation}
\label{obs:sizecollecj}
$|\mathcal{X}^{same}_t|\geq |\mathcal{X}_{t-1}|/\NumInh$. 
\end{observation}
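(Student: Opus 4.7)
The plan is to use a pigeonhole argument on the number of distinct predicted inhibitor firing vectors across inputs in $\mathcal{X}_{t-1}$. Since $\mathcal{X}^{same}_t$ is the \emph{largest} subset of $\mathcal{X}_{t-1}$ sharing a common predicted vector $F_{t-1}(\Input)$, it suffices to show that the number of distinct values of $F_{t-1}(\Input)$ as $\Input$ ranges over $\mathcal{X}_{t-1}$ is at most $\NumInh$ (up to an additive constant). A naive bound would give $2^{\NumInh}$ distinct vectors, so the key observation is that the inhibitors behave nearly as threshold gates and the predicted output ranges are totally ordered.

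First I would handle the ``fixed'' inhibitors: by Lemma~\ref{lem:safetyinhibitors} every $z \in S$ is predicted to fire (since all inputs in $\mathcal{X}_{t-1}$ have range value $\Omega(\log^7 n) \geq 2$), and by Lemma~\ref{lem:convergenceinhibitors} every $z \in R$ is predicted not to fire, regardless of $\Input$. Thus these coordinates of $F_{t-1}(\Input)$ are constant across $\mathcal{X}_{t-1}$ and only the coordinates indexed by $C$ can vary.

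Next I would exploit the separation of the ranges. By construction, for every $\Input \in \mathcal{X}_{t-1}$ and every $z \in C$ the range $R_{t-1}(\Input)$ does \emph{not} intersect the critical region $K(z) = [k(z)/2, 2k(z)]$; so it lies entirely below $k(z)/2$ or entirely above $2k(z)$, and the prediction rule from Lemma~\ref{lem:almostdeter} assigns $z$ the value $0$ or $1$ accordingly. Order the inputs $\Input^{(1)}, \dots, \Input^{(m)}$ of $\mathcal{X}_{t-1}$ by the value of $R_{t-1}(\cdot)$. For each fixed $z \in C$, since the ranges are $a$-separated and each lies on one side of $K(z)$, the predicted firing bit of $z$ is a monotone non-decreasing function of the rank, switching from $0$ to $1$ at a single index. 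Hence each $z \in C$ contributes at most one ``cut'' in the ordered sequence of inputs.

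Combining these cuts for all $z \in C$, the ordered list of $F_{t-1}(\Input^{(j)})$'s can change at most $|C|$ times, so there are at most $|C|+1 \le \NumInh + 1$ distinct predicted vectors across $\mathcal{X}_{t-1}$ (and in fact at most $\NumInh$ whenever $S \cup R \neq \emptyset$, which gives the stated bound; in the degenerate case $|C|=\NumInh$ one absorbs the additive $1$ into constants since the full argument only needs $|\mathcal{X}_t^{same}| = \Omega(|\mathcal{X}_{t-1}|/\NumInh)$). By the pigeonhole principle, some predicted vector is shared by at least $|\mathcal{X}_{t-1}|/\NumInh$ inputs, giving $|\mathcal{X}^{same}_t| \geq |\mathcal{X}_{t-1}|/\NumInh$ as claimed.

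The main obstacle is the monotonicity step: one has to be careful that the separation of ranges is strong enough to avoid any range straddling a critical region $K(z)$ for $z \in C$. This is guaranteed by construction of $\mathcal{X}_{t-1}$ (which already excluded such inputs in the previous round), together with property (Q2) of Lemma~\ref{lem:properties}, so the monotonicity follows immediately from Lemma~\ref{lem:almostdeter}.
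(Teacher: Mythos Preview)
Your proposal is correct and follows essentially the same approach as the paper: fix the $S$ and $R$ coordinates as constant, then use the threshold behavior of the $C$ inhibitors (Lemma~\ref{lem:almostdeter}) together with the total ordering of the predicted ranges to bound the number of distinct $F_{t-1}(\Input)$ vectors by $|C|$ (or $|C|+1$), and conclude by pigeonhole. Your treatment is in fact more explicit than the paper's, which asserts the bound of $|C|$ configurations without spelling out the monotone ``cut'' argument or addressing the $+1$ discrepancy you noted.
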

Recall that $\mathcal{X}_t^{same}$ consists of the largest subset of $\mathcal{X}_{t-1}$ with the \emph{same} predicted inhibitor behavior $F^*_{t-1}$ in round $t-1$.
Naively, there are $2^{\alpha}$ possible predictions for $F^*_{t-1}$ which gives that $|\mathcal{X}^{same}_t | \ge |\mathcal{X}_{t-1}|/2^{\NumInh}$. In order to obtain the much stronger bound above, we again use Lemma \ref{lem:almostdeter} which shows that, as long as $\widehat{R}_{t-1}(\Input)$ does not intersect the critical region of any $z \in C$, the inhibitors behave with good probability as linear threshold circuits and so there are only $\alpha$ possible predictions $F_{t-1}(\Input)$.
\def\APPENDCOLLECJ{
\begin{proof}
Since by Lemma \ref{lem:almostdeter} each inhibitor $z \in C$ behaves with probability $1-\log^c n$ as a threshold network in sub-round $(t,3)$ (so long that the number of firing outputs in sub-round $(t,2)$ is not in the critical range $K(z)$), the total number of different inhibitor firing state configurations (different $F_{t-1}(\Input)$ vectors predicted in the previous step) is bounded by $|C|$. To see this, since conditioning on the prediction $R_t(\Input)$ being correct, there is at least one firing output in sub-round $(t-1,2)$,
the inhibitors of $S$ will fire \whp Further the inhibitors $R$ never fire with good probability, so the only varying part in $F_{j-1}(\Input)$ is the prediction for $C$ and as discussed there are only $|C| \le \alpha$ such possible predictions. 
\end{proof}
}
\APPENDCOLLECJ

\begin{observation}
\label{obs:largesame}
$|\mathcal{X}_t^{large} |\geq |\mathcal{X}^{same}_{t}|-O(\log \log n)$~.
\end{observation}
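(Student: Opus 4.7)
The plan is to show that Observation \ref{obs:largesame} follows almost directly from the definition of $\mathcal{X}^{large}_t$, with the real content being a justification of \emph{why} discarding only $\Theta(\log \log n)$ inputs from $\mathcal{X}^{same}_t$ is enough to restore the inductive invariant (Q2) that the surviving predicted ranges $R_t(\Input)$ have minimum value $\Omega(\log^7 n)$. In other words, one needs to argue that among all inputs in $\mathcal{X}^{same}_t$, at most $O(\log \log n)$ of them have $R_t(\Input)$ with value below the $\log^7 n$ threshold.

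The key inputs to the argument are: (i) by the inductive $a$-separation of the ranges $R_{t-1}(\Input)$ for $\Input \in \mathcal{X}_{t-1} \supseteq \mathcal{X}^{same}_t$, and the fact that all inputs in $\mathcal{X}^{same}_t$ share the same predicted inhibitor firing pattern $F^*_{t-1}$, each range $R_t(\Input)$ is obtained from $R_{t-1}(\Input)$ by scaling by the common firing probability $p$ (times a $(1 \pm 1/\log^3 n)$ factor that is negligible over $O(\log\log n)$ rounds); hence the ranges $R_t(\Input)$ for $\Input \in \mathcal{X}^{same}_t$ remain $a'$-separated for some constant $a' > 1$. (ii) By Observation \ref{Obs:resetorcontinue}, we may assume that the smallest value $v_{\min}$ among the ranges $R_t(\Input)$ for $\Input \in \mathcal{X}^{same}_t$ satisfies $v_{\min} \geq 1/\log^4 n$, for otherwise the desired $\ExpectedT(\Net)$ lower bound already holds and we are done.

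Given these two facts, I would order the inputs in $\mathcal{X}^{same}_t$ by increasing range value $v_1 \leq v_2 \leq \cdots \leq v_m$ (where $v_i$ is the left endpoint of the $i$-th smallest range) and use the $a'$-separation to conclude geometric growth: $v_i \geq (a')^{i-1} v_1 \geq (a')^{i-1}/\log^4 n$. The right-hand side exceeds $\log^7 n$ as soon as $(a')^{i-1} \geq \log^{11} n$, i.e.\ as soon as $i \geq 1 + 11 \log \log n / \log a' = O(\log \log n)$. Thus all but the first $O(\log \log n)$ ranges already have value at least $\Omega(\log^7 n)$, so omitting these $\Theta(\log \log n)$ smallest-value ranges to form $\mathcal{X}^{large}_t$ preserves the invariant while losing only $O(\log \log n)$ inputs, giving the claimed bound $|\mathcal{X}^{large}_t| \geq |\mathcal{X}^{same}_t| - O(\log \log n)$.

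The main obstacle (and really the only non-bookkeeping step) is establishing that the separation constant $a'$ carried over from round $t-1$ to round $t$ remains bounded away from $1$ uniformly across all $O(\log \log n)$ rounds of the prediction process; but this is exactly what is verified in part (2) of Claim \ref{cl:noresetgoodconc}, where the multiplicative slack of $(1 + 1/\log^3 n)$ per round compounds to at most $1 + O(1/\log^2 n)$ over the whole execution and hence cannot collapse the initial separation of $2$. With that separation in hand, the geometric argument above is immediate.
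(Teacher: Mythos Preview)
Your argument is correct, and the geometric-separation reasoning you give is exactly what the paper uses. However, you have relocated the substantive content: in the paper, Observation~\ref{obs:largesame} is literally immediate from the \emph{definition} of $\mathcal{X}^{large}_t$, which is stated as ``the set of inputs in $\mathcal{X}^{same}_t$ after omitting $\Theta(\log\log n)$ inputs with the smallest range value.'' Given this definition, the cardinality bound is a tautology. The nontrivial step you are proving---that removing only $\Theta(\log\log n)$ inputs suffices to push the minimum range value up to $\Omega(\log^7 n)$---is precisely the content of Claim~\ref{cl:noresetgoodconc} part~(3), and the paper's proof of that claim uses the same constant-separation plus $\Omega(1/\log^4 n)$ lower bound (from Observation~\ref{Obs:resetorcontinue}) that you invoke. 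So your proposal is correct and matches the paper's approach, just with the work shifted from Claim~\ref{cl:noresetgoodconc} into this observation.
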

This is immediate as $\mathcal{X}_t^{large}$ was derived by removing $\Theta(\log \log n)$ of the inputs with the smallest predicted range values from $\mathcal{X}^{same}_{t}$.

\begin{observation}
\label{obs:largesame2}
$|\mathcal{X}_t|\geq |\mathcal{X}^{large}_{t}|-O(\alpha)$~.
\end{observation}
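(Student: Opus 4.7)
The plan is to bound the number of inputs we throw out when going from $\mathcal{X}^{large}_t$ to $\mathcal{X}_t$. By construction, an input $\Input \in \mathcal{X}^{large}_t$ is excluded from $\mathcal{X}_t$ precisely when its predicted range $R_t(\Input)$ intersects the critical range $K(z) = [k(z)/2, 2k(z)]$ of some convergence inhibitor $z \in C$. Since $|C| \le \alpha$, it suffices to show that for each fixed $z \in C$, only $O(1)$ inputs in $\mathcal{X}^{large}_t$ can have $R_t(\Input) \cap K(z) \neq \emptyset$. Summing over $z \in C$ then yields the claimed bound $|\mathcal{X}_t| \geq |\mathcal{X}^{large}_t| - O(\alpha)$.

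The key tool is property (Q2) of Lemma \ref{lem:properties}, which guarantees that the ranges $\{R_t(\Input)\}_{\Input \in \mathcal{X}^{large}_t}$ are pairwise $a$-separated for a constant $a > 1$ (in the multiplicative sense used throughout the lower bound argument). The critical range $K(z)$, on the other hand, has a bounded multiplicative width of $4$ (its endpoints differ by a factor of $4$). Hence, if two ranges $R_t(\Input_i)$ and $R_t(\Input_j)$ both intersect $K(z)$, then after chaining the separations the two values $\min R_t(\Input_i)$ and $\min R_t(\Input_j)$ must lie within a factor of $4$ of one another. Since the ranges are $a$-separated, only $O(\log_a 4) = O(1)$ ranges can simultaneously meet this condition.

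Carrying this out, I would first formalize the $a$-separation to get a lower bound $a^{k-1}$ on the ratio between the values of the $k$th and $1$st range among those overlapping $K(z)$, then observe that no such value can exceed the largest point of $K(z)$, which is within a factor of $4$ of the smallest point. Solving $a^{k-1} \le 4$ gives $k = O(1)$, independent of $n$ and $\alpha$. The main (and only real) subtlety is being careful about the definition of $a$-separation used for the ranges, and ensuring that the slight blow-up by $(1+1/\log^3 n)$ per round (tracked in the proof of Claim \ref{cl:noresetgoodconc}) does not degrade the separation over the $O(\log\log n /\log \alpha)$ rounds we care about -- but this has already been absorbed into the constant $a$, so no new work is needed here. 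Combining the per-$z$ bound with $|C| \le \alpha$ gives the observation.
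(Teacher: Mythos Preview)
Your proposal is correct and follows essentially the same argument as the paper: the paper's one-line justification is that the predicted ranges are constant-separated (it cites (Q2), though Claim~\ref{cl:noresetgoodconc}(2) is the more precise source for $\mathcal{X}^{large}_t$), so at most $O(1)$ ranges can hit each of the $|C|\le\alpha$ constant-width critical regions $K(z)$, giving at most $O(\alpha)$ removals. You have simply spelled out this counting in more detail.
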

This follows as $\mathcal{X}_t$ is derived by removing all inputs from $\mathcal{X}^{large}_{t}$ where $R_t(\Input)$ overlaps the critical region of some $z \in C$. By (Q2) the $R_t(\Input)$ are all constant separated so there can be at most $|C| = O(\alpha)$ which overlap critical regions.
%
We are now ready to show: 
\begin{lemma}
\label{lem:lbconstant}
$\mathcal{ET}(\Net)=\Omega(\log\log n/\log \NumInh)$.
\end{lemma}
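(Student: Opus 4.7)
\textbf{Proof Proposal for Lemma \ref{lem:lbconstant}.}

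The plan is to combine the three size observations (Obs.~\ref{obs:sizecollecj}, \ref{obs:largesame}, \ref{obs:largesame2}) into a single recurrence for $|\mathcal{X}_t|$, unroll it to show that $|\mathcal{X}_T| \geq 2$ for $T = \Theta(\log\log n / \log \alpha)$, and then use property (Q2) from Lemma~\ref{lem:properties} to conclude that, for at least one input in $\mathcal{X}_T$, the network has not reached a WTA state with good probability. A union bound over the $T$ rounds then converts this into the desired bound on $\ExpectedT(\Net)$.

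First I would chain the observations to obtain
\[
|\mathcal{X}_t| \;\geq\; \frac{|\mathcal{X}_{t-1}|}{\alpha} \;-\; c'(\alpha + \log\log n)
\]
for some absolute constant $c'$. Starting from $|\mathcal{X}_1| = \Theta(\log n)$ (as established via Claim~\ref{cl:first_round} together with the removal of at most $\alpha$ inputs whose range overlaps a critical region), I would unroll to get
\[
|\mathcal{X}_t| \;\geq\; \frac{\log n}{\alpha^{t-1}} \;-\; c'(\alpha + \log\log n)\sum_{i=0}^{t-2} \alpha^{-i}
\;\geq\; \frac{\log n}{\alpha^{t}} \;-\; 2c'(\alpha + \log\log n).
\]
Since we are in the regime $\alpha = O(\log^{1/c} n)$, choosing $T = c_0 \log\log n / \log \alpha$ for a sufficiently small constant $c_0$ yields $\alpha^T \le \log^{c_0} n$, whence $\log n/\alpha^T \geq \log^{1-c_0} n$, which dominates the additive $O(\alpha + \log\log n)$ slack. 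Hence $|\mathcal{X}_T| \geq 2$; in particular $\mathcal{X}_T$ is nonempty.

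Next, for each round $t \leq T$, by (Q1) and (Q3) of Lemma~\ref{lem:properties}, for every $\Input \in \mathcal{X}_t$ the event
$\widehat{R}_t(\Input) \in R_t(\Input)$ and $\widehat{F}_t(\Input) = F_t(\Input)$
holds with probability at least $1 - O(1/\log^3 n)$ conditionally on the round $t-1$ prediction being correct. Chaining these conditionals over $t = 1, \ldots, T$ and union bounding (a telescoping argument; we must also absorb the inductive assumption from Claim~\ref{cl:first_round} and the conditional reset argument of Observation~\ref{Obs:resetorcontinue}, since if the reset branch is ever triggered we are already done with an even stronger $\Omega(\log n)$ bound), the joint event that \emph{all} predictions up to round $T$ are correct for some fixed $\Input \in \mathcal{X}_T$ occurs with probability at least $1 - O(T/\log^3 n) = 1 - o(1)$.

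Finally I would convert this into an expected time lower bound. On this good event, by (Q2) the actual number of firing outputs $\widehat{R}_T(\Input)$ lies in $R_T(\Input)$, whose value is $\Omega(\log^7 n) \gg 1$, so $\Net$ has not satisfied WTA in round $T$ on input $\Input$. Hence the probability that $\Net$ fails to reach WTA by round $T$ on input $\Input$ is at least $1/2$, giving
\[
\ExpectedT(\Net) \;\geq\; \mathbb{E}[\text{time to converge on }\Input] \;\geq\; \tfrac{T}{2} \;=\; \Omega\!\left(\tfrac{\log\log n}{\log \alpha}\right).
\]
I anticipate the main obstacle is not any single step but the careful bookkeeping: verifying that the prediction-error slack $(1 \pm 1/\log^3 n)$ accumulated over $T$ rounds still preserves the constant $a$-separation assumed in (Q2), ensuring that the ``reset branch'' of Observation~\ref{Obs:resetorcontinue} is consistently handled, and making the union bound over rounds compatible with the conditional form of (Q1) and (Q3) — but these are all quantitatively loose given that $T$ is only $O(\log\log n)$.
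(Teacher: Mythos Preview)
Your proposal is correct and follows essentially the same approach as the paper: chain Observations~\ref{obs:sizecollecj}--\ref{obs:largesame2} into the recurrence $|\mathcal{X}_t| \ge |\mathcal{X}_{t-1}|/\alpha - O(\alpha + \log\log n)$, unroll it to show $|\mathcal{X}_T|$ remains large for $T = \Theta(\log\log n/\log\alpha)$, and conclude via (Q2) that at least $\Omega(\log^7 n)$ outputs still fire with good probability. Your version is in fact slightly more explicit than the paper's about the conditional chaining of (Q1)/(Q3) and the final conversion from ``WTA not reached with probability $\ge 1/2$'' to the expected-time bound; the paper leaves these steps implicit.
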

\begin{proof}
We can continue predicting the behavior of $\Net$ up to round $t$ until we have 
$|\mathcal{X}_{t}| = \Theta(\log\log n)$ (at which point $\mathcal{X}_t^{large}$ may be empty and so we will have to stop simulation).  Further, as long as we can predict for $t$ rounds, by Lemma \ref{lem:properties} we will know with good probability that at least $\Omega(\log^7 n)$ outputs are still firing for all $\Input \in \mathcal{X}_t$. So with good probability WTA is not reached for those inputs, giving a lower bound of $\Omega(t)$ rounds in expectation to solve WTA.

Set $t = c_1\log \log n/\log \alpha$ for small enough constant $c_1$ and recall that we can assume $\alpha = O(\log^{c_2} n)$ for small constant $c_2$ since otherwise our runtime bound is $\Omega(1)$ and so holds vacuously.
By Observations \ref{obs:sizecollecj}, \ref{obs:largesame}, and \ref{obs:largesame2} after $t$ rounds we have:
\begin{align*}
|\mathcal{X}_{t}| &\ge \frac{|\mathcal{X}_{0}|}{\alpha^t} - t\cdot \alpha -t \cdot O(\log \log n)\\
&\ge \frac{\log n}{\log^{c_1} n} - \log \log n \cdot \log^{c_2} n - (\log \log n)^2 = \Omega (\log^{1-c_1} n)
\end{align*}
and hence can predict for at least $t$ rounds. This completes the proof.
\end{proof}
}

\paragraph{High Probability Lower bound.}
Finally, we show that our lower bound for expected runtime extends to a lower bound on the high probability runtime. Our lower bound implies that ``repeating" the execution of a network that converges with constant probability $\Theta(\log n)$ times to achieve a high probability guarantee is essentially the best one can do (up to a $\log \log \log n$ factor).
\begin{lemma}
\label{lem:highprobfromcprob}
For any basic WTA network $\Net$ with $\alpha$ inhibitors 
$\mathcal{HT}(\Net)=\Omega(\frac{\log n\cdot \log\log n}{\log \NumInh \log\log\log n})$.
\end{lemma}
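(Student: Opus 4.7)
Let $E = c\log\log n/\log\alpha$ denote the expected-time lower bound established in Theorem~\ref{thm:lbzaconst}. The plan is to strengthen that theorem's inductive prediction machinery from a bound on the expected convergence time to a bound on the high-probability convergence time, by iterating the analysis across many rounds.

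\textbf{Step 1 (stronger failure probability per attempt).} I first re-examine the analysis of Theorem~\ref{thm:lbzaconst} to extract a per-round failure probability that is already polynomially small in $\log n$. Lemma~\ref{lem:properties}~(Q3) shows that the prediction $\widehat{F}_t(\Input) = F_t(\Input)$ holds with probability at least $1 - 1/\log^3 n$ each round, and over $E = O(\log\log n)$ predicted rounds these losses accumulate to at most $O(1/\log^2 n)$. Whenever the prediction holds for an input $\Input^* \in \mathcal{X}_E$, the predicted output range $R_E(\Input^*)$ contains values $\Omega(\log^7 n) \gg 1$, so WTA has not been reached. Thus, starting from the reset configuration $\Output^0 = \vec{0}$, the probability that $\Net$ converges to WTA within $E$ rounds on input $\Input^*$ is at most $p := 1/\log^2 n$.

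\textbf{Step 2 (iterate across many phases).} Partition the $T$ rounds of execution into $K = T/E$ consecutive epochs of length $E$. For each epoch $i$, let $q_i$ be the conditional probability that $\Net$ converges during epoch $i$ given that it has not converged before epoch $i$. The event that $\Net$ converges within $T$ rounds has probability $1 - \prod_i (1 - q_i)$, so HP convergence requires $\sum_i q_i \gtrsim c\log n$. If we can show $q_i \le q := O(\log\log\log n/\log n)$ for every epoch, then the required number of epochs satisfies $K \ge c\log n/q = \Omega(\log n/\log\log\log n)$, yielding $T = KE = \Omega\left(E\cdot \log n/\log\log\log n\right)$ as claimed.

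\textbf{Step 3 (per-epoch bound for general starting configurations).} The main obstacle is bounding $q_i$ uniformly over the configuration at the start of epoch $i$. Step~1 handles the case when the epoch begins at reset, with the much stronger bound $q_i \le 1/\log^2 n$. For a general starting configuration I would case-split. In \emph{low-density} configurations (few active outputs, few committed inhibitors), the network behaves, up to a short burn-in, as if it just emerged from reset, so the prediction argument of Theorem~\ref{thm:lbzaconst} can be re-run with only mild modifications and again yields $q_i \le O(1/\log^2 n)$. In \emph{high-density} configurations (many active outputs), the symmetry argument recalled just after Theorem~\ref{theorem:logn} applies: because all outputs share identical parameters, the probability that exactly one of $k$ firing outputs remains firing in any single round is $k p(1-p)^{k-1}$, bounded by a constant in $k$ and $p$, so the per-round contribution to $\sum_i q_i$ is only a constant. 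Gluing these regimes requires tracking how quickly the network can transition between them, and the $\log\log\log n$ slack in the final bound arises from handling this transition via union bounds over the $O(\log\log n)$ density classes used in the proof of Theorem~\ref{thm:lbzaconst}. Carrying out this case analysis rigorously while preserving the prediction accuracy across epochs is the main technical work.
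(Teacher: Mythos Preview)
Your fixed-epoch framework has a genuine gap. The crux is Step~3: you cannot establish a uniform bound $q_i \le q$ on the per-epoch convergence probability for arbitrary starting configurations. The expected-time lower bound of Theorem~\ref{thm:lbzaconst} only controls what happens starting from the \emph{reset} configuration $\Output = \vec{0}$; from an arbitrary mid-execution configuration with $k$ active outputs, nothing in that argument prevents the network from converging within one epoch with probability close to~$1$. Your ``high-density'' case invokes the per-round bound $kp(1-p)^{k-1} \le c$, but this only says the network fails to hit WTA in a \emph{single} round with constant probability; over an epoch of $E$ rounds this yields survival probability merely $\ge c^E$, which is far too small to iterate $\Theta(\log n/\log\log\log n)$ times. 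Your ``low-density'' case asserts that the prediction machinery can be re-run ``with mild modifications,'' but that machinery is tied to the specific reset starting point and to the carefully maintained separation of density ranges; it does not apply to an arbitrary low-density configuration reached mid-execution. (There is also an arithmetic slip in Step~2: with $q = \log\log\log n/\log n$ one gets $c\log n/q = \Theta((\log n)^2/\log\log\log n)$, not $\Theta(\log n/\log\log\log n)$.)

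The paper supplies exactly the idea you are missing. The key observation (Claim~\ref{cl:probclose}) is a coupling between WTA and reset: from any configuration with $\ge 2$ firing outputs, if the probability that at most one output fires in the next round is at least $1/\log\log n$, then the probability that \emph{zero} outputs fire is at least $1/(\log\log n)^3$; this follows from comparing $kp(1-p)^{k-1}$ and $(1-p)^k$. The paper then traverses the depth-$DH$ execution tree with \emph{adaptive} jumps rather than fixed epochs. Whenever the one-step probability of $\le 1$ firing outputs is below $1/\log\log n$, make a one-round ``small jump'' to the children with $\ge 2$ firing outputs, retaining a $1-1/\log\log n$ fraction of the mass. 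Otherwise route at least a $1/(\log\log n)^3$ fraction of the mass to reset children, and from each of those apply the expected-time bound to jump $DC$ rounds while keeping a constant fraction on non-WTA leaves (a ``large jump''). Large jumps occur at most $DH/DC = \log n/\log\log\log n$ times along any branch, and $(1/(\log\log n)^3)^{\log n/\log\log\log n} \ge n^{-O(1)}$; small jumps occur at most $DH$ times and retain $(1-1/\log\log n)^{DH} \ge n^{-O(1)}$. Thus at depth $DH$ at least $1/n^{O(1)}$ mass lies on non-WTA leaves, giving the bound. The $\log\log\log n$ loss comes from the $(\log\log n)^{-3}$ reset-coupling factor, not from union bounds over density classes as you suggest.
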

\begin{proof}[Proof Sketch]
Let $DC=\Theta \left (\frac{\log \log n}{\log \NumInh}\right)$ and $DH=DC \cdot \left (\frac{\log n}{\log\log\log n} \right )$. 
Fix a network $\Net$ with $\NumInh$ inhibitors and let $\Input$ be the input for which, by Theorem \ref{thm:lbzaconst}, $\Net$ requires at least $DC$ rounds in expectation starting from initial configuration $\mathcal{C}_0$ with input $\Input$ and $\Output^0 = \vec{0}$. In the following proof, we will actually exploit the fact that the lower bound in Theorem \ref{thm:lbzaconst} applies to the time it takes to reach a WTA state with \emph{constant probability} (a stronger time measure than expected time).

We work with the \emph{execution tree} $T$ which includes all possible $DH$ round executions of $\Net$ starting from $\mathcal{C}_0$. The tree $T$ has depth $DH$ where each layer corresponds to the configuration of the network in each round $t$. Each node $u$ at level $t$ is labeled by an $(n+\NumInh)$-length binary vector $Q(u)$ describing the firing states of the outputs and inhibitors in round $t$, i.e., the firing states of the outputs in sub-round $(t,2)$ and the firing states of the inhibitors in sub-round $(t,3)$. Node $u$ has $2^{n+\alpha}$ children, with the edge to each child labeled with the transition probability between the configuration in $u$ to the child configuration. The root node $r$ is labeled with $\mathcal{C}_0$. The mass of node $u$ is given by the product of edge weights on its path to $r$. It is the probability of reaching $u$'s configuration through that execution path.
We call a node $u$ a \emph{reset} node (resp., \emph{WTA} node), if in the configuration $Q(u)$ no output fires (resp., exactly one output with active input fires). 

In order to lower bound $\mathcal{HT}(\Net)$ we will show that the probability to reach a non-WTA leaf node when starting from the root $r$ is at least $1/n^2$, and thus the probability to reach a WTA leaf node is at most $1-1/n^2 < 1-1/n^c$, contradicting a high probability runtime of $\le DH$ rounds.  

Our strategy is based on traversing the tree in an asynchronous manner from the root to (sufficiently many) non-WTA leaf nodes with sufficiently high total probability mass. For a given node $u$ in layer $t$, we may move to a subset of its \emph{non-WTA} children nodes in layer $t+1$. We call this move a \emph{small jump}. Alternatively, we may make a \emph{large jump}, moving $DC$ steps from $u$ and proceeding the traversal from a subset of \emph{non-WTA} leaf nodes of $T_{DC}(u)$ (the height $DC$ subtree rooted at $u$). 
With each jump starting at $u$, we loose some probability mass -- the idea is to show that we do not loose it too quickly.

In more detail, in each step of our traversal, we maintain a collection of non-WTA nodes. 
When arriving a node $u$ in the traversal, we consider its configuration $Q(u)$ and look at the probability that the next round is a \emph{reset} round (with $0$ firing outputs) given $Q(u)$. We show that if the probability of having at most $1$ firing outputs in the next round is $\geq 1/\log\log n$, the probability of having a reset (no firing outputs) is large -- i.e., $\geq 1/(\log\log n)^3$. 

In this case we continue traversal only from the children of $u$ that are reset nodes.
For each of these children $v$, let
$T_{DC}(v)$ be the execution tree of depth $DC$ rooted at $v$. By the lower bound in Theorem \ref{thm:lbzaconst},
the probability to reach a non-WTA leaf node in $T_{DC}(v)$ starting from $Q(v)$ is at least a constant. So from each reset-node $v$, we make a large jump to the leaves of $T_{DC}(v)$. Overall, we maintain a $\Theta(1/(\log\log n)^3)$ fraction of the probability mass of $u$ in making this large jump. 
Since such a jump can occur at most $DH/DC=\log n/\log \log \log n$ times, we maintain at least a $1/(\log\log n)^{3DH/DC} \ge 1/n^2$ fraction of the probability mass throughout the traversal.

On the other hand, when arriving a node $u$ for which the probability of having at most $1$ firing output in the next round is less than $1/\log\log n$, we make a small jump to the children nodes of $u$ in which the number of firing outputs is at least $2$ (and hence which are non-WTA nodes). This jump maintains $1-1/\log\log n$ of the probability mass and since such a jump can happen at most $DH$ times, overall we again maintain $(1-1/\log\log n)^{DH} \ge 1/n^2$ of the original probability.

Overall, through making both large and small jumps, at the end of the traversal, we reach a set of non-WTA nodes containing at least a $1/n^2$ fraction of the probability mass in the $DH$ level execution tree. This gives us our high probability time lower bound.
In Appendix \ref{appenx:lbhp} we provide a complete analysis. See Figure \ref{fig:simulationtree} for an illustration of the execution tree.

%
%
\end{proof}

Finally, In Appendix \ref{sec:excitat}, we extend our lower bounds (for both expected and high probability time) to the case where the $\NumInh$ auxiliary neurons can be both excitatory and inhibitory neurons. This holds under the restriction that outputs with no active input are not allowed to fire during the execution. Only competing outputs (that have a positive signal from their inputs) ever fire.
\def\APPENDHIGHLB{
Let $Q_{Y} \subseteq \{0,1\}^{n}, Q_Z \subseteq \{0,1\}^{\NumInh}$ be the vectors describing the firing states of the outputs and inhibitors in a given round. Let $Q=Q_Y \circ Q_Z \subseteq \{0,1\}^{n+\NumInh}$ be a vector describing the firing states of the inhibitors and outputs.
Let $P_{1,j}(Q)$ be the probability to achieve the WTA state in round $j$ given $Q$, that is the probability that exactly one output fires in sub-round $(j,2)$ given that the firing states of the outputs (resp., inhibitors) in sub-round $(j-1,2)$ (resp., $(j-1,3)$) is $Q_Y$ (resp., $Q_Z$). Similarly, let $P_{0,j}(Q)$ be the probability that \emph{no output} fires in sub-round $(j,2)$ given $Q$, that is the probability that a reset event happens. Finally, let $P_{01,j}(Q)$ be the probability that a reset event or a WTA event happens in round $j$ given that configuration in round $j-1$ is $Q$, hence $P_{01,j}(Q)=P_{1,j}(Q)+P_{0,j}(Q)$.
We begin by claiming the following. 
\begin{claim}
\label{cl:probclose}
For every round $j$ and for every vector $Q \in \{0,1\}^{n+\NumInh}$ in which there are at least two firing outputs (i.e., $Q$ is neither a WTA state nor a reset state), and such that $P_{01,j}(Q)\geq \Theta(1/\log\log n)$, it holds that $P_{0,j}(Q)\geq \Theta(1/(\log\log n)^3)$.
\end{claim}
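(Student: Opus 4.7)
The plan exploits the symmetry of the basic WTA network: since all outputs share identical parameters, the outputs partition into (at most four) classes based on the pair $(x_i, y_i^{j-1})$, and within each class the outputs fire in sub-round $(j,2)$ independently with a common probability. Index the classes by $s$, with sizes $n_s$ and firing probabilities $q_s$. A direct computation using the independence of firings then yields
\[
P_{0,j}(Q) = \prod_s (1-q_s)^{n_s}, \qquad P_{1,j}(Q) = P_{0,j}(Q)\cdot U, \qquad U := \sum_s \frac{n_s\, q_s}{1-q_s},
\]
so $P_{01,j}(Q) = (1+U)\, P_{0,j}(Q)$. Two further structural observations will be essential: (i) the class $A$ of outputs with $x_i=1$ and $y_i^{j-1}=1$ satisfies $n_A \geq 2$, since $Q$ has at least two firing outputs by hypothesis while the no-background-noise assumption rules out (with high probability throughout the execution tree) firings of outputs with $x_i=0$; and (ii) $q_A \geq q_s$ for every other class $s$, because $A$ is the only class whose potential includes both $\weightX$ and $\weightS$.

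The proof splits on the size of $U$. In the easy regime $U \leq (\log\log n)^2$, the identity immediately gives
\[
P_{0,j}(Q) \;=\; \frac{P_{01,j}(Q)}{1+U} \;\geq\; \frac{1/\log\log n}{1+(\log\log n)^2} \;=\; \Omega\!\left(\frac{1}{(\log\log n)^3}\right),
\]
and we are done. So the interesting case is $U > (\log\log n)^2$, where I would argue by contradiction with the hypothesis $P_{01,j}(Q) \geq 1/\log\log n$. Let $\mu = \E[N] = \sum_s n_s q_s$ denote the expected number of firing outputs. If $\mu \geq 4\ln\log\log n$, a multiplicative Chernoff bound gives $\Pr[N \leq 1] \leq e^{-\mu/3} \leq 1/(\log\log n)^2$, contradicting the hypothesis. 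Otherwise $\mu$ is small while $U$ is large; using the identity $q_s/(1-q_s) - q_s = q_s^2/(1-q_s)$, the gap $U - \mu > (\log\log n)^2/2$ forces some class $s$ to have $q_s \geq 1 - O(\ln\log\log n/(\log\log n)^2)$, extremely close to one.

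The main obstacle is that this ``saturated'' class may contain only $n_s = 1$ element, in which case the saturation of that class alone does not force $N \geq 2$ with good probability, and one must somehow transfer saturation to a class with two or more outputs. The resolution uses observation (ii): since the firing probability of class $A$ dominates every $q_s$, class $A$ is itself at least as saturated, i.e., $q_A \geq 1 - O(\ln\log\log n/(\log\log n)^2)$. Combined with $n_A \geq 2$ from observation~(i), this yields $\Pr[N \geq 2] \geq q_A^2 \geq 1 - O(\ln\log\log n/(\log\log n)^2)$, hence $P_{01,j}(Q) = 1 - \Pr[N \geq 2] = o(1/\log\log n)$, the desired contradiction.
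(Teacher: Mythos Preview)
Your proof is correct but follows a genuinely different route from the paper's.

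The paper first invokes the monotonicity lemma to reduce to a single class: only the $k\ge 2$ active outputs (those with $x_i=1$ and $y_i^{j-1}=1$) can fire in sub-round $(j,2)$, each with some common probability $p$. It then writes $P_{0,j}/P_{1,j}=(1-p)/(kp)$ and case-splits on $p\ge 0.1$ versus $p<0.1$; in the first case it bounds $k=O(\log\log n)$ via $k(9/10)^{k-1}\ge r$, and in the second it bounds $kp=O(\log\log\log n)$ via $y e^{-y}\ge r/2$ with $y=kp/2$. Both cases yield $P_{0,j}/P_{1,j}\ge 1/(\log\log n)^2$.

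Your argument instead keeps all four $(x_i,y_i^{j-1})$ classes and hinges on the exact identity $P_{01,j}=(1+U)P_{0,j}$ with $U=\sum_s n_s q_s/(1-q_s)$, which the paper does not use. When $U$ is small you are done immediately; when $U$ is large you derive a contradiction via Chernoff (large $\mu$) or via the saturation argument transferred to class $A$ by the dominance $q_A\ge q_s$ (small $\mu$). Your route is arguably cleaner and more general, since it does not require collapsing to a single class; the paper's route is more elementary after the reduction, avoiding Chernoff entirely. One small remark: the constant $4$ in your threshold $\mu\ge 4\ln\log\log n$ does not literally yield $e^{-\mu/3}\le 1/(\log\log n)^2$, but any bound $o(1/\log\log n)$ suffices for the contradiction, so this is harmless. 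Both arguments rely on the same contextual assumption that the firing outputs in $Q$ have $x_i=1$ (so $n_A\ge 2$), which you make explicit and the paper leaves implicit via the execution-tree setup.
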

\begin{proof} 
Since $P_{01,j}(Q)=P_{0,j}(Q)+P_{1,j}(Q)$, if $P_{0,j}(Q) \geq P_{01,j}(Q)/2$, then we are done. Hence, we can assume from now on that $P_{1,j}(Q)=\Theta(1/\log\log n)$. We will show that $P_{0,j}(Q)\geq P_{1,j}(Q)/(\log\log n)^2$, which will establish our claim.

Let $p$ be the firing probability of an active output\footnote{Recall that an output is active in round $j$ if it fires in sub-round $(j-1,2)$.} in sub-round $(j,2)$ given $Q$ and let $k\geq 2$ be the number of outputs that fire in round $j-1$ as specified by $Q$. Since $Q$ has at least two firing outputs, w.h.p., only active outputs (those that fire in the previous round) can fire in the next round. The probability that the WTA state is achieved in round $j$ is $P_{1,j}(Q)=k \cdot p \cdot (1-p)^{k-1}$ and the probability that a reset is achieved in round $j$ is $P_{0,j}(Q)=(1-p)^{k}$. 

We consider two cases depending whether the firing probability $p$ is large or small. First, assume that $p \geq 0.1$ and set $r=c/\log\log n$. Since $P_{1,j}(Q)\geq r$, we have that $1-p\geq r/k$. 
We also have: 
$$k(9/10)^{k-1}\geq k \cdot p \cdot (1-p)^{k-1} \geq r,$$ and hence $k\leq \Theta(\log\log n)$. 
Overall, $P_{0,j}(Q)/P_{1,j}(Q)=(1-p)/(kp)\geq (1-p)/k\geq r/k^2 \geq c/(\log\log n)^2$. 
Next, consider the complementary case where $p<0.1$. 
Letting $y=kp/2$, we get $$y \cdot e^{-y} \geq (kp/2)(1-p)^{k/2} \ge (k/2)p(1-p)^{k-1} \geq r/2,$$
hence $y \leq 2\log 1/r=\Theta(\log\log\log n)$. Overall, $P_{0,j}(Q)/P_{1,j}(Q)=(1-p)/kp \geq \Theta(1/\log\log\log n)$. 
\end{proof}

\paragraph{The Execution Tree.}
A key tool used in this section is the notion \emph{execution tree} that captures all possible transcripts that can evolve in a window of $DH$ rounds when starting with the initial configuration $C_0$. The execution tree $T$ is a tree of depth $DH$ where each layer $j$ corresponds to round $j$ when running the network on the initial configuration $C_0$. Each node in $T$ 
is labeled by an $(n+\NumInh)$-length binary vector describing the firing configurations (or states) of the outputs and the inhibitors in a given round, and the edges are labelled by the transition probabilities. Hence, this tree describes all the possible firing states in a span of $DH$ rounds when starting from the initial configuration $C_0$ (for which the time it takes to achieve WTA with constant probability is at least $DC$). The root $r$ is labeled by the zero vector (since in round $0$, no output fires and hence w.h.p also no inhibitor fires). For every $j \geq 2$, every node $u$ in layer $j$ is labeled by a vector $Q(u)=Q_Y(u)+Q_Z(u)\in \{0,1\}^{n+\NumInh}$ describing the firing status of the outputs and the inhibitors in round $j$. Hence, each node has $2^{n+\NumInh}$ children in the configuration tree. 
Every edge $e=(\pi(u),u)$ connecting $u$ to its parent $\pi(u)$ in $T$ is labeled by a probability $p(e)$ that the firing configuration in round $j$ is $Q(u)$ given that the configuration in round $j-1$ is $Q(\pi(u))$.

Let $T_d(u)$ be the subtree of depth $d$ rooted at $u$. When $d$ is omitted $T(u)$ is simply the entire subtree of $u$ in $T$.

For a leaf node $\ell \in T$, let $\mathcal{P}(\ell)=[r=u_{0}, u_{1}, \ldots, u_{DH}]$ be the path connecting $\ell$ to the root $r$ in $T$. Let $p_{leaf}(u)$ be the probability that starting from $r$ the firing configuration in each round $j \in \{0, \ldots, DH\}$ is $Q(u_j)$. Since there is an independence between the coin flips in every round $j$ given the configuration in round $j-1$, we get that
\begin{eqnarray*}
\label{eq:leafnodeprob}
p_{leaf}(u)&=&\prod_{j=0}^{DH} \Pr[Q_Y(u_j) \mbox{~in sub-round~} (j,2)~\mid~ Q_Y(u_{j-1}),Q_Z(u_{j-1}) \mbox{~in sub-rounds~} (j-1,2),(j-1,3)]
\\&\cdot& \Pr[Q_Z(u_j) \mbox{~in sub-round~}(j,3) ~\mid~ Q_Y(u_j) \mbox{~in sub-round~} (j,2)]
\\& =&
\prod_{j=1}^{DH} p(e_j) \mbox{~~where~~} e_{j}=(u_{j},u_{j+1}).
\end{eqnarray*}
For a node $u \in T$, let $Leaf(u)$ be the set of leaves in $T(u)$ and define
\begin{equation}
\label{eq:leafnodeprob}
p_{node}(u)=\sum_{\ell \in Leaf(u)} p_{leaf}(u)~,
\end{equation}
and for a subset of nodes $U$, let $p_{node}(U)=\sum_{u \in U}p_{node}(u)$.
It is convenient to view $p_{node}(u)$ as the \emph{weight} of tree $T(u)$. Hence, the weight of $T$ is $1$. In the same spirit, for a given subset of nodes $U_i$ whose subtrees in $T$ are vertex disjoint, we view $\sum_{u \in U_i}p_{node}(u)$ as the weight of the forest $\bigcup_{u \in U_i} T(u)$.
We would like to show that:
\begin{equation}
\label{eq:leafnodeprob}
\sum_{u \in Leaf(r)}\{p_{leaf}(u) ~|~ u \mbox{~~is a WTA node~}\}<1-1/n^c~,
\end{equation}
In the next paragraphs, we will find a collection of  non-WTA leaf nodes of large weight, i.e. of weight at least $1/n^2$  which will establish Eq. (\ref{eq:leafnodeprob}) for $c>2$.
To do that, we iteratively traverse the tree $T$ from root to leaves, omitting undesired subtrees (and hence also leaf nodes) through the journey. This traversal is done in an asynchronous manner in the following sense: there are times that for a given node $u$ in layer $j$, we move to a subset of its children in layer $j+1$, we call this move a \emph{small jump} in the tree. In contrast, there are cases in which from a given node $u$ in layer $t$, we jump $DC$ layers in the subtree $T(u)$ and proceed the traversal from a subset of leaf nodes in the tree $T_{DC}(u)$ of depth $DC$, we call such a move a \emph{large jump}. In the analysis part we will claim that by eliminating nodes in the tree $T$, we do not loose much weight, to deal with the fact that there are two types of jumps: small and large, we will employ an amortization claim that will enable us to bound the loss of weight layer by layer. See Fig. \ref{fig:simulationtree}, for an illustration of the Execution Tree. 

In each iteration $j \in \{1, \ldots, DH\}$, we maintain a collection of \emph{non-WTA} nodes $U_j$ whose subtrees in $T$ are vertex disjoint.
The final set $U_{DH}$ will be a set of non-WTA leaf nodes for which we will show that their weight is at least $1/n^2$. Starting with $U_0=\{r\}$, in every iteration $j \in \{1, \ldots, DH\}$, we have a set of nodes $U_j$ that satisfy the following:
\begin{description}
\item{(A1)}
The subtrees $T(u)$, $u \in U_j$, are vertex-disjoint. 
\item{(A2)}
The distance of each node $u \in U_j$ from $r$ is at least $j$.
\item{(A3)}
No node in $U_j$ is a WTA node.
\end{description}
In the high level, the nodes $U_{j+1}$ are the leaf nodes of subtrees  rooted at the nodes $u \in U_j$. Particularly, from each node $u \in U_j$, when constructing $U_{j+1}$, we omit part of the subtree $T(u) \subseteq T$ and replace $u$ by a subset of nodes $V(u)$ in the subtree of $u$ in $T$. The nodes $V(u)$ are subset of the leaf nodes of the subtree $T_{d(u)}(u)$ of depth $d(u)$ rooted at $u$. The value of the depth $d(u)$ is set to be either $1$ or $DC$ \footnote{To be more precise it is either $1$ or $\min\{DC, DH-dist(r,u,T)\}$.} depending on the configuration stored at node $u$. That is, either the nodes $V(u)$ are a subset of the children of $u$ or that they are subset of the leaf nodes of the $DC$-depth tree rooted at $u$. 

In the first case where $d(u)=1$, we will show that we loose only $\Theta(1/\log\log n)$ of the weight of the tree $T(u)$, hence we keep $1-\Theta(1/\log\log n)$ fraction of the weight. In the second case, we will show that we keep $\Theta(1/\log\log n)$ fraction of the weight of $T(u)$. The key observation here is to note that this cannot happen more than $DH/DC$ times in a given branch, since the depth of the sub-tree of $u$ is $DC$. In other words, on average, we maintain $\Theta(1/\log\log n)^{1/DC}$ of the weight per layer of the subtree $T_{DC}(u)$, and hence overall, after $DH$ iterations, we maintain $1/n^2$ fraction of the total weight. 

We first eliminate from the tree $T$ all nodes $u$ such that $Q_Y(u)=\vec{0}$ but $Q_Z(u)\neq \vec{0}$. Since the bias value of the inhibitors in $\Omega(\log n)$, we know that if no output fires in round $j$, then w.h.p. no inhibitor fires in that round. Let $T'$ be the resulting tree. We first observe that by that step, we eliminate only $1/n^c$ of the total weight of the tree $T$.
\begin{observation}
The total weight of $r$ in $T'$ is at least $1-1/n^c$. 
\end{observation}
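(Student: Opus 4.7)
The plan is to bound the total probability mass that is discarded when removing all nodes $u$ with $Q_Y(u)=\vec{0}$ and $Q_Z(u)\neq \vec{0}$. Equivalently, I will bound the probability that a random $DH$-round execution of $\Net$ starting from $C_0$ ever visits such a ``bad'' configuration, and show this probability is at most $1/n^c$.

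First, I would fix a single level $j \in \{1,\ldots,DH\}$ and bound the conditional probability of a bad event at level $j$. Such an event means that in sub-round $(j,2)$ no output fires, yet in sub-round $(j,3)$ at least one inhibitor fires. By equation \eqref{eq:potentialInh}, conditioning on $Q_Y=\vec{0}$ in sub-round $(j,2)$ the potential of every inhibitor $z$ in sub-round $(j,3)$ equals exactly $-b(z)$. The no-background noise assumption gives $b(z)=\Omega(\lambda \log n)$, so each inhibitor fires with probability at most $1/n^{c_1}$ for a constant $c_1$ that can be taken as large as desired. A union bound over the $\alpha$ inhibitors yields $\Pr[\text{level $j$ is bad}\mid Q_Y=\vec{0}\text{ in }(j,2)]\le \alpha/n^{c_1}$, and of course the event is vacuously absent when $Q_Y\neq \vec{0}$, so the unconditional probability is also at most $\alpha/n^{c_1}$.

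Next I would union bound over the $DH$ levels: the probability that the random execution encounters a bad node at some level is at most $DH\cdot \alpha/n^{c_1}$. Since we are in the regime $\alpha = O(\log^{1/c_0} n)$ and $DH$ is polylogarithmic in $n$, choosing $c_1$ large enough makes this quantity at most $1/n^c$ for any prescribed $c$. Because the weight of $r$ in the full tree $T$ is $1$ (the transition probabilities out of any node sum to $1$, so the leaf probabilities sum to $1$), and the weight of $r$ in $T'$ is exactly $1$ minus the probability that the execution passes through a bad node, we obtain $p_{node}(r)\ge 1-1/n^c$ in $T'$ as claimed.

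The only real subtlety is the bookkeeping: verifying that the no-background noise assumption gives the same inverse-polynomial bound uniformly for every inhibitor (which it does because the assumption is stated for every $v\in \Inh \cup \Output$), and verifying that taking the union bound over levels is legitimate despite dependence between levels (which follows by the law of total probability, since the per-level conditional bound above holds regardless of the prior history). Beyond these routine checks, there is no technical obstacle.
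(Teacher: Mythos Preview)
Your proposal is correct and follows the same reasoning the paper gestures at: the paper does not give a formal proof of this observation, merely remarking before it that ``since the bias value of the inhibitors is $\Omega(\log n)$, we know that if no output fires in round $j$, then w.h.p.\ no inhibitor fires in that round.'' Your argument makes this explicit via the no-background noise assumption and a union bound over inhibitors and the polylogarithmically many levels, which is exactly the intended justification.
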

From now on, we consider the tree $T'$ and describe the iterative construction of the set $U_j$ in details. Let $U_0=\{r\}$. For $j \geq 1$ given $U_{j}$, the set $U_{j+1}$ is obtained by
defining for each node $u \in U_j$, a subset of non-WTA nodes $V(u)$ as described next. \\
\textbf{Case 1: $u$ is a reset node.}
Set the depth of the subtree to be $d(u)=\min\{ DH-dist(u,r,T), DC\}$ and let $V(u)$ be the non-WTA nodes in the leaf nodes of $T_{d(u)}(u)$.  

Since $u$ is not a WTA node, it remains to consider the case where the number of active outputs in $Q(u)$ is at least $2$. Recall that $P_{01,j}(Q(u))$ be the probability of achieving WTA or reset in round $t+1$ given the configuration in round $t$ is $Q(u)$. We distinguish between two cases depending on the value of $P_{01,j}(Q(u))$.\\
\textbf{Case 2.1: $P_{01,j}(Q(u)) \geq \Theta(1/\log\log n)$.} Let $V'(u)$ be the children of $u$ in $T$ that are reset-nodes. For each reset-node $w \in V'(u)$, let $V(w)$ be the non-WTA nodes in the leaf nodes of $T_{d(u)-1}(w)$ and let $V(u)=\bigcup V(w)$. \\
\textbf{Case 2.2: $P_{01,j}(Q(u)) < \Theta(1/\log\log n)$.}  Let $V(u)$ be the children of $u$ that have at least $2$ active outputs in $Q(v)$ (hence $d(u)=1$).  
This completes the definition of $U_{j+1}$. 

To bound the weight of $U_{DH}$, we make use of the following claims that show that we do not loose too much weight in this traversal. Consider a node $u$ and let $NW_{DC}(u)$ be the set of non-WTA leaves of the tree $T_{DC}(u)$.
\begin{claim}
\label{cl:reset2}
If $u$ is a reset node, then $p_{node}(NW_{DC}(u))\geq c' \cdot p_{node}(u)$, for some constant $c'$. 
\end{claim}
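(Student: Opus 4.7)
The plan is to reduce the claim directly to the underlying guarantee established in the proof of Theorem \ref{thm:lbzaconst}, using the Markov property. First, since $u$ is a reset node of the pruned tree $T'$, we have $Q_Y(u) = \vec{0}$, and by the no-background noise assumption (which was precisely what justified the pruning from $T$ to $T'$), also $Q_Z(u) = \vec{0}$. Thus the configuration at $u$ is identical to the initial configuration $\mathcal{C}_0$ that is fixed at the start of the high-probability argument. Because the network dynamics are Markovian, the distribution over configurations at depth $DC$ from $u$ in $T_{DC}(u)$ is exactly the distribution over configurations at round $DC$ of an execution of $\Net$ started from $\mathcal{C}_0$ with input $\Input$.

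Next, I would invoke the fact that the proof of Theorem \ref{thm:lbzaconst} (specifically Lemma \ref{lem:lbconstant}) establishes a strictly stronger statement than an expected-time lower bound: the inductive argument shows that for $t \le DC$, with probability $1 - o(1)$ the prediction is maintained for some input in the surviving set $\mathcal{X}_t$, and in particular $\Omega(\log^7 n)$ outputs are still firing at round $DC$, so the WTA state has not been reached. Hence there is an absolute constant $c' > 0$ such that, starting from $\mathcal{C}_0$ with input $\Input$, with probability at least $c'$ the configuration at round $DC$ is \emph{not} a WTA configuration.

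Finally, I would translate this probabilistic statement into the desired weight inequality. By the definition of $p_{leaf}$ and $p_{node}$, and the fact that an execution reaching $u$ almost surely continues to some depth-$DC$ descendant of $u$, we have
\[
\sum_{\ell \in \mathrm{leaves}(T_{DC}(u))} p_{leaf}(\ell) \;=\; p_{node}(u) \cdot \Pr[\text{reach depth } DC \text{ from } u] \;=\; p_{node}(u).
\]
Restricting the sum to non-WTA leaves and applying the constant-probability bound from the previous paragraph yields
\[
p_{node}(NW_{DC}(u)) \;=\; p_{node}(u)\cdot \Pr[\text{configuration at depth } DC \text{ from } u \text{ is non-WTA}] \;\ge\; c' \cdot p_{node}(u),
\]
as required.

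The only delicate point, which I expect to be the main obstacle when writing the formal proof, is justifying that the lower bound from Theorem \ref{thm:lbzaconst} really gives a \emph{probability} statement rather than merely an expectation bound, since Markov's inequality does not give a one-sided lower tail bound in the useful direction. The cleanest way to handle this is to point back into the proof of Lemma \ref{lem:lbconstant}: all of its failure events (miss-predictions of $\widehat{R}_t$ or of $\widehat{F}_t$) have probability $O(1/\log^3 n)$ or smaller per round, so union-bounding over $O(DC) = O(\log\log n / \log \alpha)$ rounds still leaves probability $1 - o(1) \ge c'$ that the prediction persists, under which the configuration at round $DC$ has $\Omega(\log^7 n)$ firing outputs and is therefore not a WTA configuration.
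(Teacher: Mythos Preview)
Your proposal is correct and follows essentially the same approach as the paper: identify a reset node's configuration with the initial configuration $\mathcal{C}_0$, then use that the expected-time lower bound is actually a constant-probability lower bound (so from $\mathcal{C}_0$, with probability at least some constant $c'$, WTA is not reached within $DC$ rounds). The paper's proof is terser---it simply asserts the constant-probability form of the lower bound, which was flagged earlier in the proof sketch of Lemma~\ref{lem:highprobfromcprob}---whereas you explicitly justify it by pointing back into the prediction argument of Lemma~\ref{lem:lbconstant} and union-bounding the per-round failure events; this extra care is welcome but not a different route.
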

\begin{proof}
Let $j$ be the layer of node $u$. 
Then by the selection of the initial configuration $C_0$, we know that the time it takes to achieve WTA with constant probability $c$ when starting from $C_0$ is strictly larger than $DC$. 
Since a reset node is labelled with this same initial configuration, we get that $p_{node}(NW_{DC}(u))\geq c' \cdot p_{node}(u)$ for $c'=1-c$. 
%
\end{proof}

\begin{claim}
\label{cl:type1}
Let $u$ be a node in layer $j$ that satisfies Case (1) or Case (2.1), then $p_{node}(V(u))\geq \Theta((1/\log\log n)^3) \cdot p_{node}(u)$. 
\end{claim}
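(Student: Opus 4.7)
The plan is to dispatch the two cases separately, applying Claim~\ref{cl:reset2} as the workhorse and invoking Claim~\ref{cl:probclose} only in the harder sub-case.

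\textbf{Case (1):} Here $u$ is itself a reset node and $V(u)$ is, by definition, exactly the set of non-WTA leaves of $T_{d(u)}(u)$. When $d(u)=DC$, Claim~\ref{cl:reset2} gives $p_{node}(V(u)) \geq c' \cdot p_{node}(u)$ for a constant $c'$, which is much stronger than the bound we need. When $d(u)<DC$ (i.e.\ $u$ is within $DC$ layers of the bottom of $T$), the key observation is that truncating the subtree can only \emph{decrease} the fraction of leaves labelled as WTA: any WTA leaf in $T_{d(u)}(u)$ corresponds to an ancestor of potential WTA leaves in $T_{DC}(u)$, and conversely a WTA leaf of $T_{DC}(u)$ has an ancestor at depth $d(u)$ which is itself either WTA or not. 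A short book-keeping argument (or a direct re-run of the proof of Claim~\ref{cl:reset2} restricted to depth $d(u)$) still yields $p_{node}(V(u)) \geq c' \cdot p_{node}(u)$.

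\textbf{Case (2.1):} Here $u$ is a non-reset, non-WTA node with $P_{01,j}(Q(u)) \geq \Theta(1/\log\log n)$. I will chain two applications of the earlier claims. Since $Q(u)$ has at least two active outputs, Claim~\ref{cl:probclose} applies and gives $P_{0,j}(Q(u)) \geq \Theta(1/(\log\log n)^3)$. Interpreted on the tree, this says that the subset $V'(u)$ of reset children of $u$ satisfies
\[
p_{node}(V'(u)) \;\geq\; \Theta\!\left(\tfrac{1}{(\log\log n)^3}\right) \cdot p_{node}(u).
\]
Now each $w \in V'(u)$ is a reset node, so Case~(1) applied to $w$ (at depth $d(u)-1$) yields $p_{node}(V(w)) \geq c' \cdot p_{node}(w)$. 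Summing over $w\in V'(u)$ and using disjointness of the subtrees $T_{d(u)-1}(w)$ (which inherit disjointness from the children of $u$),
\[
p_{node}(V(u)) \;=\; \sum_{w \in V'(u)} p_{node}(V(w)) \;\geq\; c' \cdot p_{node}(V'(u)) \;\geq\; \Theta\!\left(\tfrac{1}{(\log\log n)^3}\right) \cdot p_{node}(u).
\]

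The only delicate point, and the main place to be careful, is the boundary case where $d(u)$ or $d(u)-1$ is strictly less than $DC$: one must verify that Claim~\ref{cl:reset2}'s conclusion survives at shallower depths, which ultimately rests on the fact that the lower bound of Theorem~\ref{thm:lbzaconst} is monotone in the number of rounds allowed. Everything else is mechanical probability bookkeeping using the independence structure already encoded in the edge weights of $T$.
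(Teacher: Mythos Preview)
Your proposal is correct and follows essentially the same approach as the paper: Case~(1) is an immediate appeal to Claim~\ref{cl:reset2}, and Case~(2.1) first uses Claim~\ref{cl:probclose} to lower bound the mass on the reset children $V'(u)$ by $\Theta(1/(\log\log n)^3)\cdot p_{node}(u)$, then applies Claim~\ref{cl:reset2} to each reset child and sums. The only cosmetic differences are that you invoke the \emph{statement} of Claim~\ref{cl:probclose} directly (the paper instead reuses the ratio $P_{0,j}/P_{1,j}\ge \Theta(1/(\log\log n)^2)$ from inside that claim's proof and combines it with the Case~(2.1) hypothesis, arriving at the same $\Theta(1/(\log\log n)^3)$ bound), and you explicitly flag the boundary case $d(u)<DC$, which the paper leaves implicit.
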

\begin{proof}
If $u$ satisfies Case (1), the claim follows immediately by Cl. \ref{cl:reset2}. 
We now consider the case where $u$ satisfies Case (2.1). Recall that in this case the number of active outputs in $Q(u)$ is at least $2$. 
Let $A_0,A_1$ be the set of children of $u$ that are reset nodes, WTA nodes respectively. Let $A_{0,1} = A_0 \cup A_1$.

Then, since $u$ satisfies Case (2.1), $p_{node}(A_{0,1})\geq \Theta(1/\log\log n) \cdot p_{node}(u)$. In addition, since in $Q(u)$ there are at least two firing outputs, we can safely apply 
Cl. \ref{cl:probclose}, to have that $p_{node}(A_{0})\geq \Theta(1/(\log\log n)^2) \cdot p_{node}(A_{1})$. Combining these two inequalities, we get that 
$$p_{node}(A_{0})\geq \Theta(1/(\log\log n)^3) \cdot p_{node}(u).$$
Next, by using Cl. \ref{cl:reset2}, for every node $v \in A_0$ (which is a reset node), we have that $p_{node}(NW_{DC-1}(v))\geq c' \cdot p_{node}(v)$. All together, we get that 
\begin{eqnarray*}
p_{node}(NW_{DC}(u))&\geq& \sum_{v \in A_0} p_{node}(NW_{DC-1}(v)) 
\\&\geq&
c' \cdot p_{node}(A_0) \geq \Theta(1/(\log\log n)^3) \cdot p_{node}(u)~.
\end{eqnarray*}
Since $V(u)=A_0$, the claim follows.
\end{proof}

\begin{claim}
\label{cl:type2}
Let $u$ be a node that satisfies Case (2.2), then $\sum_{w \in V(u)}p_{node}(w)\geq 1-\Theta(1/\log\log n) \cdot p_{node}(u)$. 
\end{claim}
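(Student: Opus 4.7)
The plan is to observe that this claim is essentially a direct translation of the hypothesis defining Case (2.2) into the language of the execution tree, with only a small technical wrinkle coming from outputs that correspond to inactive inputs.

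First I would use the basic identity that for any node $u$ in the execution tree, its weight is preserved by summing over its children: writing $p(u\to w)$ for the transition probability labeling the edge $(u,w)$, the recursive definition of $p_{node}$ gives $p_{node}(w) = p_{node}(u)\cdot p(u\to w)$, and since $\sum_{w \text{ child of } u} p(u\to w) = 1$, we have $\sum_{w \text{ child of } u} p_{node}(w) = p_{node}(u)$. Thus, to lower bound the weight of $V(u)$, it suffices to upper bound the weight of the children \emph{not} in $V(u)$.

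Next I would partition the children of $u$ by the number of firing outputs in $Q(w)$: (i) reset children with zero firing outputs, (ii) children with exactly one firing output, and (iii) children with at least two firing outputs, which by definition is $V(u)$. The weight contributed by case (i) is exactly $p_{node}(u)\cdot P_{0,j+1}(Q(u))$. For case (ii), I would further split according to whether the unique firing output has an active input: the WTA sub-case contributes exactly $p_{node}(u)\cdot P_{1,j+1}(Q(u))$, while the sub-case where a firing output corresponds to an inactive input has total weight at most $p_{node}(u)/n^c$. This last bound follows from the no-background-noise assumption together with the earlier observation that any output $y_j$ with $x_j=0$ has $pot(y_j,\cdot) \le \weightS - \BiasOut = -\Omega(\lambda \log n)$ and so fires with probability $\le 1/n^c$ in any single round; union-bounding over the at most $n$ such outputs still gives $\le 1/n^{c-1}$.

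Combining these contributions, the total weight of children outside $V(u)$ is at most
\[
p_{node}(u)\cdot\left(P_{01,j+1}(Q(u)) + 1/n^{c-1}\right) \;\le\; p_{node}(u)\cdot \Theta(1/\log\log n),
\]
where the final inequality is exactly the Case (2.2) hypothesis (the $1/n^{c-1}$ term is absorbed into the $\Theta$). Subtracting from $p_{node}(u)$ yields $p_{node}(V(u)) \ge (1-\Theta(1/\log\log n))\cdot p_{node}(u)$, as required. There is no substantive obstacle here; the only thing to watch is handling the negligible contribution of ``spurious'' single-firing configurations in which the firing output is driven by an inactive input, which is dispatched cleanly by the no-background-noise assumption.
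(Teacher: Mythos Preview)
Your proposal is correct and follows essentially the same approach as the paper's proof, which is a one-line observation that the claim is immediate from the Case~(2.2) hypothesis $P_{01,j}(Q(u)) < \Theta(1/\log\log n)$. You simply spell out the weight-splitting identity and the partition of children that the paper leaves implicit.

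One small caveat: your justification for the ``spurious'' sub-case (a single firing output with $x_j=0$) invokes the specific relation $\weightS - \BiasOut = -\Omega(\lambda\log n)$, which comes from the paper's \emph{upper bound} constructions, not from any property guaranteed for an arbitrary basic WTA network in the lower bound setting. The conclusion you need still holds, but for a different reason: the execution tree is rooted at the reset configuration $Y^0=\vec{0}$, and since all auxiliary neurons are inhibitory, an output with $x_j=0$ and no active self-loop has potential at most $-\BiasOut = -\Omega(\lambda\log n)$ by the no-background-noise assumption, so it does not fire w.h.p.\ in round~1; inductively it never acquires an active self-loop along any branch except with probability $O(1/n^c)$ per round. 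This is the reasoning the paper relies on implicitly (and is also why the paper's own proof of the claim simply ignores this sub-case).
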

\begin{proof}
By the definition of $u$, $P_{01,j}(Q(u)) < \Theta(1/\log\log n)$. Hence, letting $V(u)$ be the children of $u$ that have at least $2$ active outputs in $Q(v)$ (hence $d(u)=1$), we have that $\sum_{w \in V(u)} \geq 1-\Theta(1/\log\log n) \cdot p_{node}(u)$. 
\end{proof}

Starting from a tree of weight $1$, we would like to show that at the end of the process after at most $DH$ iterations, the total weight of the leaf nodes $U_{DH}$ is at least $1/n^2$. 
%
%
%
%
%
%
%
We now use Cl. \ref{cl:type1} and \ref{cl:type2} to prove the lower bound. By Cl. \ref{cl:type1}, when we consider $u \in U_j$ that satisfies either case (1) or case (2.1), we keep $\Theta(1/(\log\log n)^3)$ fraction of the weight but enjoy a large jump of $DC$ layers in the sub-tree $T(u)$. Hence, on average, we keep $\Theta(1/(\log\log n)^3)^{1/DC}$ fraction of the weight of $T(u)$ per layer. 
By Cl. \ref{cl:type2}, in type (2), we keep at least $1-\Theta(1/\log\log n)$ fraction of the weight of $T(u)$ when moving from a node $u$ in layer $i$ to a subset of its children $V(u)$ in layer $i+1$. 
Hence, on average in every iteration, we keep at least 
$$\max\{(c/(\log\log n)^3)^{1/DC}, 1-c/\log\log n\}$$
fraction of the weight of the current forest. Hence, after $DH=DC \cdot \Theta(\log n/\log\log \log n)$ iterations, our total weight of the leaf set $U_{DH}$ is at least
$$(\max\{(c/(\log\log n)^3)^{1/DC}, 1-c/\log\log n\})^{DH}\geq 1/n^2.$$

\begin{figure}[h!]
\centering
\includegraphics[width=0.5\textwidth]{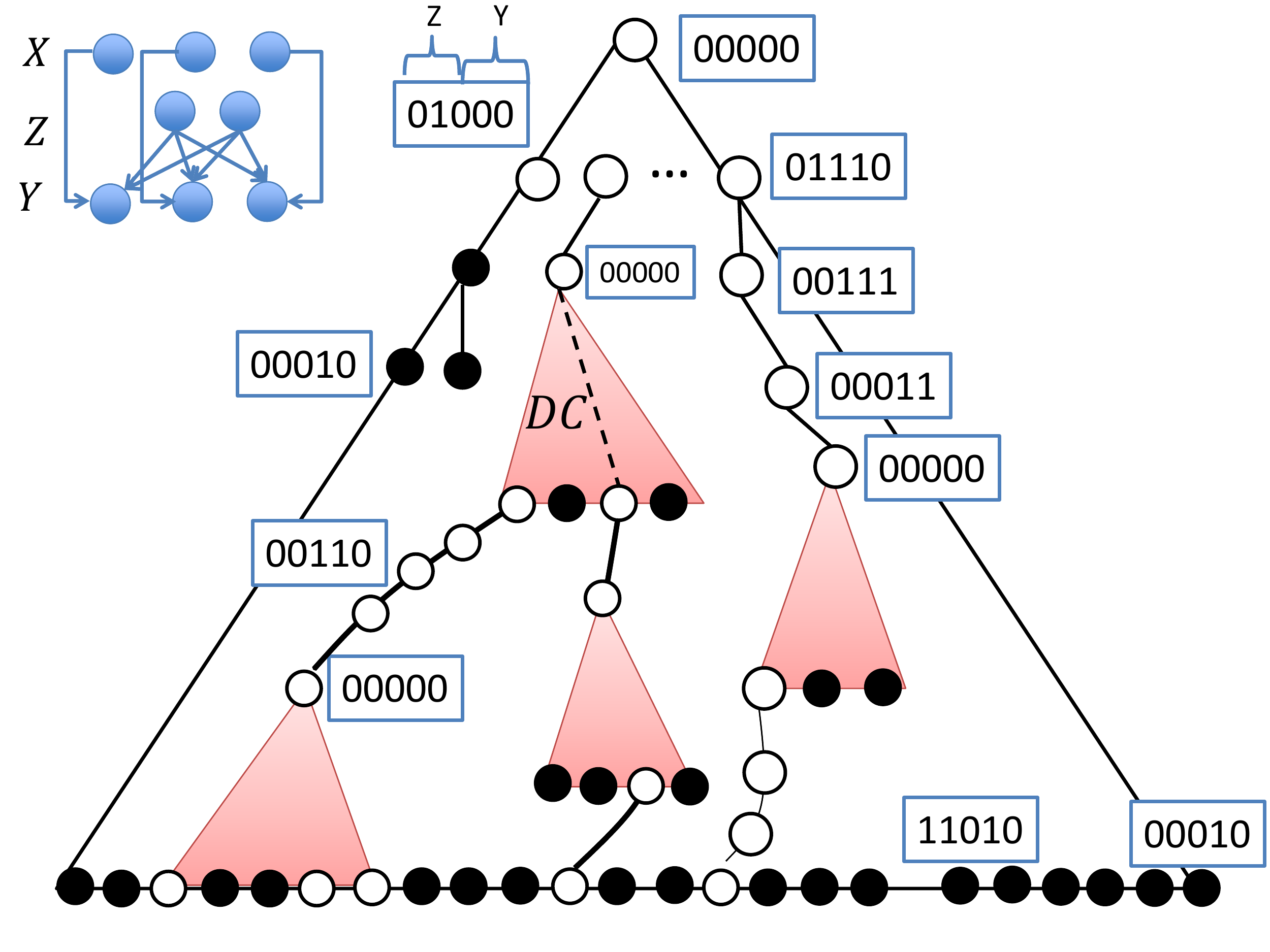}
\caption{The Execution Tree. Shown is a schematic illustration of the Execution Tree for a small network with two inhibitors and three outputs. Every node $u$ in layer $j$ is labeled by a vector of length $5$ describing an optional firing state for the inhibitors and outputs in round $j$. Each node has $2^5$ children -- covering all possible firing behaviors in round $j+1$. The weight nodes are non WTA nodes and the black nodes are the WTA nodes. When arriving a reset node $u$, a large jump is made by considering the leaf nodes of $T(u)$. When arriving a non-WR node, a small jump is made by considering subset of its children.}
\label{fig:simulationtree}
\end{figure}
}

\section{Discussion}\label{sec:discussion}

We hope that this paper is a starting point for further investigation into stochastic spiking networks from an algorithmic perspective,
which investigates fundamental tradeoffs between biological resources and identifies basic building blocks and principles for algorithm design in neural settings.

We focus on a restricted class of three layer networks, in which auxiliary neurons are not interconnected. This models the generally restricted connectivity structure that inhibitory neurons appear to have in biological networks and lets us give both very strong upper bounds and matching lower bounds. Still, it would be interesting to understand the effect of  connections between auxiliary neurons. We have preliminary work showing that some speedups are possible in these more general networks, however obtaining any non-trivial lower bounds would be very interesting.

Studying other important primitives aside from the binary version of WTA that we focus on would also be interesting. We again have preliminary work on \emph{non-binary WTA} in which the network must choose the input with the highest, or near highest firing rate as the winner. There are many other problems to consider.

Our model attempts to be biologically plausible enough to capture high level behavior, yet not be overly complex. However, many modeling assumptions are possible, and we hope that future work explores if changes to the model can lead to significant differences in computational power or algorithmic techniques.
As an example, for simplicity we considered a synchronous model, however, asynchrony seems to be an important part of neural computation which would be valuable to study.
%

Finally, we note that significant theoretical work attempts to understand how neural networks can \emph{learn} through the modification of synapse weights as their endpoints fire more or less frequently \cite{valiant2005memorization,papadimitrou2016cortical}. The most common model for how synapse weights evolve is the \emph{hebbian learning} rule, which is itself the focus of a vast literature. Merging the view of neural networks as executing algorithms given predetermined network parameters with understanding of learning would be very interesting. Can a WTA network `evolve' naturally via simple learning rules? How do fixed network motifs such as WTA circuits interact with more flexible `learning' networks?

\subsubsection*{Acknowledgments}
We are grateful to Mohsen Ghaffari for noting the general upper bound network construction and for many helpful discussions on the lower bound proof. We would also like to thank Nir Shavit, Rati Gelashvili, and Sergio Rajsbaum for insightful discussions.

\clearpage
\bibliography{wta}{}

\newcommand{\etalchar}[1]{$^{#1}$}
\begin{thebibliography}{GURBLGI00}

\bibitem[AHS85]{ackley1985learning}
David~H Ackley, Geoffrey~E Hinton, and Terrence~J Sejnowski.
\newblock A learning algorithm for boltzmann machines.
\newblock {\em Cognitive science}, 9(1):147--169, 1985.

\bibitem[AS94]{allen1994evaluation}
Christina Allen and Charles~F Stevens.
\newblock An evaluation of causes for unreliability of synaptic transmission.
\newblock {\em Proceedings of the National Academy of Sciences},
  91(22):10380--10383, 1994.

\bibitem[ASNN{\etalchar{+}}15]{al2015inherently}
Maruan Al-Shedivat, Rawan Naous, Emre Neftci, Gert Cauwenberghs, and Khaled~N
  Salama.
\newblock Inherently stochastic spiking neurons for probabilistic neural
  computation.
\newblock In {\em 2015 7th International IEEE/EMBS Conference on Neural
  Engineering (NER)}, pages 356--359. IEEE, 2015.

\bibitem[BBNM11]{buesing2011neural}
Lars Buesing, Johannes Bill, Bernhard Nessler, and Wolfgang Maass.
\newblock Neural dynamics as sampling: a model for stochastic computation in
  recurrent networks of spiking neurons.
\newblock {\em PLoS Comput Biol}, 7(11):e1002211, 2011.

\bibitem[BKLP02]{bohte2002error}
Sander~M Bohte, Joost~N Kok, and Han La~Poutre.
\newblock Error-backpropagation in temporally encoded networks of spiking
  neurons.
\newblock {\em Neurocomputing}, 48(1):17--37, 2002.

\bibitem[BRC{\etalchar{+}}07]{brette2007simulation}
Romain Brette, Michelle Rudolph, Ted Carnevale, Michael Hines, David Beeman,
  James~M Bower, Markus Diesmann, Abigail Morrison, Philip~H Goodman,
  Frederick~C Harris~Jr, et~al.
\newblock Simulation of networks of spiking neurons: a review of tools and
  strategies.
\newblock {\em Journal of computational neuroscience}, 23(3):349--398, 2007.

\bibitem[CGL92]{coultrip1992cortical}
Robert Coultrip, Richard Granger, and Gary Lynch.
\newblock A cortical model of winner-take-all competition via lateral
  inhibition.
\newblock {\em Neural networks}, 5(1):47--54, 1992.

\bibitem[DIM97]{dolev1997uniform}
Shlomi Dolev, Amos Israeli, and Shlomo Moran.
\newblock Uniform dynamic self-stabilizing leader election.
\newblock {\em IEEE Transactions on Parallel and Distributed Systems},
  8(4):424--440, 1997.

\bibitem[Dol00]{dolev2000self}
Shlomi Dolev.
\newblock {\em Self-stabilization}.
\newblock MIT press, 2000.

\bibitem[FJ06]{fischer2006self}
Michael Fischer and Hong Jiang.
\newblock Self-stabilizing leader election in networks of finite-state
  anonymous agents.
\newblock In {\em International Conference On Principles Of Distributed
  Systems}, pages 395--409. Springer, 2006.

\bibitem[FSW08]{faisal2008noise}
A~Aldo Faisal, Luc~PJ Selen, and Daniel~M Wolpert.
\newblock Noise in the nervous system.
\newblock {\em Nature reviews neuroscience}, 9(4):292--303, 2008.

\bibitem[GK02]{gerstner2002spiking}
Wulfram Gerstner and Werner~M Kistler.
\newblock {\em Spiking neuron models: Single neurons, populations, plasticity}.
\newblock Cambridge university press, 2002.

\bibitem[GL09]{gupta2009hebbian}
Ankur Gupta and Lyle~N Long.
\newblock Hebbian learning with winner take all for spiking neural networks.
\newblock In {\em 2009 International Joint Conference on Neural Networks},
  pages 1054--1060. IEEE, 2009.

\bibitem[GURBLGI00]{gomez2000gabaergic}
Sonia~M G{\'o}mez-Urquijo, Concepci{\'o}n Reblet, Jos{\'e}~L Bueno-L{\'o}pez,
  and I{\~n}aki Guti{\'e}rrez-Ibarluzea.
\newblock Gabaergic neurons in the rabbit visual cortex: percentage, layer
  distribution and cortical projections.
\newblock {\em Brain research}, 862(1):171--179, 2000.

\bibitem[HJM13]{habenschuss2013stochastic}
Stefan Habenschuss, Zeno Jonke, and Wolfgang Maass.
\newblock Stochastic computations in cortical microcircuit models.
\newblock {\em PLoS Comput Biol}, 9(11):e1003311, 2013.

\bibitem[HT{\etalchar{+}}86]{hopfield1986computing}
John~J Hopfield, David~W Tank, et~al.
\newblock Computing with neural circuits- a model.
\newblock {\em Science}, 233(4764):625--633, 1986.

\bibitem[IK01]{itti2001computational}
Laurent Itti and Christof Koch.
\newblock Computational modelling of visual attention.
\newblock {\em Nature reviews neuroscience}, 2(3):194--203, 2001.

\bibitem[Izh04]{izhikevich2004model}
Eugene~M Izhikevich.
\newblock Which model to use for cortical spiking neurons?
\newblock {\em IEEE transactions on neural networks}, 15(5):1063--1070, 2004.

\bibitem[JHM16]{jonke2016solving}
Zeno Jonke, Stefan Habenschuss, and Wolfgang Maass.
\newblock Solving constraint satisfaction problems with networks of spiking
  neurons.
\newblock {\em Frontiers in neuroscience}, 10, 2016.

\bibitem[KK94]{kaski1994winner}
Samuel Kaski and Teuvo Kohonen.
\newblock Winner-take-all networks for physiological models of competitive
  learning.
\newblock {\em Neural Networks}, 7(6-7):973--984, 1994.

\bibitem[KU87]{koch1987shifts}
Christof Koch and Shimon Ullman.
\newblock Shifts in selective visual attention: towards the underlying neural
  circuitry.
\newblock In {\em Matters of intelligence}, pages 115--141. Springer, 1987.

\bibitem[LIKB99]{lee1999attention}
Dale~K Lee, Laurent Itti, Christof Koch, and Jochen Braun.
\newblock Attention activates winner-take-all competition among visual filters.
\newblock {\em Nature neuroscience}, 2(4):375--381, 1999.

\bibitem[LRMM88]{lazzaro1988winner}
John Lazzaro, Sylvie Ryckebusch, Misha~Anne Mahowald, and Caver~A Mead.
\newblock Winner-take-all networks of o (n) complexity.
\newblock Technical report, DTIC Document, 1988.

\bibitem[Lyn89]{lynch1989hundred}
Nancy Lynch.
\newblock A hundred impossibility proofs for distributed computing.
\newblock In {\em Proceedings of the eighth annual ACM Symposium on Principles
  of distributed computing}, pages 1--28. ACM, 1989.

\bibitem[Lyn96]{lynch1996distributed}
Nancy~A Lynch.
\newblock {\em Distributed algorithms}.
\newblock Morgan Kaufmann, 1996.

\bibitem[Maa96]{maass1996computational}
Wolfgang Maass.
\newblock On the computational power of noisy spiking neurons.
\newblock {\em Advances in neural information processing systems}, pages
  211--217, 1996.

\bibitem[Maa97]{maass1997networks}
Wolfgang Maass.
\newblock Networks of spiking neurons: the third generation of neural network
  models.
\newblock {\em Neural networks}, 10(9):1659--1671, 1997.

\bibitem[Maa99]{maass1999neural}
Wolfgang Maass.
\newblock Neural computation with winner-take-all as the only nonlinear
  operation.
\newblock In {\em NIPS}, pages 293--299. Citeseer, 1999.

\bibitem[Maa00]{maass2000computational}
Wolfgang Maass.
\newblock On the computational power of winner-take-all.
\newblock {\em Neural computation}, 12(11):2519--2535, 2000.

\bibitem[Maa14]{maass2014noise}
Wolfgang Maass.
\newblock Noise as a resource for computation and learning in networks of
  spiking neurons.
\newblock {\em Proceedings of the IEEE}, 102(5):860--880, 2014.

\bibitem[MP69]{minsky1969perceptrons}
Marvin Minsky and Seymour Papert.
\newblock Perceptrons.
\newblock 1969.

\bibitem[Now89]{nowlan1989maximum}
Steven~J Nowlan.
\newblock Maximum likelihood competitive learning.
\newblock In {\em NIPS}, pages 574--582, 1989.

\bibitem[ODL09]{oster2009computation}
Matthias Oster, Rodney Douglas, and Shih-Chii Liu.
\newblock Computation with spikes in a winner-take-all network.
\newblock {\em Neural computation}, 21(9):2437--2465, 2009.

\bibitem[OL06]{oster2006spiking}
Matthias Oster and Shih-Chii Liu.
\newblock Spiking inputs to a winner-take-all network.
\newblock {\em Advances in Neural Information Processing Systems}, 18:1051,
  2006.

\bibitem[PPV16]{papadimitrou2016cortical}
Christos Papadimitrou, Samantha Petti, and Santosh Vempala.
\newblock Cortical computation via iterative constructions.
\newblock {\em arXiv preprint arXiv:1602.08357}, 2016.

\bibitem[PV14]{papadimitriou2014unsupervised}
Christos~H Papadimitriou and Santosh~S Vempala.
\newblock Unsupervised learning through prediction in a model of cortex.
\newblock {\em arXiv preprint arXiv:1412.7955}, 2014.

\bibitem[RB15]{roux2015tasks}
Lisa Roux and Gy{\"o}rgy Buzs{\'a}ki.
\newblock Tasks for inhibitory interneurons in intact brain circuits.
\newblock {\em Neuropharmacology}, 88:10--23, 2015.

\bibitem[RCMO12]{rossello2012hardware}
Josep~L Rossello, Vincent Canals, Antoni Morro, and Antoni Oliver.
\newblock Hardware implementation of stochastic spiking neural networks.
\newblock {\em International journal of neural systems}, 22(04):1250014, 2012.

\bibitem[RFLHL11]{rudy2011three}
Bernardo Rudy, Gordon Fishell, SooHyun Lee, and Jens Hjerling-Leffler.
\newblock Three groups of interneurons account for nearly 100\% of neocortical
  gabaergic neurons.
\newblock {\em Developmental neurobiology}, 71(1):45--61, 2011.

\bibitem[Seu03]{seung2003learning}
H~Sebastian Seung.
\newblock Learning in spiking neural networks by reinforcement of stochastic
  synaptic transmission.
\newblock {\em Neuron}, 40(6):1063--1073, 2003.

\bibitem[SN94]{shadlen1994noise}
Michael~N Shadlen and William~T Newsome.
\newblock Noise, neural codes and cortical organization.
\newblock {\em Current opinion in neurobiology}, 4(4):569--579, 1994.

\bibitem[SR99]{sabatini1999timing}
BL~Sabatini and WG~Regehr.
\newblock Timing of synaptic transmission.
\newblock {\em Annual Review of Physiology}, 61(1):521--542, 1999.

\bibitem[Tho90]{thorpe1990spike}
Simon~J Thorpe.
\newblock Spike arrival times: A highly efficient coding scheme for neural
  networks.
\newblock {\em Parallel processing in neural systems}, pages 91--94, 1990.

\bibitem[Val00a]{valiant2000circuits}
Leslie~G Valiant.
\newblock {\em Circuits of the Mind}.
\newblock Oxford University Press on Demand, 2000.

\bibitem[Val00b]{valiant2000neuroidal}
Leslie~G Valiant.
\newblock A neuroidal architecture for cognitive computation.
\newblock {\em Journal of the ACM (JACM)}, 47(5):854--882, 2000.

\bibitem[Val05]{valiant2005memorization}
Leslie~G Valiant.
\newblock Memorization and association on a realistic neural model.
\newblock {\em Neural computation}, 17(3):527--555, 2005.

\bibitem[WS03]{wang2003k}
Wei Wang and Jean-Jacques~E Slotine.
\newblock K-winners-take-all computation with neural oscillators.
\newblock {\em arXiv preprint q-bio/0401001}, 2003.

\bibitem[YG89]{yuille1989winner}
Alan~L Yuille and Norberto~M Grzywacz.
\newblock A winner-take-all mechanism based on presynaptic inhibition feedback.
\newblock {\em Neural Computation}, 1(3):334--347, 1989.

\end{thebibliography}
\bibliographystyle{alpha}

\clearpage
\appendix

\section{Additional Discussion}
\subsection{Related Work}
\label{subsec:additionalrw}
\paragraph{Spiking Neural Network:} A vast literature studies computation in stochastic spiking neural networks. Work includes detailed models aimed at matching biological observations \cite{gerstner2002spiking,izhikevich2004model}, large scale simulation in hardware and software \cite{brette2007simulation,rossello2012hardware}, attempts to understand general properties of computation in these networks \cite{buesing2011neural}, the design of specific algorithms \cite{bohte2002error,seung2003learning}, and theoretical investigation of computational power \cite{maass1996computational,habenschuss2013stochastic}. For instance, it has been shown that deterministic spiking networks can simulate Turing machines and that stochastic spiking networks can implement MCMC sampling \cite{buesing2011neural}.
As is popular in the biologically-inspired algorithms literature, spiking networks have been used as heuristic `stochastic search' solvers for NP-hard constraint satisfaction problems, such as Sudoku and TSP \cite{jonke2016solving}.

Our model can be seen as a discrete version of the continuous model discussed in by Maass in \cite{maass2014noise} or as a noisy version of the deterministic model in \cite{maass1997networks}. In addition to being stochastic, in comparison to the model of \cite{maass1997networks}, our response latency $\Delta$ is constant for all connections in the network. Additionally, we have just a single round memory -- each neuron's membrane potential is affected just by spikes of neighboring neurons in the same or immediately preceding round of computation.  We note that if connections are allowed between auxiliary neurons, a longer memory can be easily be implemented within our general model.
 
%
\paragraph{Self-Stabilization in Distributed Computing:}
The notion of self-stabilization goes back to Dijkstra in 1973. A self-stabilizing system can automatically recover following the occurrence of transient faults. The goal in this area is to design systems that converge to a desired behavior from any arbitrary starting point \cite{dolev2000self,lynch1996distributed}. Among the tremendously broad work, perhaps the most relevant to this work is self-stabilizing algorithms for leader election \cite{dolev1997uniform,fischer2006self}.

In a stochastic neural network, self-stabilization is a necessity. Both changes to the given input as well as random deviations of the system from a converged state require the network to re-converge. Hence, we insure that all our networks converge to WTA from any initial network configuration and are self-stabilizing. This property does not hold in many previously studied WTA implementations for spiking networks \cite{oster2009computation}.

\paragraph{Valiant's Neuroidal Model:} Valiant considers a model of neural computation in which abstract neurons (which he calls \emph{neuroids}) are connected via a random network of synapses \cite{valiant2000circuits}. He discusses how these neurons can learn representations of real world objects whose perception stimulates the network in certain ways. As in our model, neurons fire in response to a membrane potential given by a weighted sum of firing neighbors. Differently, synapse weights evolve in response to increased firing of their end points, which allows \emph{learning} to occur within the network. This learning ability is the primary focus of Valiant's work and of follow up work on the model. For example, recently, \cite{papadimitriou2014unsupervised} extended understanding of how reasonably complex learning and pattern matching tasks can be performed in this model. 

Our work deviates is somewhat more `algorithmic' than the work of Valiant, focusing how basic takes can be computed using a set of neurons with a fixed set of synapses and bias values. We do not consider how, for example, our WTA networks could form within a larger neural circuit through learning of appropriate synapse weights. 
Following previous work \cite{maass1999neural} we think of WTA networks as fundamental primitives of neural circuits on top of which high level algorithms, such as learning algorithms, can be built. 

\subsection{Biological Motivation for Network Dynamics}
\label{append:bio}
\APPENDBIO

\section{Missing Proofs and Auxiliary Claims}\label{sec:missing}
Throughout, we make use of the following Corollary of the Chernoff bound.
\begin{theorem}[Simple Corollary of Chernoff Bound]
\label{thm:simplecor}
Suppose $X_1$, $X_2$, \dots, $X_\ell \in [0,1]$ are independent random variables. Let $X=\sum_{i=1}^{\ell} X_i$ and $\mu = \mathbb{E}[X]$. If $\mu \geq 5 \log n$, then w.h.p. $X \in \mu \pm \sqrt{5\mu\log n}$, and if $\mu < 5 \log n$, then w.h.p. $X \leq \mu +5\log n$.
\end{theorem}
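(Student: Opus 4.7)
The plan is to derive both statements as direct consequences of the standard multiplicative Chernoff bound for sums of independent random variables in $[0,1]$. Specifically, I would invoke the textbook inequalities
\[
\Pr[X \geq (1+\delta)\mu] \leq \exp\!\left(-\delta^2 \mu/3\right) \quad \text{for } \delta \in (0,1],
\]
\[
\Pr[X \leq (1-\delta)\mu] \leq \exp\!\left(-\delta^2 \mu/2\right) \quad \text{for } \delta \in (0,1],
\]
\[
\Pr[X \geq (1+\delta)\mu] \leq \exp\!\left(-\delta \mu/3\right) \quad \text{for } \delta > 1,
\]
which apply to any sum $X = \sum_{i=1}^\ell X_i$ of independent $[0,1]$-valued random variables with mean $\mu$. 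The two cases in the corollary then correspond to two different scalings of $\delta$.

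For the first case, assume $\mu \geq 5 \log n$ and set $\delta = \sqrt{5 \log n / \mu}$. Since $\mu \geq 5 \log n$, we have $\delta \in (0,1]$, so the first two bounds apply. A direct substitution gives $\delta^2 \mu = 5 \log n$ and $\delta \mu = \sqrt{5 \mu \log n}$, so
\[
\Pr\!\left[\,|X - \mu| \geq \sqrt{5 \mu \log n}\,\right] \leq \exp(-5 \log n/3) + \exp(-5 \log n / 2) \leq 2/n^{5/3},
\]
which is $1/n^{\Omega(1)}$ and hence qualifies as ``with high probability'' in the paper's sense.

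For the second case, assume $\mu < 5 \log n$ and choose $\delta = 5 \log n / \mu > 1$, so that $(1+\delta)\mu = \mu + 5 \log n$. Now the large-deviation form applies, giving
\[
\Pr[X \geq \mu + 5 \log n] \leq \exp(-\delta \mu /3) = \exp(-5 \log n /3) \leq 1/n^{5/3},
\]
which again is $1/n^{\Omega(1)}$. No lower tail is needed in this case since the claim is a one-sided bound.

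There is no real obstacle here; the only step requiring any thought is simply recognizing that the choice of $\delta$ has to switch between the ``small-deviation'' ($\delta \leq 1$) and ``large-deviation'' ($\delta > 1$) regimes of the Chernoff bound depending on whether $\mu \geq 5 \log n$ or $\mu < 5 \log n$, and that this crossover is exactly why the statement is written as two cases joined by the threshold $\mu = 5 \log n$. The final step is to note that the failure probability $2/n^{5/3}$ can be made to beat any fixed $1/n^c$ by replacing the constant $5$ with a larger constant in both the hypothesis and the deviation; since the paper only requires w.h.p.\ bounds for unspecified constants, the version stated here suffices.
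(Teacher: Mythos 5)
The paper states this corollary without proof (it appears in the appendix as a standard fact), so there is no ``paper's own proof'' to compare against. Your proof is correct and is the natural way to derive both cases: you fix $\delta = \sqrt{5\log n / \mu}$ in the small-deviation regime when $\mu \geq 5\log n$ (so $\delta \leq 1$ and $\delta\mu = \sqrt{5\mu\log n}$), and switch to $\delta = 5\log n/\mu > 1$ in the large-deviation regime otherwise (so $(1+\delta)\mu = \mu + 5\log n$); in both regimes the exponent works out to a constant multiple of $\log n$, giving $1/n^{\Omega(1)}$ failure probability. You also correctly identify why the threshold $\mu = 5\log n$ appears: it is exactly the crossover point $\delta = 1$ between the two standard Chernoff regimes. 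The only caveat, which you already note, is that the resulting exponent ($5/3$ in natural log) is a fixed constant, so if one wanted ``w.h.p.'' to mean $1/n^c$ for an arbitrarily large $c$ one would scale up the constant $5$; under the paper's definition of w.h.p.\ (some positive constant $c$), what you have is sufficient as stated.
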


\subsection{WTA with Two Inhibitors}
\label{append:twoi}
\APPENDTWO
\APPENDLOWEBOUNDTWO

\subsection{WTA with One Inhibitor}\label{appenx:oneinhib}
\dnsparagraph{One Inhibitor Lower Bound}
\begin{theorem}
\label{thm:lbonenotenough}
For any basic WTA network $\Net$ with $\NumInh = 1$ inhibitors, $\mathcal{ET}(\Net) = \Omega(n^c)$.
\end{theorem}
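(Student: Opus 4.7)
The plan is to argue that a single inhibitor cannot simultaneously perform the two opposing roles of a stability inhibitor (preventing losers from firing once WTA is reached) and a convergence inhibitor (suppressing too-active outputs from an arbitrary initial configuration), in a simpler analogue of Claim~\ref{cl:twoinh}. I would case split on $q_1$, the probability that the unique inhibitor $z$ fires in sub-round $(t,3)$ given that exactly one output fires in sub-round $(t,2)$, and exhibit in each case an input/initial-configuration pair for which $\Net$ requires polynomially many rounds to reach WTA.

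\textbf{Case 1: $q_1 \ge 1/2$.} The WTA maintenance requirement that a sole winner fires in the next round with probability $\ge 1 - 1/n^c$, together with $q_1 \ge 1/2$, forces the conditional firing probability of a winner-given-$z$-fires to satisfy $1/(1+e^{-(\weightX + \weightS + \weightZ - \BiasOut)/\lambda}) \ge 1 - 2/n^c$, which yields $\weightX + \weightS + \weightZ - \BiasOut = \Omega(\lambda \log n)$. Since all outputs share identical parameters and $\weightZ \le 0$, in any round in which every output with $x_j=1$ fires, each such output has potential at least $\weightX + \weightS + \weightZ - \BiasOut$ in the next round, regardless of whether $z$ fires. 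Taking $\Input = \vec{1}$ and $\Output^0 = \vec{1}$, each output therefore fires in round $t+1$ with probability $\ge 1-1/n^c$, and a union bound plus induction over rounds shows $\Output^t = \vec{1}$ for all $t \le n^{c-2}$ \whp, so WTA is not reached during this window.

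\textbf{Case 2: $q_1 < 1/2$.} The WTA maintenance requirement that a non-winner does not fire with probability $\ge 1-1/n^c$, combined with $1 - q_1 > 1/2$, yields $1/(1+e^{-(\weightX - \BiasOut)/\lambda}) \le 2/n^c$, i.e., $\weightX - \BiasOut \le -\Omega(\lambda \log n)$: a single firing input is far too weak to excite its corresponding output on its own. Taking $\Input$ with exactly one firing input $x_1$ and $\Output^0 = \vec{0}$, the no-background-noise assumption gives $z^0 = 0$ \whp; then in sub-round $(1,2)$ every output has potential at most $\weightX - \BiasOut$ and so fails to fire with probability $\ge 1 - 2/n^c$. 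Induction over rounds shows that no output fires for $\Omega(n^{c-2})$ rounds \whp, so the would-be winner $y_1$ never fires and WTA cannot be achieved.

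Combining the two cases yields $\ExpectedT(\Net) = \Omega(n^{c'})$ for a suitable positive constant $c'$. The main technical delicacy I expect is the quantitative translation from the sigmoid-valued stability conditions to the weight inequalities driving each case: because $q_1$ can a priori sit anywhere in $[0,1]$, one must verify that thresholding at $1/2$ (and the resulting $O(1/n^c)$ vs.\ $1 - O(1/n^c)$ firing-probability bounds) is strong enough to push either sub-case past the $\Omega(\lambda \log n)$ margin needed for $n^{\Omega(1)}$-round persistence of the bad configuration by a union bound.
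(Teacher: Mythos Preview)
Your proposal is correct. Both your argument and the paper's hinge on the same tension (a lone inhibitor cannot simultaneously enforce stability and convergence) and the same two hard configurations ($\Input=\vec{1},\Output^0=\vec{1}$ and single-input with $\Output^0=\vec{0}$), but the organization differs. The paper argues by contradiction in a chain: assuming $\mathcal{ET}(\Net)=O(n^c)$ it first shows the single-input $\vec{0}$ start forces $p_0=\Omega(1/n^c)$ (Claim~\ref{cl:notfire}), then uses WTA maintenance together with this to force the inhibitor to fire w.h.p.\ whenever $\ge 1$ output fires (Claim~\ref{cl:atleastone}), then uses the $\vec{1}$ start to force $1-p_{out}=\Omega(1/n^c)$ (Claim~\ref{cl:convergence}), and finally observes these conflict with the winner persisting. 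You instead threshold on $q_1$ and in each half of the dichotomy derive one weight inequality directly from maintenance and exhibit one bad start. Your route is a bit more direct, since it bypasses the intermediate step of pinning down $q_1$ from $p_0$; the paper's route makes the three necessary properties (activity, stability, convergence) explicit as separate claims, which ties in better with its narrative about stability vs.\ convergence inhibitors. One presentational point: you invoke ``the WTA maintenance requirement'' as a given, so it would be cleaner to frame each case as ``either maintenance fails (so $\mathcal{ET}(\Net)=\infty$) or the weight inequality holds and the exhibited start is slow.''
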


We fix any constant $c$ and assume by way of contradiction that there is a network $\Net$ which converges to WTA in $O(n^c)$ rounds in expectation. Let $z$ denote the single inhibitor in $\Net$.
We first argue that $\Net$ must be at least somewhat active -- given no firing activity from the outputs $\Output$ and the inhibitor $z$, each output connected to an active input should fire with reasonably high probability.
\begin{claim}[Sufficiently Active Network]\label{cl:notfire}
If $z^t = 0$ then each output $y_j$ with $x_j = 1$ and $y_j^t = 0$ fires in round $t+1$ with probability $\Omega(1/n^c)$. 
\end{claim}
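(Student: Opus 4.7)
The plan is a straightforward contrapositive argument. First, I would observe that \eqref{eq:potentialOut} pins down the firing probability in the claim exactly: when $x_j = 1$, $y_j^t = 0$, and $z^t = 0$, the potential of $y_j$ in sub-round $(t+1,2)$ is $pot(y_j, t+1) = \weightX - \BiasOut$, so the firing probability is a single constant $p := 1/(1 + e^{-(\weightX - \BiasOut)/\lambda})$ depending only on the network parameters. The claim therefore reduces to showing $p = \Omega(1/n^c)$, which I will prove by assuming $p$ is too small and exhibiting an input/initial-configuration pair on which $\Net$ cannot converge in $O(n^c)$ expected rounds.

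The bad pair I would use is the singleton input $\Input$ with $x_1 = 1$ and $x_j = 0$ for every $j \neq 1$, together with the all-silent initial output configuration $\Output^0 = \vec{0}$. For this $\Input$ the unique WTA-satisfying output configuration has $y_1 = 1$, so the time to converge to WTA is at least the first round $T_1$ in which $y_1$ fires. The core intermediate step is a monotonicity observation: whenever $y_1^t = 0$, the firing probability of $y_1$ in round $t+1$ is at most $p$, regardless of $z^t$ or any other activity in round $t$. This uses only that $y_1^t = 0$ kills the self-loop term $y_1^t \weightS$ in \eqref{eq:potentialOut}, and that $\weightZ \le 0$, so any firing of the single inhibitor can only push the potential strictly below $\weightX - \BiasOut$.

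Given the monotonicity, a union bound yields $\Pr[T_1 \le t] \le t p$, so $\Pr[T_1 \ge t] \ge 1/2$ for every $t \le 1/(2p)$, and hence $\E[T_1] = \Omega(1/p)$. Combining $\mathcal{ET}(\Net) \ge \E[T_1] = \Omega(1/p)$ with the assumed $\mathcal{ET}(\Net) = O(n^c)$ forces $p = \Omega(1/n^c)$, establishing the claim. I do not anticipate a real obstacle: the argument is entirely elementary, and its main appeal is that we never need to reason about the joint evolution of $z$ and the non-singleton outputs, since any spurious firings of those neurons can only delay $y_1$'s first firing and so can only strengthen the lower bound on $\E[T_1]$.
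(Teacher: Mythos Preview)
Your proof is correct and follows essentially the same approach as the paper: both use the singleton input with $\Output^0=\vec 0$, observe that the firing probability of the lone relevant output is maximized when $z^t=0$ (your monotonicity step is the paper's $p_1\le p_0$), and conclude that a too-small $p$ would force $\Omega(1/p)$ rounds before the first firing, contradicting $\mathcal{ET}(\Net)=O(n^c)$. Your write-up is slightly more explicit about computing the potential and the union-bound/expectation step, but the argument is the same.
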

\begin{proof}
Let $\Input$ be an input in which exactly one input $x_j$ fires and let $Y^0 = \vec{0}$. The time for $\Net$ to converge to WTA is lower bounded by the time required for $y_j$ to fire at least once.

Let $p_0$ be the probability that $y_j$ fires in round $t+1$ if $y_j^t = 0$ and $z^t = 0$ and let $p_1$ be the probability that $y_j$ fires in round $t+1$ if $y_j^t = 0$ and $z^t = 1$. $p_1 \le p_0$, so as long as $y_j$ does not fire in round $t$, it fires with probability at most $p_0$ in round $t+1$. If $p_0 \le c_1/n^c$ for some constant $c_1$ then starting from $C_0$, with constant probability, $y_j$ will not fire for $\Omega(n^c)$ consecutive rounds. By our assumption that $\Net$ converges to WTA in $O(n^c)$ rounds in expectation, we have $p_0 = \Omega(1/n^c)$.
\end{proof}

We next show that the inhibitor $z$ must fire in round $t$ \whp whenever at least one output fires, in order to maintain stability once WTA has been reached.
\begin{claim}[Stability]
\label{cl:atleastone}
For any configuration $\mathcal{C}^t$ of $\Net$, if at least one output neuron fires in round $t$ (i.e. $\norm{\Output^t}_1\ge 1$), $z$ fires in round $t$ \whp
\end{claim}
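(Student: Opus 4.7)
}
My plan is to argue by contradiction, leveraging the previous claim (Claim \ref{cl:notfire}) about the single inhibitor network being sufficiently ``active'' when $z$ is silent. The first simplification is to reduce the hypothesis ``at least one output fires'' to ``exactly one output fires''. By equation \eqref{eq:potentialInh}, the membrane potential of $z$ in sub-round $(t,3)$ equals $k_t \cdot \weightY - b(z)$, where $k_t = \norm{\Output^t}_1$ and $\weightY \ge 0$ since all outputs are excitatory. Hence the firing probability of $z$ is a monotone non-decreasing function of $k_t$, and it suffices to establish the claim in the extreme case $k_t = 1$.

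Suppose for contradiction that there exists a constant $c$ and a configuration with exactly one firing output in which $z$ fails to fire with probability at least $1/n^c$. Fix an input $\Input$ with $\norm{\Input}_1 \ge 2$ (i.e., at least two firing inputs $x_j$ and $x_{j'}$), and consider the evolution of $\Net$ starting from a WTA steady-state configuration in which $y_j$ is the unique firing output. In each subsequent round, by our contradiction hypothesis, $z$ fails to fire with probability at least $1/n^c$. Conditioned on $z^t = 0$, Claim \ref{cl:notfire} guarantees that the non-winning output $y_{j'}$ (which has $x_{j'} = 1$ and $y_{j'}^t = 0$) fires in round $t+1$ with probability $\Omega(1/n^{c_1})$ for some constant $c_1$. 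Hence, unconditionally, in each round the WTA configuration is broken (some non-winner fires) with probability at least $\Omega(1/n^{c+c_1})$.

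Iterating this over the polynomial window of rounds in which WTA must be preserved (per our definition of convergence, for $n^{c_2}$ rounds with $c_2$ a sufficiently large constant of our choice), the probability that WTA is \emph{not} maintained throughout the window tends to $1$. This contradicts the assumption that $\Net$ converges to WTA in $O(n^c)$ rounds in expectation, since such a network must, with constant probability, reach a WTA configuration that persists through the polynomial maintenance window. The main obstacle, which this approach neatly sidesteps, is that one might a priori worry about configurations with many firing outputs for which $z$ somehow does not fire; the monotonicity observation in the first step rules this out without needing a separate argument. The remainder of the proof is a clean contradiction by combining the ``sufficient activity'' guarantee of Claim \ref{cl:notfire} with the stability requirement inherent in the definition of WTA convergence.
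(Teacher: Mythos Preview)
Your proposal is correct and follows essentially the same approach as the paper: reduce to the case $k_t=1$ by monotonicity of $z$'s potential in the number of firing outputs, then use Claim~\ref{cl:notfire} together with the WTA maintenance requirement to bound $\Pr[z^t=0]$. The only cosmetic differences are that the paper handles the monotonicity step at the end rather than the beginning, and derives the bound on $\Pr[z^t=0]$ from a single-round inequality (the per-round failure probability of a non-winner must be $\le 1/n^{c_2}$) rather than by explicitly iterating over the maintenance window as you do.
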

\begin{proof}
Consider input $\Input = \vec{1}$. 
Let $t$ be a round in which WTA is satisfied (exactly one output $y_j$ fires while no other outputs fire).
Using the notation of Claim \ref{cl:notfire}, the probability that a non-firing output fires in round $t+1$ is:
\begin{align*}
\Pr[z^{t} = 1 | \Output^{t}] \cdot p_1 + \Pr[z^{t} = 0 | \Output^{t}] \cdot p_0.
\end{align*}
By Claim \ref{cl:notfire} we have $p_0 \ge c_1/n^c$ for some constant $c_1$. 
Since $\Net$ converges to WTA it must be that \whp in round $t+1$, $y_j$ continues firing and no other output fires. So we have, for some large constant $c_2$:
\begin{align*}
\Pr[z^{t} = 1 | \Output^t] \cdot p_1 + \Pr[z^{t} = 0 | \Output^t] \cdot p_0 &\le 1/n^{c_2}\\
\Pr[z^{t} = 0 | \Output^t]  \cdot c_1/n^c &\le 1/n^{c_2}\\
\Pr[z^{t} = 0 | \Output^t]  &= O(1/n^{c_2-c})
\end{align*}
which gives the claim as long as $c < c_2$ since exactly one output fires in $Y^{t}$. The probability that $z$ fires when $> 1$ output fires is at least as large due to the excitatory nature of the outputs.
\end{proof}

Finally, by way of contradiction, we show that when $z$ fires, any output must stop firing with reasonably high probability. Otherwise, starting with multiple firing outputs, it will take too long to converge to WTA. As we will see this convergence requirement conflicts with the stability requirement of Claim \ref{cl:atleastone} since it means that the winning output will stop firing with reasonably high probability after convergence to WTA.

\begin{claim}[Convergence]
\label{cl:convergence}
If $z^t = 1$ then $y_j$ with $y_j^t = 1$ and $x_j = 1$ does not fire in round $t+1$ with probability $\Omega(1/n^c)$. 
\end{claim}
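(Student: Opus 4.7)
The plan is to proceed by contradiction, supposing $q := \Pr[y_j^{t+1} = 0 \mid y_j^t = 1, x_j = 1, z^t = 1] < c_1/n^c$ for a constant $c_1$ to be chosen small enough relative to the implicit constant hidden in the contradiction hypothesis $\mathcal{ET}(\Net) = O(n^c)$. Note that we cannot simply combine Claim~\ref{cl:atleastone} with a stability-of-WTA argument here, since that line of reasoning is precisely what will combine with the present claim to finish the proof of Theorem~\ref{thm:lbonenotenough}; it would be circular. Instead, I will exhibit a specific input for which the network provably sits for $\omega(n^c)$ rounds.

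To isolate the effect of $q$, I would consider a ``two-competitor'' configuration: take $\Input$ with $x_1 = x_2 = 1$ and $x_j = 0$ for all $j \ge 3$, together with initial output configuration $\Output^0$ having $y_1^0 = y_2^0 = 1$ and $y_j^0 = 0$ for $j \ge 3$. First I would argue that every inactive output $y_j$ ($j \ge 3$) stays silent \whp in every round of an $O(n^c)$-round execution: since $x_j = 0$ and $y_j^{t-1} = 0$, the only nonzero contribution to $pot(y_j,t)$ is the (non-positive) inhibition from $z$, so the no-background-noise assumption $\BiasOut = \Omega(\lambda \log n)$ makes the firing probability $n^{-\Omega(1)}$ regardless of $z$, and a union bound over $O(n^c)$ rounds handles the entire execution.

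Conditional on this high-probability event, the state collapses to the two-dimensional Markov chain $(y_1^t, y_2^t)$. In any round $t$ where $(y_1^t,y_2^t) = (1,1)$, Claim~\ref{cl:atleastone} gives $z^t = 1$ \whp, and conditioned on $z^t = 1$ the outputs $y_1, y_2$ fire independently in round $t+1$ (their potentials are determined and their sigmoids are independent). Hence the one-step transition probabilities out of $(1,1)$ satisfy
\begin{align*}
\Pr[(y_1^{t+1},y_2^{t+1}) = (1,1)] &\ge (1-q)^2 - n^{-\Omega(1)}, \\
\Pr[\text{WTA in round } t+1] &\le 2q(1-q) \le 2c_1/n^c, \\
\Pr[(y_1^{t+1},y_2^{t+1}) = (0,0)] &\le q^2 \le c_1^2/n^{2c}.
\end{align*}
So the total probability of leaving $(1,1)$ in a single round is at most $p_{\text{leave}} \le 3c_1/n^c + n^{-\Omega(1)}$. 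Since $(1,1)$ is not a WTA configuration, reaching WTA requires first leaving $(1,1)$; the hitting time of $\{(y_1,y_2)\neq(1,1)\}$ stochastically dominates a geometric random variable with parameter $p_{\text{leave}}$, so its expectation is at least $n^c/(4c_1)$. Therefore $\mathcal{ET}(\Net) = \Omega(n^c/c_1)$. Choosing $c_1$ smaller than the reciprocal of the constant hidden in the hypothesis $\mathcal{ET}(\Net) = O(n^c)$ produces a contradiction, establishing $q = \Omega(1/n^c)$.

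The main obstacle, though it is essentially bookkeeping, is the careful accounting of the three layers of \whp events simultaneously over an $\Omega(n^c)$-round window: (i) that $z$ fires when $\ge 1$ output fires, (ii) that outputs with $x_j = 0$ never spontaneously fire, and (iii) the independence conditioning on $z^t$. Each of these is $n^{-\Omega(1)}$ per round, so a union bound over $O(n^c)$ rounds gives failure probability $o(1)$ provided the constants in the no-background-noise and Claim~\ref{cl:atleastone} high-probability bounds are chosen sufficiently large. I would also note the mild symmetry point that the claim is stated for a generic firing output $y_j$: since all outputs have identical parameters in a basic WTA network, it suffices to bound $q$ for $j=1$ in the two-competitor instance and transfer the bound by symmetry.
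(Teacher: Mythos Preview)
Your proposal is correct and follows essentially the same contradiction argument as the paper: assume the non-firing probability $q$ is $o(1/n^c)$, exhibit a starting configuration with $\ge 2$ firing active outputs, and show it persists for $\Omega(1/q)=\omega(n^c)$ rounds in expectation. The only difference is that the paper takes $\Input=\vec{1}$, $\Output^0=\vec{1}$ rather than your two-competitor instance, which slightly streamlines the argument by removing the need to separately handle outputs with $x_j=0$.
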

\begin{proof}
Let $p$ denote the probability that an output which corresponds to a firing input and which fires in round $t$ does not fire in round $t+1$ given that $z^{t} = 1$. We want to show that $p = \Omega(1/n^c)$.

Let $\Input = \vec{1}$ and let $t$ be any round in which at least two outputs fire. 
By Claim \ref{cl:atleastone}, $z^{t} = 1$ \whp and at least two outputs fire in round $1$ with probability $(1-p)^2 \ge 1-2p$. If we start from $Y^0 = \vec{1}$, then \whp at least two outputs will fire in $\Theta \left (\frac{1}{p} \right)$ consecutive rounds. By assumption $\Net$ converges to WTA within $O(n^c)$ rounds in expectation so we must have $p = \Omega(1/n^c)$.
%
%
\end{proof}

Putting it all together, consider an execution that satisfies WTA in round $t$ with exactly one output $y_j$ firing. Then, by Claim \ref{cl:atleastone}, $z$ fires in round $t$ \whp Thus, by Claim \ref{cl:convergence}, $y_j$ stops firing in round $t+1$ with probability $\Omega(1/n^c)$, in contradiction to the fact that the network must eventually converge to WTA and have $y_j$ fire for $n^{c_1}$ consecutive rounds for some large constant $c_1$.
We briefly note that the above lower bound can be matched with a trivial single inhibitor algorithm.
\begin{observation}
There is basic network $\Net$ with $\alpha =1$ inhibitors with $\mathcal{ET}(\Net) = O(n^c)$.
\end{observation}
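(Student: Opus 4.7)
The plan is to adapt the two-inhibitor construction of Appendix~\ref{append:twoi} by collapsing the stability and convergence inhibitors into a single neuron $z$ that fires \whp whenever at least one output fires, and then tuning the output weights so that a single firing output has just enough of an excitatory advantage via its self-loop to dominate the induced random walk on the number of firing outputs. With $\lambda = 1$, set $b(z) = C\log n$ and $\weightY = 2C\log n$ for a sufficiently large constant $C$; a standard sigmoid calculation gives that $z$ fires \whp exactly when at least one output fires (and respects the no-background-noise assumption). For the outputs, set $\weightS = (2c+12)\log n$, $\weightX = \BiasOut = 3(c+10)\log n$, and $\weightZ = -(c+10)\log n$, where $c$ is the stability constant from the definition of convergence. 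Routine calculation gives four regimes when $z^t = 1$: an output with $x_j = 1,\ y_j^t = 1$ fires with probability $\ge 1 - 1/n^{c+2}$; with $x_j = 1,\ y_j^t = 0$ it fires with probability $\le 1/n^{c+10}$; and with $x_j = 0$ it fires with probability $\le 1/n^{c+18}$. When $z^t = 0$, an output with $x_j = 1,\ y_j^t = 0$ fires with probability exactly $1/2$.

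Stability would mirror Claim~\ref{cl:stability}: after any round in which WTA holds with winner $y_j$, $z$ fires \whp in sub-round $(t,3)$, after which $y_j$ continues firing and all other outputs remain silent, each \whp. A union bound over $n^c$ subsequent rounds then yields that WTA is preserved with probability $\ge 1 - 1/n$, fulfilling the stability portion of the definition of $\mathcal{ET}(\Net)$.

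The bulk of the work is in convergence. Let $k_t := |\{j : x_j = 1 \text{ and } y_j^t = 1\}|$ be the number of \emph{active} firing outputs. First I would observe that any output with $x_j = 0$ that fires in $\Output^0$ is silenced \whp within one round (since its potential when $z$ fires is at most $-(2c+28)\log n$), so after round $1$ only active outputs can fire. If $\Output^0 = \vec{0}$ and $\norm{\Input}_1 \ge 1$, $z$ does not fire in round $0$ and each active input's output fires independently with probability $1/2$ in round $1$, giving $k_1 \ge 1$ with constant probability. Once $k_t \ge 1$, $z$ fires each round \whp and $k_t$ performs a two-sided walk with per-output drop probability $p_{\text{drop}} = 1/n^{c+2}$ and per-non-firing-output rise probability $p_{\text{rise}} \le 1/n^{c+10}$. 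Over a window of $T = n^{c+3}$ rounds, a union bound over all output-round pairs shows that with probability $\ge 1 - T \cdot n \cdot p_{\text{rise}} \ge 1 - 1/n^{6}$ no rise occurs, so $k_t$ is effectively monotonically non-increasing on this window. Conditioned on this event, the expected time for $k_t$ to drop from $k$ to $k-1$ is $\Theta(1/(k \cdot p_{\text{drop}})) = \Theta(n^{c+2}/k)$, summing via $\sum_{k=2}^{n} n^{c+2}/k = O(n^{c+2}\log n)$ to reach $k_t = 2$. Finally, the $2 \to 1$ transition occurs with probability $2 p_{\text{drop}}(1 - p_{\text{drop}})$ per round, dominating the overshoot $2 \to 0$ probability $p_{\text{drop}}^2$, so conditional on leaving state $2$ we hit WTA with probability $1 - O(1/n^{c+2})$; if overshoot does occur we simply restart from $\vec{0}$ and repeat. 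Combining the restart probability, the monotone phase and the final $2 \to 1$ transition gives $\mathcal{ET}(\Net) = O(n^{c+3})$, proving the observation with constant $c' = c+3$.

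The main subtlety is controlling the two-sided walk on $k_t$: unlike in the two-inhibitor algorithm, where $z_c$ drives multiplicative decreases, here the only progress mechanism is the tiny dropout probability and spurious rises could in principle reinsert outputs. The proposed $p_{\text{rise}} \ll p_{\text{drop}}$ gap is what rescues the analysis by making rises vanish in a polynomial window, reducing matters to essentially monotone decay plus the familiar $2 \to 1$ versus $2 \to 0$ asymmetry of Claim~\ref{lem:live}. A minor secondary subtlety is showing that outputs firing on non-firing inputs in the initial configuration decay immediately; this follows from the much stronger effective inhibition such outputs experience due to the combined effect of $\weightZ$ and the missing $\weightX$ term.
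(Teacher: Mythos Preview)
Your construction and analysis are correct and follow the same idea as the paper's sketch: the single inhibitor fires \whp whenever at least one output is active, and the output weights are tuned so that each firing active output drops out with a small per-round probability $\Theta(1/n^{c'})$, giving polynomial convergence time (via the harmonic sum $\sum_{k\ge 2} n^{c'}/k$) and polynomial-length stability for the eventual winner. Note that the paper's sketch, as written, says the output ``fires in round $t+1$ with probability $1/n^{c+1}$'' when $z^t=y_j^t=1$, which taken literally would destroy stability; this is evidently a slip for the drop-out probability, and your $p_{\mathrm{drop}}=1/n^{c+2}$ is exactly the intended quantity. Your version supplies the explicit weights, the rise/drop separation that rules out re-entry of silenced outputs over a polynomial window, and the $2\to1$ versus $2\to0$ comparison, all of which the paper leaves implicit, but the underlying construction is the same.
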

\begin{proof}
The single inhibitor $z$ simply fires \whp in round $t$ whenever $\ge 1$ outputs fire in round $t$. The weights are set such that when $z^t = 1$ and $y_j^t = 1$, $y_j$ fires in round $t+1$ with probability $1/n^{c+1}$. If $z$ does not fire, any $y_j$ with $x_j =1$ fires \whp 

It is not hard to see that starting with any input, we will reach a round satisfying WTA within $O(n^c)$ rounds in expectation and after this round is reached, WTA will be maintained for $O(n^{c-1})$ additional rounds in expectation (and so $O(n^{c-2})$ w.h.p.).
\end{proof}

\subsection{WTA with $O(\log n)$ Inhibitors}\label{appenx:logn}
\begin{proof}[\textbf{Proof of Theorem \ref{theorem:logn} ($O(\log n)$ Inhibitor Upper Bound}]
Recall that we assume w.l.o.g. that $1/\lambda = c_1\log n$ for some constant $c_1$. We set $\weightX = 3$, $\weightS = 2$, and $\BiasOut = 3$. In this way, exactly as in the two inhibitor network analyzed in Section \ref{append:twoi}, any output $y_j$ with $x_j = 0$ will have $pot(y_j,t) \le \weightS - \BiasOut = -1$ in every round $t$ and so will not fire \whp in any round. 

Our network has $\alpha = \lceil \log n \rceil$ inhibitors. The first
is a stability inhibitor $z_s$, which behaves exactly as the stability inhibitor in the two inhibitor network analyzed in Section \ref{append:twoi}. $\weightY_s = 1$, $b(z_s) = 0.5$ and $\weightZ_s = -1$. $z_s$ fires \whp in sub-round $(t,3)$ if $\ge 1$ output fires in sub-round $(t,2)$ and does not fire \whp if no output fires.
We also have $\alpha-1$ convergence inhibitors $z_1,...,z_{\alpha-1}$. For each $z_i$, $b(z_i) = 2^{i} - .5$ and $\weightY_i = 1$. Therefore, $z_i$ fires \whp in round $t$ whenever $\ge 2^{i}$ outputs fire in the round. It does not fire \whp if $< 2^{i}$ outputs fire. We set the inhibitor weight from $z_1$ to each output to be $\weightZ_1 = -1$. For each $i \in 2,...,\alpha-1$ we set $\weightZ_i = -\lambda \cdot \log_2(e)$.

We can see that the stability Claim \ref{cl:stability} holds just as it does in the two inhibitor network analyzed in Section \ref{append:twoi}. Specifically, if just a single output $y_j$ with $x_j=1$ fires in some round $t$, \whp $z_s$ will fire while the convergence inhibitors will all not fire. So we will have:
\begin{align*}
pot(y_j,t+1) = \weightZ_s + \weightX + 1 \cdot \weightS - \BiasOut = -1 + 3 + 2 -3 = 1
\end{align*}
so $y_j$ fires \whp in round t+1. At the same time for $j' \neq j$, since $y_{j'}$ does not fire in round $t$:
\begin{align*}
pot(y_{j'},t+1) = \weightZ_s + \weightX + 0 \cdot \weightS - \BiasOut = -1 + 3 + 0 -3 = -1
\end{align*}
so $y_{j'}$ will not fire in round $t+1$. So, once a single $y_j$ with $x_j = 1$ fires in some round $t$, $\Net$ will converge to WTA \whp
We now show that $\Net$ reaches such a round in $O(1)$ expected time. 

Consider any round $t > 0$ in which $k_t \ge 2$ outputs fire. We can assume that all these outputs corresponding to firing inputs since as discussed, outputs corresponding to non-firing inputs do not fire \whp in any round. For some $i$ we have $k_t \in [2^i,2^{i+1})$ and so \whp in round $t$, $z_s,z_1,...,z_i$ fire while all other inhibitors do not fire (note that $\alpha -1 = \lceil \log n \rceil -1$ and so even if $n$ outputs fire, all inhibitors fire).
We thus have, \whp for any active output $y_j$ with $y_j^t = 1$ and $x_j = 1$:
\begin{align*}
pot(y_j,t+1) &= \weightS + \weightX - \BiasOut + \weightZ_s + \weightZ_1 + \sum_{j=2}^{i} \weightZ_j
\\&= 2 +3 -3 -1 - 1 - (i-1) \lambda = (i-1) \lambda \cdot \log_2(e).
\end{align*}
So $y_j$ fires in round $t+1$ with probability: 
\begin{align*}
p(y_j,t+1) = \frac{1}{1+ e^{(i-1) \lambda \log_2(e)/\lambda}} = \frac{1}{1+2^{i-1}}
\end{align*}
Since $k_t \in [2^i, 2^{i+1})$, we have $1 \le \frac{k_t}{1+2^{i-1}} \le 4$ and so can bound the probability that exactly one output that was active in round $t$ fires in round $t+1$ as:
\begin{align*}
k_t \cdot \frac{1}{1+2^{i-1}} \cdot \left (1-  \frac{1}{1+2^{i-1}} \right )^{k_t -1} &\ge \left (1-  \frac{1}{1+2^{i-1}} \right )^{k_t -1}\\
&\ge \left (1-  \frac{1}{1+2^{i-1}} \right )^{4(1+2^{i-1})}\\
&\ge \frac{1}{4^4}.
\end{align*}

So, with constant probability exactly one output that fired in round $t$ also fires in round $t+1$. Any output that did not fire in round $t$ has potential $\le \weightX - \BiasOut + \weightZ_s + \weightZ_\ell = -2$ and so does not fire with high probability. So, with constant probability, exactly one output $y_j$ with $x_j = 1$ fires, and so $\Net$ converges to WTA.

We conclude by noting that, by the arguments of Claim \ref{lem:live} for our two inhibitor network, with constant probability, starting with any $\Output^0$ we in fact have a round with $k_t \ge 1$ firing outputs all with active inputs within constant rounds. So from any starting configuration, we converge to WTA with constant probability in $O(1)$ rounds. Repeating this constant probability argument gives both $\mathcal{ET}(\Net) = O(1)$ and $\mathcal{HT}(\Net) = O(\log n)$.
\end{proof}

\dnsparagraph{$\Omega(\log n)$ High Probability Runtime Lower Bound}
\begin{theorem}
\label{thm:lowerbound_symmetric}
Any basic WTA network $\Net$, with any number of inhibitors, has $\mathcal{HT}(\Net) = \Omega(\log n)$.
\end{theorem}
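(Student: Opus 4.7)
The plan is to exploit the full permutation symmetry of the basic WTA network. I will choose the worst-case input $\Input = \vec{1}$ and the symmetric initial configuration $\Output^0 = \vec{0}$; both are invariant under every permutation $\sigma$ of the $n$ output indices, and since all outputs have identical parameters and each inhibitor's weight $\weightY_j$ is the same to every output, the whole Markov chain is invariant under this permutation group. Consequently, at each round $t$ the joint distribution of $(\Output^t,\Inh^t)$ is symmetric in the output coordinates, and the distribution of $\Inh^t$ given $\Output^t$ depends only on $k := \|\Output^t\|_1$.

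Conditioned on an arbitrary realization of $(\Output^t,\Inh^t)$, the outputs fire independently in sub-round $(t{+}1,2)$: each of the $k$ \emph{active} outputs fires with a common probability $p_f$, and each of the $n-k$ \emph{inactive} outputs fires with a common probability $p_n$. The potentials of an active vs.\ inactive output differ by exactly the self-loop contribution $\weightS \ge 0$, so $p_f \ge p_n$ in every round. I will argue that whenever $k \ne 1$, the conditional probability that \emph{exactly} one output fires in round $t+1$ is bounded by a universal constant $c<1$.

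The key technical lemma I would prove is: for any integers $0 \le k \le n$ with $k \ne 1$ and any $p_f \ge p_n$ in $[0,1]$, if $S$ is the sum of $k$ independent $\mathrm{Bern}(p_f)$ and $n-k$ independent $\mathrm{Bern}(p_n)$ indicators, then $\Pr[S = 1] \le c$ for a constant $c < 1$ depending on nothing. I would prove this by a short case split on $\mu := kp_f + (n-k)p_n$: if $\mu \le 1/3$, then $\Pr[S = 1] \le \E[S] = \mu \le 1/3$; if $\mu \ge 3$, then the Chernoff lower tail gives $\Pr[S \le 1] \le e^{-\Omega(1)}$; and in the bounded middle range one has $\Pr[S = 0] = \prod_i (1-p_i) \ge e^{-O(\mu)} = \Omega(1)$, so $\Pr[S = 1] \le 1 - \Pr[S = 0] \le 1 - \Omega(1)$. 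The constraint $p_f \ge p_n$ is what rules out the degenerate ``one coordinate equals $1$ deterministically, all others $0$ deterministically'' configuration that would otherwise allow $\Pr[S=1]=1$.

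Given the lemma, whenever WTA has not yet been satisfied in round $t$ (equivalently $\|\Output^t\|_1 \ne 1$, since all inputs fire so any unique firing output is a valid winner), the conditional probability that it is satisfied in round $t+1$ is at most $c$. Iterating over rounds, $\Pr[\text{WTA not satisfied in any of rounds } 1,\ldots,T] \ge (1-c)^T$, so convergence with failure probability at most $1/n^{c'}$ forces $T = \Omega(\log n)$, proving Theorem~\ref{thm:lowerbound_symmetric}. I expect the main obstacle to be the clean execution of the $\Pr[S=1] \le 1 - \Omega(1)$ inequality -- the case split is elementary but must carefully invoke $p_f \ge p_n$ so that no asymmetric corner case escapes; the symmetry-plus-iteration step that turns this per-round bound into the $\Omega(\log n)$ high-probability lower bound is then routine.
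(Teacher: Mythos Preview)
Your approach is essentially the same as the paper's: take $\Input=\vec{1}$, condition on the full configuration $C^{t-1}$, observe that the outputs split into two groups firing with probabilities $p_f \ge p_n$, bound $\Pr[\text{exactly one fires}]$ by a universal constant $c<1$ whenever $k\ne 1$, and iterate to get $(1-c)^T \ge 1/n^{c'}$ forces $T=\Omega(\log n)$.

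Two remarks. First, the permutation-symmetry framing is unnecessary: once you condition on an arbitrary realization of $(\Output^t,\Inh^t)$, the two-group structure with common $p_f,p_n$ follows directly from the fact that all outputs share identical parameters, regardless of whether the distribution of $C^t$ was itself symmetric. The paper simply conditions on any non-WTA $C^{t-1}$ and never invokes symmetry of the starting state.

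Second, your case split on $\mu$ has a loose spot in the middle range: the bound $\Pr[S=0]=\prod_i(1-p_i)\ge e^{-O(\mu)}$ fails when $p_f$ is close to $1$, since then $1-p_f\to 0$. You are right that $p_f\ge p_n$ rescues you, but not through $\Pr[S=0]$: rather, when $k\ge 2$ and $p_f$ is near $1$, $\Pr[S\ge 2]\ge p_f^2$ is near $1$, so $\Pr[S=1]$ is small for a different reason. The paper sidesteps this by a cleaner group-wise argument: it directly uses that $\max_p\, m\,p(1-p)^{m-1}\le 1/2$ for $m\ge 2$, applies this to each group separately (and handles $|O_0|=1$ by splitting on whether $p_n\le 1/2$), obtaining $c=3/4$. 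Your route works once patched, but the paper's decomposition is shorter and avoids the Chernoff/$\mu$ machinery.
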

\begin{proof}
We show that any network $\Net$ requires $\Omega(\log n)$ rounds before a round $t$ in which WTA is satisfied \whp This immediately gives our desired lower bound on convergence to WTA.

Consider input $X = \vec{1}$ (so any output is a valid winner) and  
any round $t$ such that WTA has not been satisfied for any $t' < t$. That is, in no round $t'$ does exactly one output $y_j$ fire. 
Let $W_t$ be the event that in round $t$ exactly one output fires and hence WTA is satisfied. We claim that $\Pr[W_t = 1~|~C^{t-1}] \le c$ for any configuration $C^{t-1}$ of $\Net$ in round $t-1$ and some universal constant $c$. That is, no matter the network configuration in round $t-1$, WTA will only be achieved with constant probability in the next round. 
Hence, as long as the initial output configuration $Y^0$ is one in which WTA is not satisfied, for $t = O(\log n)$, with probability at least $(1-c)^{t} = \Omega(1/n^{c'})$, for some constant $c'$, WTA will not be satisfied in any even round up to $t$. This gives that $\mathcal{HT}(\Net) = \Omega(\log n).$ 
There are two cases to work through: 
\paragraph{Network Reset:} $Y^{t-1} = \vec{0}$. In this case, no output fired in round $t-1$. Since all outputs are identical w.r.t their edge weights and bias values, conditioned on the behavior $\Inh^{t-1}$ of the inhibitors in round $t-1$, all outputs will fire independently with some fixed probability $p$ in round $t$. For any $p$ and any $n \ge 2$, the probability that exactly $1$ will fire in round $t$ is: $$\Pr[W_t = 1~|~C^{t-1}]  = n \cdot p(1-p)^{n-1} \le \frac{1}{2}.$$

\paragraph{No Reset:} $\norm{Y^{t-1}}_1 \ge 2$ -- i.e. there are at least 2 firing outputs in round $t-1$. Let $O_1$ be the set of firing outputs in round $t-1$ and $O_0$ be the set of non-firing outputs. Conditioned on $\Inh^{t-1}$, any output in $O_1$ fires independently with some probability $p_1$ in round $t$ and any output in $O_0$ fires with some probability $p_0$. Further, $p_0 \le p_1$ since the only difference in membrane potential between the neurons in $O_0$ and $O_1$ will be whether their excitatory self loop is active.

For $a \in \{0,1\}$ let $V_a$ be the event that exactly $1$ output from $O_a$ fires in round $t$. Clearly, $W_t \subseteq V_1 \cup V_0$.
For any $p_1$, $\Pr[V_1~|~C^{t-1}] = |O_1| \cdot p_1(1-p_1)^{|O_1|-1} \le 1/2$ since we have not reached WTA and so $|O_1| \ge 2$. 
If $|O_0| = 0$, then vacuously, $\Pr[V_0 ~| ~C^{t-1}] = 0$ and hence $\Pr[W_t~|~C^{t-1}] \le 1/2$.
Alternatively, If $|O_0| \ge 2$ then we also have $\Pr[V_0~|~C^{t-1}] \le 1/2$ and, since all outputs fire independently conditioned on $C^{t-1}$,
$$\Pr[W_t~|~C^{t-1}] \le 1- \Pr[\neg (V_1 \cup V_0)] \le 1- (1-1/2)^2 = 3/4.$$

Finally, if $|O_0| = 1$ either $p_0 \le 1/2$, in which case $\Pr[V_0~|~C^{t-1}] \le 1/2$ and we again have $\Pr[W_t~|~C^{t-1}] \le 3/4$ or $p_0 \ge 1/2$ in which case $p_1 \ge p_0 \ge 1/2$, and the probability that at least two outputs from $O_1$ fire is at least $1/4$ and hence WTA is achieved with probability at most $3/4$.
\end{proof}

\subsection{WTA with $\alpha \ge 2$ Inhibitors}\label{appenx:alpha}
\begin{proof}[\textbf{Proof of Theorem \ref{thm:upper_symmetricalpha}}]
We first describe the network construction in detail. 
As in our previous networks, we have a stability inhibitor $z_s$ that fires \whp whenever $\ge 1$ outputs fire in round $t$. This inhibitor ensures that in round $t+1$ \whp only outputs that fired in round $t$ (and hence have an active self loop) will fire in round $t+1$.

We set the excitatory input to output connection weight to $\weightX = 3$, the excitatory output self-loop to $\weightS = 2$, and the output bias to $\BiasOut = 3$.
For the stability inhibitor we set the excitatory output to inhibitor weight $\weightY_s 1$, $b(z_s) = .5$, and $\weightZ_s = -1$  just as we did in the two inhibitor algorithm. 

We have $\theta$ groups each containing $\lceil (\log n)^{1/\theta} \rceil $ convergence inhibitors, $Z_1,Z_2,...,Z_{\theta}$ where we denote $Z_i = \{z_{i,1},z_{i,2},...,z_{i,\lceil (\log n)^{1/\theta} \rceil} \}$.
We set $\weightY_i = 1$ for all $i \in Z_1, Z_2,...,Z_\theta$ and $b(z_{i,j}) = 2^{jd_{i}} - .5$.
In this way, when $k_{t} \in \left [2^{jd_{i}},2^{(j+1)d_i} \right)$ \whp $z_s,Z_1,...,Z_{i-1},z_{i,1},...,z_{i,j}$ all fire while the remaining inhibitors do not.
We set $\weightZ_{i,j}$ such that 
\begin{align}\label{bucketPots}
pot_{i,j} = \weightX + \weightS + \weightZ_s - \BiasOut + \sum_{\{(k,l) | k < i \text{ or } l \le j\}} \weightZ_{k,l}
\end{align} 
satisfies:
\begin{align}\label{bucketProbs}
p_{i,j} = \frac{1}{1+e^{-pot_{i,j}/\lambda}} = \frac{c_1}{2^{jd_i}}
\end{align}
for some small constant $c_1$.
For simplicity of presentation, we do not explicitly calculate out these weights. However, it is clear that choosing correct weights $p_{i,j}$ decreases as most inhibitors fire and the sigmoid function is continuous and decreases monotonically as $pot_i$ decreases.
We are now ready to analyze the network behavior in detail.

\paragraph{No Firing Inputs.}
As in the two inhibitor network,
any $y_j$ with $x_j = 0$, has maximum potential is $\weightS - \BiasOut = -1$ (even when no inhibitors fire) so and will not fire \whp outside of the initial configuration $Y^0$. ($p(y_j,t) \le \frac{1}{1+e^{1/\lambda}} \le 1/n^c$ for any $t$ since $\lambda = 1/c_1\log n$).
If $\Input = \vec{0}$, this implies that a valid WTA state in which no outputs fire will be converged to \whp trivially. 
We now focus on the case when $\norm{\Input}_1 \ge 1$. 

\paragraph{Maintaining WTA (Stability).} If just a single output $y_j$ corresponding to an active input ($x_j = 1)$ fires in round $t$ then \whp by Claim \ref{cl:stability} in Appendix \ref{append:twoi}, $\Net$ converges to WTA. This is because \whp just $z_s$ will fire in round $t$ and $y_j$ has potential $$pot(y_j,t+1) = (1 \cdot \weightZ_s) + (0 \cdot \weightZ_\ell) + (1 \cdot \weightS) + \weightX - \BiasOut = -1 + 2 + 3 - 3 = 1.$$
So $y_j$ fires with probability $\frac{1}{1+e^{c_1\log n}} \ge 1-1/n^c$ in round $t+1$. In contrast, for any $j' \neq j$, $y_{j'}$ does not fire in round $t$ so has
$$pot(y_{j'},t+1) \le (1 \cdot \weightZ_s) + (0 \cdot \weightZ_\ell) + (0 \cdot \weightS) + \weightX - \BiasOut = -1 + 3 - 3 = -1.$$
Therefore $y_j'$ fires with probability $\le 1/n^c$ in round $t+1$ so WTA is satisfied with output $y_j$ firing in round $t+1$ \whp

\paragraph{Converging to WTA.} It now just remains to show that with constant probability, within $O(\theta)$ rounds, there is at least one round in which exactly one output $y_j$ with $x_j^t =1$ fires. By the stability argument above once such a round occurs, $\Net$ will converge to WTA \whp

By the arguments of the convergence Claim \ref{lem:live} for the two inhibitor network, with constant probability, starting with any $\Output^0$ we in fact have a round with $k_t \ge 1$ firing outputs all with active inputs within constant rounds. If $k_t = 1$ then $\Net$ converges to WTA and we are done. So it suffices to consider the case when $k_t \ge 2$.

If $k_t \in \left [2^{jd_i},2^{(j+1)d_i} \right )$ then \whp $z_s,Z_1,...,Z_{i-1},z_{i,1},...,z_{i,j}$ fire while the other inhibitors do not and so in round $t+1$ any active output that fired in round $t$ fires with probability $p_{i,j}$. 
So we have $E [k_{t+1}] \in [1, c_12^{d_i})$, and, so with at least constant probability by a Markov bound
$k_{t+1} < 2^{d_i}$ if we set $c_1$ to a small constant.

 Additionally,
in any round with $k_t \ge 2$ conditioning on the high probability event that the correct inhibitors fire, $$\Pr[k_{t+1} = 1] = k_t \cdot p_{i,j} (1-p_{i,j})^{k_t -1}$$ and:
\begin{align*}
\Pr[k_{t+1} = 0] = (1-p_{i,j})^{k_t } &= \Pr[k_{t+1} = 1] \cdot \frac{(1-p_{i,j})}{k_t p_{i,j}}\\
&\le \Pr[k_{t+1} = 1] \cdot \frac{1}{2^{jd_i} \cdot c1/2^{jd_i}} \\
&\le \frac{1}{c_1} \Pr[k_{t+1} = 1].
\end{align*}

So, the probability of having exactly one output fire and hence converging to WTA is within a constant factor of the probability or having $0$ outputs fire and `reseting' the network.
So overall with constant probability, we reach such a round with $k_t =1$ within just $O(\theta)$ rounds.
Iterating this argument gives the expected and high probability runtime bounds of Theorem \ref{thm:upper_symmetricalpha}.
\end{proof}

\begin{proof}[\textbf{Proof of Theorem \ref{thm:upper_symmetrict}}]
Again we have a stability inhibitor $z_s$ that fires \whp in sub-round $(t,3)$ whenever $\ge 1$ outputs fire in sub-round $(t,2)$. We also have a `base level' convergence inhibitor that fires \whp whenever $\ge 2$ outputs fire. When just $z_s$ and $z_\ell$ fire in round $t$, any output (with an active input) that fired in round $t$ fires with probability $1/2$ in round $t+1$.

We then employ $\alpha -2$ additional convergence inhibitors $z_1,...z_{\alpha-2}$. For $i \in 1,...,\alpha-2$ let 
\begin{align*}
d_i = \left (\log n \right )^{i/(\alpha-1)}.
\end{align*}
Letting $k_t$ be the number of outputs that fire in round $t$, $z_i$ fires \whp in round $t$ whenever $ k_t \ge 2^{d_i}$.
The synapse weights from the inhibitors to the outputs are chosen such that, when $k_t \in \left [2^{d_i}, 2^{d_{i+1}} \right )$, and hence $z_1,...,z_i$ each active output (i.e. each $y_j$ with $y_j^t = 1$ and $x_j = 1$) fires with probability:
\begin{align*}
p_i = \frac{c \log n}{d_i} = \frac{c \log n}{\left (\log n \right )^{i/(\alpha-1)}}
\end{align*}
 in round $t+1$. This probability is enough to ensure that within few rounds, we will have $< 2^{d_i}$ active outputs. Specifically,
since $k_t \in \left [2^{d_i}, 2^{d_{i+1}} \right )$, for 
\begin{align*}
r = \frac{\log k_t}{ \log 1/p_i} \le \frac{(\log  n)^{(i+1)/(\alpha-1)}}{(\log n)^{i/(\alpha-1)} - \log(c\log n) } = O \left ( (\log n)^{1/(\alpha-1)} \right )
\end{align*}
with high probability, there will be a round $r' = O(r)$ with 
$k_{t+r'} \le 2^{d_i}$. 
At the same time, $p_i$ is large enough that \whp we will not overshoot WTA and have $0$ firing outputs in round $t+r'$. Even if $k_t = 2^{d_i}$ then we have $k_t \cdot p_i = c\log n$ and so, for large enough $c$, with high probability, by a Chernoff bound (Theorem \ref{thm:simplecor}) at least $O(\log n)$ outputs fire in round $t+1$.

Overall, within $O \left ( (\alpha-2) (\log n)^{1/(\alpha-1)}  \right )$ rounds, the number of active outputs falls within $[2, 2^{d_1}]$ \whp Once $k_t$ is in this range, just $z_s$ and $z_1$ fire \whp so our network is essentially identical to the two inhibitor network described in the previous section and analyzed in detail in Appendix \ref{append:twoi}.
We thus reach WTA with constant probability in $ \Theta ( \log 2^{d_1} ) = \Theta \left ( (\log n)^{1/(\alpha-1)} \right )$ additional rounds, giving our final runtime bound of $O \left (\alpha (\log n)^{1/(\alpha-1)}  \right )$.

We now formalize the above arguments.
Following our earlier constructions, we set the excitatory input to output connection weight to $\weightX = 3$, the excitatory output self-loop to $\weightS = 2$, and the output bias to $\BiasOut = 3$.
Set the excitatory output to inhibitor weights $\weightY_s = \weightY_\ell = 1$, $b(z_s) = .5$, $b(z_\ell) = 1.5$, and $\weightZ_\ell = \weightZ_s = -1$  just as we did in the two inhibitor algorithm. 

For the additional convergence inhibitors, set $\weightY_i = 1$ for all $i \in 1,...,\alpha-2$ and $b(z_i) = 2^{d_i} - .5$.
In this way, when $k_{t} < 2^{d_1}$, \whp just $z_s$ and $z_1$ fire, and each active output in round $t$ has potential 
$$pot(y_j,t+1) = \weightX + \weightS + \weightZ_s + \weightZ_\ell - \BiasOut = 3 + 2 -1 - 1 -3 = 0$$ 
and so fires with probability $p_1 = 1/2$ in round $t+1$.
We set $\weightZ_i$ such that 
\begin{align}\label{bucketPots}
pot_i = \weightX + \weightS + \weightZ_s + \weightZ_\ell - \BiasOut + \sum_{j=1}^i \weightZ_j
\end{align} 
satisfies:
\begin{align}\label{bucketProbs}
p_i = \frac{1}{1+e^{-pot_i/\lambda}} = \frac{c\log n}{2^{d_i}}.
\end{align}

As in the proof of Theorem \ref{thm:upper_symmetricalpha}, we do not explicitly calculate out these weights. Roughly, $\weightZ_i \approx \Theta(\frac{\lambda \log \log n}{\alpha-1})$ such that when $i$ inhibitors fire $p_i \approx \frac{1}{e^{-\Theta(\frac{i\lambda \log \log n}{\alpha-1})}} \approx \frac{c \log n}{2^{d_i}}$. It is clear that choosing correct weights is possible as $1/2 > p_1 > ... > p_{\alpha-1}$ and the sigmoid function is continuous and decreases monotonically as $pot_i$ decreases.

By identical arguments to those in the proof of Theorem \ref{thm:upper_symmetricalpha}, we converge to WTA in constant rounds \whp if there are no firing inputs or if a single output with a firing input fires in a round. Hence it just remains to show that with constant probability, within $O(\alpha (\log n)^{1/(\alpha-1)})$ rounds, there is at least one round in which exactly one output $y_j$ with $x_j^t =1$ fires.

Again, by the arguments of the convergence Claim \ref{lem:live} for the two inhibitor network, with constant probability, starting with any $\Output^0$ we in fact have a round with $k_t \ge 1$ firing outputs all with active inputs within constant rounds. If $k_t = 1$ then $\Net$ converges to WTA and we are done. So it suffices to consider the case when $k_t \ge 2$.
In this case, as discussed if $k_t \in \left [2,2^{d_1} \right )$ then \whp just $z_s$ and $z_\ell$ fire, and so each active output has potential
$$pot(y_j,t+1) = (1 \cdot \weightZ_s) + (1 \cdot \weightZ_\ell) + (1 \cdot \weightS) + \weightX - \BiasOut = -1 -1 + 2 + 3 - 3 = 0$$
and fires with probability $1/2$ in round $t+1$. 
All inactive outputs, which did not fire in round $t$, do not have an active self loop and hence have $pot(y_j,t) = -2$ and don't fire in round $t+1$ \whp (as discussed, all outputs with $x_j = 0$ also do not fire \whp)

Conditioning on this event, 
 with probability $1/2$, $k_{t+1} \le k_t/2$ and by the arguments in Claim \ref{lem:live}, we converge to WTA with constant probability within $O(k_t) = O(d_1) = O \left ((\log n)^{1/(\alpha-1)} \right)$ rounds.

If $k_t \in \left [2^{d_i},2^{d_{i+1}} \right)$ for some $i \in 1,...,\alpha-2$ then as discussed, \whp $z_s,z_\ell, z_1,...,z_i$ all fire in round $t$ while all other inhibitors do not fire. We thus have
\begin{align*}
\E[k_{t+1}] \ge 2^{d_i} \cdot p_i = \frac{2^{(\log n)^{i/(\alpha-1)}} \cdot c \log n}{2^{(\log n )^{i/(\alpha-1)}}} = c \log n
\end{align*}

By a Chernoff bound (Theorem \ref{thm:simplecor}), \whp $k_{t+1}$ falls within a constant multiplicative factor of its expection. Thus, \whp we still have $k_{t+1} \ge 2$. At the same time, \whp $k_{t+1} \le c_1 k_t \cdot p_i$ for some constant $c_1$. So overall, within $r = \frac{\log k_t}{ \log 1/p_i} = O \left ( (\log n)^{1/(\alpha-1)} \right )$ rounds, \whp $k_{t+r} < 2^{d_i}$.
Within $\alpha -2$ epochs of $O \left ( (\log n)^{1/(\alpha-1)} \right )$ rounds we thus have $k_t \in \left [2,2^{d_1} \right )$ \whp and then reach WTA withing $O \left ( (\log n)^{1/(\alpha-1)} \right )$ additional rounds with constant probability. 

Iterating this constant proability argument gives the expected and high probability runtime bounds of Theorem \ref{thm:upper_symmetrict}.
\end{proof}

\subsection{Missing Proofs for Main Lower Bound (Theorem \ref{thm:lbzaconst})}\label{appenx:lb}

\subsubsection{Inhibitors are Nearly Deterministic for Most Density Classes}\label{append:det}
\APPENDSAFTEYNOFIRE

\APPENDRNOFIRE

\APPENDDETL

\subsubsection{Detailed Description of the Prediction Process}
\label{Append:LBEDetailed}
\APPENDDetLBE

%
%
%
\paragraph{Monotonicity property of basic WTA networks.}
We show that the WTA dynamic is monotone so long as there is at least one firing output. Intuitively, we show that all basic WTA networks pick a single winner by monotonically decreasing the number of firing outputs until just a single output is firing. The number of firing outputs only ever increases if the network `overshoots' the WTA state and has a round in which no outputs fire.
\begin{lemma}
\label{lem:monotone} For any basic WTA network $\Net$, as long as the number of firing outputs is more than one, their number is monotone non-increasing. In particular, if at least one output fires in round $t$, \whp, an output that did not fire in that round, will not fire again in round $t+1$. 
\end{lemma}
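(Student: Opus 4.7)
The plan is to reduce the lemma to the WTA-maintenance guarantee built into the convergence definition, via stochastic monotonicity of firing probabilities in $k = \norm{\Output^t}_1$. Since the potential of each inhibitor, $pot(z_i,t) = k\weightY_i - b(z_i)$, is non-decreasing in $k$ (as $\weightY_i \ge 0$), I would couple the inhibitors across different values of $k$ by assigning each $z_i$ an independent uniform coin $U_i \in [0,1]$ and letting $z_i$ fire iff $U_i \le p(z_i,t)$. Under this coupling, the set of inhibitors firing in sub-round $(t,3)$ when $k$ outputs fire in sub-round $(t,2)$ is a superset of the set firing when exactly one output fires.

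Next I would push this monotonicity through to the outputs. For any $y_j$ with $y_j^t = 0$, the potential $pot(y_j, t+1) = x_j \weightX + \sum_i z_i^t \weightZ_i - \BiasOut$ has no self-loop contribution, and since $\weightZ_i \le 0$ it is non-increasing in the set of firing inhibitors; the sigmoid firing probability $p(y_j,t+1)$ is non-increasing for the same reason. Composing with the inhibitor coupling yields that $\Pr[\,y_j^{t+1} = 1 \mid y_j^t = 0,\ \norm{\Output^t}_1 = k\,]$ is non-increasing in $k \ge 1$, so it suffices to prove the lemma in the base case $k=1$.

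For the base case let $y_w$ be the unique firing output in round $t$. If $x_j = 0$, the no-background-noise assumption $\BiasOut = \Omega(\lambda \log n)$ forces $pot(y_j, t+1) \le -\Omega(\lambda \log n)$, and hence firing probability at most $1/n^c$, regardless of the inhibitor pattern. If $x_j = 1$, pick any input $\Input'$ with $x_w = x_j = 1$; then $\Output^t$ is a valid WTA configuration for $\Input'$, and by the Markov property the single-round transition probabilities from this configuration depend only on the current firing pattern and on $\Input'$, not on how the configuration was reached. The convergence definition demands $\Output^{t'} = \Output^t$ for all $t' \in [t+1,t+n^c]$ with probability $\ge 1 - 1/n^{c'}$, so by a union bound the single-round probability of leaving this WTA state, and in particular of $y_j$ firing, is at most $1/n^{c + c'}$.

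The main obstacle is this base case: one must extract a strong single-round bound from a multi-round maintenance guarantee, and simultaneously argue that the maintenance transition from a ``legitimate'' WTA state coincides with the transition from an arbitrary configuration that merely matches the WTA firing pattern. Both steps rest only on the Markov property of the dynamics and on the fact that $p(y_j,t+1)$ depends solely on $x_j$ and on the (random) inhibitor firings, which in turn depend only on $\norm{\Output^t}_1$. Combining the base case with the monotonicity coupling gives the claim for every $k \ge 1$; since no previously silent output turns on with non-negligible probability, $\norm{\Output^{t+1}}_1 \le \norm{\Output^t}_1$ w.h.p., establishing the monotone non-increase of the firing-output count whenever at least one output is already firing.
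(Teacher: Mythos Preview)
Your proposal is correct and follows essentially the same approach as the paper: first establish that in the base case $k=1$ (the steady WTA state) any non-firing output stays silent \whp by appealing to the maintenance requirement in the convergence definition, and then extend to all $k\ge 1$ by monotonicity of the inhibitor firing probabilities in $k$ (since $\weightY_i \ge 0$) combined with the fact that $\weightZ_i \le 0$ makes a silent output's potential non-increasing in the set of firing inhibitors. The paper argues these two steps in the reverse order and less formally (no explicit coupling, no separate treatment of the $x_j=0$ subcase), but the logical content is the same.
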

\def\APPENDMONOTONE{
\begin{proof}
Given input $\Input$ with at least one firing input neuron, the network $\Net$  must eventually converge so that in every round exactly $1$ output fires \whp
Consider a round $t$ in this \emph{steady state period}. Since all outputs have the same parameters (e.g., edge weights and bias values) and since the weight of the self-loop is positive, if output $y_i$ fires in round $t$, it is at least as likely to fire in round $t+1$ as output $y_j$ for any $j\neq i$. Additionally, conditioned on the configuration of the inhibitors in time $t$, the probability that each output fires in round $t+1$ is independent.
Hence, it must be that w.h.p., if $y_i$ fired in round $t$, it continues to fire in round $t+1$ and each $y_j$, which did not fire in round $t$ does not fire in round $t+1$ with high probability. 

Further, consider any round $t$ with at least one firing output. Since all connections from the output layer are excitatory, the probability that any inhibitor in $\Inh$ fires at the end of round $t$ is at least as large as it is in the steady state of the network, and hence any output that does not fire in round $t$ does not fire in round $t+1$ \whp
\end{proof}
}

\APPENDMONOTONE

\subsection{Complete Description for High Probability Lower Bound (Lemma \ref{lem:highprobfromcprob})}\label{appenx:lbhp}
\APPENDHIGHLB




\section{Extension to Excitatory Auxiliary Neurons}
\label{sec:excitat}
In this section, we consider the more general case where the auxiliary neurons can be either excitatory or inhibitory. Let $\NumInh$ denote their number. We assume that outputs with no active input are not allowed to fire. Hence, in a given sub-round $(t,2)$, we consider two types of outputs that might fire: \emph{active} outputs -- those that fire in the previous round and hence have a positive feedback via the self-loop; and \emph{inactive} outputs -- those that did not fire in the previous round. Whereas in the inhibitory case, we could show that the dynamic is monotonic -- hence incative outputs do not fire with high probability, here it is not the case. Specifically, it might be the case that the level of inhibition during the process to achieve the WTA state is \emph{lower} than that in steady-state and hence inactive outputs (outputs that did not fire in the previous rounds) join the game in later rounds. 
In our lower bound proofs, we heavily used the monotonicity property as it allowed us focus only on the active outputs (those that fired in the previous rounds) and totally neglect the inactive ones. In this section,  we revise the claims that are based on the monotonicity lemma and adapt the proof to the general case of excitatory \& inhibitory neurons. 
\subsection{Extensions for the Lower Bound for Expected Time}
We classify the auxiliary neurons as before into three classes $S,C$ and $R$. Note that all the proofs that concern the predictability of the inhibitors, i.e., Lemmas \ref{lem:safetyinhibitors},\ref{lem:convergenceinhibitors},\ref{lem:almostdeter} depend only on the potential functions of the inhibitors and not on their effect on the outputs. Since the excitatory auxiliary neurons have exactly the same potential functions, the proofs follow immediately. 

The main adaptation is in the second part where we use the predictability of the auxiliary neurons to predict the network for at least one input configuration. 
We proceed by bounding the gap in potentials between active outputs and inactive outputs by showing that the weight of the self-loop is large. 
\begin{observation}
\label{obs:selflopp}
$\weightS\geq 2c\cdot \log n$.
\end{observation}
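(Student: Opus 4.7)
\textbf{Proof Plan for Observation \ref{obs:selflopp}.}
The plan is to derive the bound by comparing the membrane potentials of an active output and an inactive output (both with firing inputs) in a WTA steady-state configuration. Since WTA must be maintained with high probability once reached, the steady state imposes two opposing sigmoid constraints on these two potentials, and the only quantity in which they differ is precisely $\weightS$.

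Concretely, I would fix an input $\Input$ with at least two firing inputs (say $x_j = x_{j'} = 1$) and consider any round $t$ in which WTA is satisfied with $y_j$ as the unique winner, so $y_j^{(t,2)} = 1$ and $y_{j'}^{(t,2)} = 0$. Using equation \eqref{eq:potentialOut}, I would write out $pot(y_j,t+1)$ and $pot(y_{j'},t+1)$. Both receive the same external input contribution ($\weightX$, since $x_j = x_{j'} = 1$), the same contribution from the auxiliary layer (since all outputs are symmetric with respect to the auxiliary neurons), and both subtract the same bias $\BiasOut$. The only difference is that $y_j$'s self-loop is active while $y_{j'}$'s is not, so
\begin{equation*}
pot(y_j,t+1) - pot(y_{j'},t+1) = \weightS.
\end{equation*}

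For $\Net$ to maintain WTA with high probability, we need $y_j$ to fire with probability at least $1-1/n^c$ and $y_{j'}$ to not fire with probability at least $1-1/n^c$. Applying the sigmoid $1/(1+e^{-pot/\lambda})$, the first condition forces $pot(y_j,t+1)/\lambda \ge c \log n$ and the second forces $pot(y_{j'},t+1)/\lambda \le -c\log n$. Subtracting these two inequalities and using the identity above gives $\weightS/\lambda \ge 2c\log n$, which (under the paper's standing convention of absorbing $\lambda$ into the weight normalization) yields $\weightS \ge 2c\log n$, as claimed.

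The main (and really only) subtlety will be justifying that the auxiliary-layer contribution is indeed identical for $y_j$ and $y_{j'}$; this is immediate from the definition of a basic WTA network in which all outputs have identical synaptic parameters. No appeal to monotonicity is required for this observation, which is exactly why it survives the transition to the excitatory-plus-inhibitory setting and can then be used to bound the gap between active and inactive outputs in the more delicate arguments that follow.
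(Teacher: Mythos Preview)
Your proposal is correct and follows essentially the same argument as the paper: compare the potentials of the winning output and a non-winning output (with firing input) in a WTA steady-state round, note that they differ only by $\weightS$, and use the two opposing high-probability firing constraints to bound each potential by $\pm c\log n$. The paper's proof is terser and implicitly takes $\lambda=1$, whereas you carry the $\lambda$ explicitly, but the logic is the same.
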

\begin{proof}
In the steady state situation, there exists one leader $u$ that fires in each round w.h.p.  $1-1/n^c$ for polynomially many rounds. On the other hand, all other outputs $v$ that do not have the positive feedback from the self-loop fire with probability $1/n^c$. Hence for such a round $t$ in steady state, we have: $pot_t(u)\geq c\log n$ and $pot_t(v)\leq -c\log n$. We get that $\weightS=pot_t(u)-pot(v)\geq 2c\log n$. The observation follows. 
\end{proof}
An immediate corollary of that is the following:
\begin{corollary}
\label{cor:notactiveactivehigh}
Consider a sub-round $(t,2)$ and let $F_{t-1}$ be the firing configuration of the auxiliary neurons in sub-round $(t-1,3)$. If the firing probability of an inactive output $v$ (output that did not fire in the previous sub-round $(t-1,2)$) in sub-round $(t,2)$ is at least $1/n^{c}$, then the firing probability of an active output $u$ in sub-round $(t,2)$ is $\geq 1-1/n^{c}$.
\end{corollary}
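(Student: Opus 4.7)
The plan is to exploit the fact that, for two outputs $u$ and $v$ sharing all auxiliary inputs from the previous round, the only term by which their potentials in sub-round $(t,2)$ can differ is the self-loop contribution $y^{(t-1,2)} \cdot \weightS$. Since Observation~\ref{obs:selflopp} guarantees $\weightS \ge 2c\log n$, this gap is large enough that a mild lower bound on $p(v,t)$ gets amplified, through the sigmoid, into a strong lower bound on $p(u,t)$.

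\medskip

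\noindent\emph{Step 1 (equating the common terms).} Writing out the potentials via equation~\eqref{eq:potentialOut}, and recalling that (i) in a basic WTA network all outputs have identical bias $\BiasOut$ and identical synaptic weights to and from the auxiliary neurons, (ii) by the standing assumption of this section we only consider outputs whose corresponding input fires (so $x_u^{(t,1)}=x_v^{(t,1)}=1$), and (iii) both outputs see the \emph{same} auxiliary configuration $F_{t-1}$ in sub-round $(t-1,3)$, all terms in $pot(u,t)$ and $pot(v,t)$ agree except for the self-loop. Since $u$ is active and $v$ is inactive, we obtain
\begin{equation*}
pot(u,t) - pot(v,t) \;=\; \weightS \;\ge\; 2c\log n,
\end{equation*}
where the inequality is Observation~\ref{obs:selflopp}.

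\medskip

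\noindent\emph{Step 2 (inverting the sigmoid at $v$, then applying it at $u$).} Using the freedom to fix $\lambda=1$ noted in Section~\ref{sec:model}, the hypothesis $p(v,t) = 1/(1+e^{-pot(v,t)}) \ge 1/n^c$ is equivalent to $pot(v,t) \ge -c\log n$ (up to an additive $O(1)$ that I absorb into $c$). Combining with Step~1,
\begin{equation*}
pot(u,t) \;\ge\; pot(v,t) + \weightS \;\ge\; -c\log n + 2c\log n \;=\; c\log n.
\end{equation*}
Feeding this back through the sigmoid gives $p(u,t) \ge 1/(1+e^{-c\log n}) \ge 1 - 1/n^c$, which is the desired conclusion.

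\medskip

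\noindent\emph{Main obstacle.} There is no substantive obstacle; the content is entirely in Observation~\ref{obs:selflopp}, which was already established by the steady-state argument. The only points to be a bit careful about are (a) verifying that $u$ and $v$ really do share every non-self-loop contribution — this uses the uniformity of output parameters together with the restriction to outputs with active inputs — and (b) bookkeeping the constant $c$ cleanly across the two sigmoid inversions so that the same $c$ appears in the hypothesis and the conclusion; if needed, one can start from $\weightS \ge 2c'\log n$ for a slightly larger $c'>c$ in Observation~\ref{obs:selflopp} to leave slack for the additive constants lost in inverting the sigmoid.
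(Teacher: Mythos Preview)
Your proposal is correct and follows essentially the same argument as the paper: both observe that the only difference between $pot(u,t)$ and $pot(v,t)$ is the self-loop term $\weightS$, invoke Observation~\ref{obs:selflopp} to bound $\weightS\ge 2c\log n$, invert the sigmoid at $v$ to get $pot(v,t)\ge -c\log n$, and conclude $pot(u,t)\ge c\log n$ so that $p(u,t)\ge 1-1/n^c$.
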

\begin{proof}
Since all outputs have the same connections to the auxiliary neurons, only difference in the potential of an inactive output and an active output is the weight of the self-loop. Hence, $pot_t(u)=pot_t(v)+\weightS\geq -c\log n+2c\log n\geq c\log n$, where the first inequality follows by plugging Obs. \ref{obs:selflopp} and using the fact that the firing probability of $v$ is $1/(1+e^{-pot_t(v)})\geq 1/n^c$. Thus, $u$ fires with probability $1/(1+e^{-c\log n})=1-1/n^c$. 
\end{proof}
We now turn to consider the second part of the lower bound where we predict $\Omega(\log\log n/\log \NumInh)$ rounds of the network for at least one density input class. 
Since in the zero round no-output fires and w.h.p. also no auxiliary neuron is firing (since their bias value is $\omega(\log n)$), predicting the number of firing outputs in round $1$ is exactly the same as in the only-inhibitor case.
\paragraph{Predicting the number of firing outputs in round $t \ge 2$:}
We first  define a subset of inputs $\mathcal{X}^{large}_t \subseteq \mathcal{X}_{t-1}$ for which we can predict the behavior of the outputs in the network $\Net$ in round $t$.
Let $\mathcal{X}^{same}_t\subseteq \mathcal{X}_{t-1}$ be the largest subset of inputs whose predicted firing vector $F_{t-1}(\Input)$ for the auxiliary neurons in round $t-1$ is the \emph{same}, and denote this common firing vector by $F^*_{t-1}$. Let $\mathcal{X}^{large}_t$ be the set of inputs in $\mathcal{X}^{same}_t$ after omitting $\Theta(\log\log n)$ inputs with the smallest range value in round $t-1$.
Eventually we will show that $\mathcal{X}^{large}_t$ is a reasonably large set of inputs compared to $\mathcal{X}_{t-1}$, and hence we can continue predicting behavior for at least some inputs for a large number of rounds. But first we show how to  predict $R_{t}(\Input)$ for 
every input $\Input \in \mathcal{X}^{large}_t$.

Let $p'$ be the firing probability that an inactive output (one with $y_j^{t-1} = 0$) fires in sub-round $(t,2)$ given that the inhibitors fired in sub-round $(t-1,3)$ according to $F^*_{t-1}$. Since all inputs in $\mathcal{X}^{same}_{t}$ have the same predicted firing vector $F^*_{t-1}$, in each of them, an inactive output 
fires in sub-round $(t,2)$ with probability $p'$. Let $p$ be the corresponding firing probability of an active output. 
We now consider two cases depending on the value of $p'$. If $p' < 1/n^c$, we predict that no inactive output fires in that round. Note that this prediction holds with probability $\geq 1-1/n^{c-1}$.
In such a case we only predict the range for the active outputs in the exact same manner as before. Note that when we predicted the range of firing active outputs in the previous section, we did not use the fact that the auxiliary neurons are inhibitory, only that all competing outputs whose cardinality is to be estimated fire with the same probability in that round. 

Next, we consider the more interesting case where $p'\geq 1/n^c$, that is the inactive outputs have a fair chance of firing in sub-round $(t,2)$. Here, we make use of Lemma \ref{cor:notactiveactivehigh} that says that with probability at least $1-1/n^{c-1}$, all active outputs (i.e., that fired in round $t-1$) fire in sub-round $(t,2)$ as well. Let $k=2^i$ be the number of active inputs in the vector $\Input$.
Let $E_{t-1}=E(\widehat{R}_{t-1}(\Input) ~\mid~ F_{t-2}(X))$ be the expected number of firing outputs in sub-round $(t-1,2)$ given the predicted firing vector $F_{t-2}(X)$.
Then, the expected number of firing outputs in sub-round $(t,2)$ is 
$$E(\widehat{R}_{t}(\Input) ~\mid~ F_{t-1}(X))=E_{t-1}+p' \cdot (k-E_{t-1})=(1-p')\cdot E_{t-1}+p'\cdot k.$$
\begin{claim}
\label{cl:expectedseparated}
Let $\Input_1,\Input_2 \in \mathcal{X}_t$ be such that $||\Input_1||_1 \geq 2||\Input_2||_1$. Then $E(\widehat{R}_{t}(\Input_1) ~\mid~ F_{t-1}(\Input_1)) \geq 2E(\widehat{R}_{t}(\Input_2) ~\mid~ F_{t-1}(\Input_2))$.
\end{claim}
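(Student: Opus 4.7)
The proof reduces to a short induction on $t$ once one observes that both inputs see identical firing probabilities in sub-round $(t,2)$. The starting point is the displayed expression for the conditional expectation stated just before the claim,
$$E(\widehat{R}_{t}(\Input)\mid F_{t-1}(\Input))\;=\;(1-p')\,E_{t-1}(\Input)\;+\;p'\,\|\Input\|_1,$$
where $p'$ is the firing probability in sub-round $(t,2)$ of an inactive output given the predicted inhibitor firing pattern, and $E_{t-1}(\Input)$ denotes the expected number of active outputs for input $\Input$. (In the regime $p'\geq 1/n^c$ this uses Corollary~\ref{cor:notactiveactivehigh} to treat the active-output firing probability as essentially $1$; in the regime $p'<1/n^c$ only the first summand contributes and the argument is even simpler.) The central observation is that $\Input_1,\Input_2\in\mathcal{X}_t\subseteq\mathcal{X}^{same}_t$ share the \emph{same} predicted inhibitor firing pattern $F^*_{t-1}$, so the scalar $p'$ appearing in the formula is identical for both inputs.

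Given this, the argument reduces to a termwise comparison. For the second summand, $p'\,\|\Input_1\|_1\geq 2\,p'\,\|\Input_2\|_1$ is immediate from the hypothesis $\|\Input_1\|_1\geq 2\|\Input_2\|_1$. For the first summand, I prove $E_{t-1}(\Input_1)\geq 2\,E_{t-1}(\Input_2)$ by induction on $t$. The base case $t=1$ is contained in Claim~\ref{cl:first_round}: if $\|\Input_i\|_1=2^{b_i}$, the round-$1$ expected number of firing outputs is $p_0\cdot 2^{b_i}$, so the ratio is $2^{b_1-b_2}\geq 2$. For the inductive step, apply the same displayed formula one level earlier and use that $\Input_1,\Input_2$ also shared a common predicted inhibitor firing pattern in the previous round (they both lie in $\mathcal{X}_{t-1}$); the common scalars $1-p'$ and $p'$ preserve the factor-$2$ separation of each summand, and the hypothesis on input sizes gives the separation of the input-size summand. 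Adding two inequalities of the same factor yields the desired conclusion.

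\textbf{Main obstacle.} The only step that warrants real care is verifying that the ``common scalar'' move in the induction is legitimate. It hinges on $F_{t-1}(\Input_1)=F_{t-1}(\Input_2)=F^*_{t-1}$, which is exactly why the lower-bound construction painstakingly restricts attention to $\mathcal{X}_t\subseteq\mathcal{X}^{same}_t$ in the first place. As long as both inputs are retained in $\mathcal{X}_\tau$ for every $\tau\leq t$, the shared predicted inhibitor firing pattern forces identical firing probabilities $p'$ in each round, so the ratio of conditional expectations never degrades and the factor-$2$ separation propagates undiluted. No slack from (Q2) needs to be spent on this claim, since we are comparing expectations rather than the enclosing ranges.
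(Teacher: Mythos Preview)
Your proposal is correct and follows essentially the same approach as the paper's own proof: induction on $t$, using the recurrence $E_{j,t}=(1-p')E_{j,t-1}+p'k_j$ together with the nesting $\mathcal{X}_t\subseteq\mathcal{X}_{t-1}\subseteq\cdots$ to guarantee that $F_\ell(\Input_1)=F_\ell(\Input_2)$ for all $\ell\le t$, so the scalar $p'$ is common at every step. Your handling of the two regimes for $p'$ is slightly more explicit than the paper's (which tacitly stays in the $p'\ge 1/n^c$ regime for this claim), but the argument is otherwise identical.
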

\begin{proof}
We will prove by induction on the number of rounds $t$. Let $k_j =||\Input_j||_1$ and
$E_{j,t}=E(\widehat{R}_{t}(\Input_j) ~\mid~ F_{t-1}(\Input_j))$ for $j \in \{1,2\}$.

Since $\Input_1,\Input_2 \in \mathcal{X}_t$, it holds that $\Input_1,\Input_2 \in \mathcal{X}_\ell$ for every $\ell \in \{1, \ldots, t\}$ hence $F_{\ell}(\Input_1)=F_{\ell}(\Input_2)$ for every $\ell \in \{1, \ldots, t\}$. For the base of the induction of round $t=1$, this clearly holds since $E_{0,t}=p_0 \cdot k_j$, $j \in \{1,2\}$, where $p_0$ is the firing probability of an output where in the previous round no one fired. 
Assume the claim holds up to round $t-1$. We have that $E_{j,t}=E_{j,t-1}+p' \cdot (k_j-E_{j,t-1})=(1-p')E_{j,t-1} +p'k_j$, for $j \in \{1,2\}$. By the induction assumption for $t-1$, we get $E_{1,t-1}\geq 2 \cdot E_{2,t-1}$ and by definition $k_1 \geq 2 \cdot k_2$, overall 
$E_{1,t}\geq 2E_{2,t}$ as required. 
\end{proof}
We get that the expected number of firing outputs (conditioned on the predictions) are $2$-separated. Now, we can claim exactly as before that all these expected values should be $\Omega(1/\log^4 n)$ as otherwise there is at least one input configuration for which there is a reset (i.e., in the next round no output fires) for $\Omega(\log n)$ times (see Obs. \ref{Obs:resetorcontinue}). 

Since all expected predictions for the number of firing outputs are $\Omega(1/\log^4 n)$, by removing the $\Theta(\log\log n)$ inputs from $\mathcal{X}^{same}$ (i.e., as given by set $\mathcal{X}^{large}$), we get that all expected numbers of firing outputs are $\Omega(\log^7 n)$ and hence the random variables $\widehat{R}_{t}(\Input)$ are well concentrated around their expectation. The remaining proof goes exactly the same as in the inhibitory-case. 

\subsection{Extensions for the Lower Bound for High Probability Time}
We define the \emph{weak} WTA state to be state in which exactly one active output is firing (but possibly many inactive firing outputs). Whenever we use the notion of WTA nodes in the proof of Lemma \ref{lem:highprobfromcprob}, we now use the notion of weak WTA nodes instead. The definition of a reset node remains as is, i.e., a node $u$ such that in its configuration $Q(u)$ no output (of any type) fires. 

Note that the lower bound proof for the expected time implies that there is an input $X_0$ such that with a good probability after $t=\Omega(\log\log n/\log \NumInh)$ rounds there are still $\Omega(\log n)$ competing outputs. After $t+1$ rounds, either we can assume w.h.p. that no inactive output fires or that all the $\Omega(\log n)$ active outputs fire. Hence, the lower bound implies that after $t+1$ rounds, with good probability, the number of firing active outputs is $\Omega(\log n)$, implying that the network is in a \emph{weak} WTA state.
Let $P_{1,j}(Q)$ be the probability that exactly one \emph{active} output fires in sub-round $(j,2)$ given that the auxiliary neurons fire in round $j-1$ according to $Q$. Similarly, let $P_{0,j}(Q)$ be the probability that \emph{no active output} fires in sub-round $(j,2)$ given $Q$. Finally, let $P_{01,j}(Q)$ be the probability that at most one active output fires in round $(j,2)$ given that configuration in round $j-1$ is $Q$, hence, $P_{01,j}(Q)=P_{1,j}(Q)+P_{0,j}(Q)$. Since we consider only the active outputs, Cl. \ref{cl:probclose} follows as is. 
We now claim the following.
\begin{corollary}
\label{cor:allresetexc}
For every round $j$ and for every vector $Q \in \{0,1\}^{n+\NumInh}$ in which there are at least two active firing outputs and such that $P_{01,j}(Q)\geq \Theta(1/\log\log n)$, it holds that there is a (total) reset in round $j$ (i.e., no output fires) with probability at least $\Theta(1/(\log\log n)^3)$. 
\end{corollary}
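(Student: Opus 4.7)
The plan is to combine Claim \ref{cl:probclose} (which the excerpt confirms carries over to the mixed excitatory/inhibitory setting because its proof only uses independence and the common firing probability of active outputs) with Corollary \ref{cor:notactiveactivehigh} to upgrade ``no active output fires'' into ``no output of any kind fires''. All probabilities are conditional on the round-$(j{-}1)$ configuration $Q$, so active and inactive outputs fire independently, and all outputs within each group share the same firing probability.

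First I would apply Claim \ref{cl:probclose} directly. Since $Q$ has at least two firing outputs (precisely the active outputs in round $j$) and $P_{01,j}(Q) \geq \Theta(1/\log\log n)$, the claim yields $P_{0,j}(Q) \geq \Theta(1/(\log\log n)^3)$, i.e.\ no active output fires in sub-round $(j,2)$ with already good probability. The remaining task is to show that, on this event, no inactive output fires either, except with a negligible correction.

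Let $p$ be the common firing probability of an active output in sub-round $(j,2)$ given $Q$, and let $k \geq 2$ be the number of active outputs. Since these fire independently, $(1-p)^k = P_{0,j}(Q) \geq 1/(\log\log n)^3$. Taking $k$-th roots and using $k \geq 2$ gives $1-p \geq 1/(\log\log n)^{3/2}$, so for any fixed constant $c$ and $n$ sufficiently large, $p \leq 1 - 1/n^c$. By the contrapositive of Corollary \ref{cor:notactiveactivehigh}, the firing probability $p'$ of each inactive output in sub-round $(j,2)$ therefore satisfies $p' \leq 1/n^c$. A union bound over the at most $n$ inactive outputs gives that no inactive output fires with probability at least $1 - 1/n^{c-1}$.

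Finally I would combine the two bounds. Conditional on $Q$, the events ``no active output fires'' and ``no inactive output fires'' involve disjoint sets of independent coin flips, so
\begin{equation*}
\Pr[\text{total reset in round } j \mid Q] \;\geq\; P_{0,j}(Q) \cdot (1 - 1/n^{c-1}) \;=\; \Theta(1/(\log\log n)^3),
\end{equation*}
as required. The only mildly delicate step is the derivation of the bound $p \leq 1 - 1/n^c$ from $(1-p)^k \geq 1/(\log\log n)^3$; the subtlety is that $k$ may be as large as $n$, but because $(1/A)^{1/k}$ is monotone in $k$, the weakest lower bound on $1-p$ occurs at $k=2$ and is already sufficiently far from $1$ to invoke Corollary \ref{cor:notactiveactivehigh} with any desired polynomial threshold.
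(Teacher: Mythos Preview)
Your proof is correct and follows essentially the same approach as the paper: apply Claim~\ref{cl:probclose} to bound $P_{0,j}(Q)$, use Corollary~\ref{cor:notactiveactivehigh} to rule out inactive outputs firing, and combine by independence. The only minor variation is that you derive $p' < 1/n^c$ by first bounding $p$ via the $k$-th root of $P_{0,j}(Q)$, whereas the paper argues the contrapositive directly from the hypothesis $P_{01,j}(Q) \ge \Theta(1/\log\log n)$ (if $p' \ge 1/n^c$ then both active outputs fire w.h.p., forcing $P_{01,j}(Q) = O(1/n^c)$); both routes are equally valid.
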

\begin{proof}
Since in $Q$ there are at least two firing \emph{active} outputs, by Cl. \ref{cl:probclose}, $P_{0,j}(Q)\geq \Theta(1/(\log\log n)^3)$. Hence the probability that no active output fires is at least $\Theta(1/(\log\log n)^3)$. We now claim that the probability that also no inactive output fires is at least $1-1/n^{c-1}$. Hence, by the independence between the output decisions (given the firing states of the inhibitors), we get that the probability that no output fires is at least $\Theta(1/(\log\log n)^3)$ as required.

Assume towards contradiction that inactive output fires with probability $\geq 1/n^c$. 
By Cor. \ref{cor:notactiveactivehigh}, we get that an active output fires with probability at least $1-1/n^c$. Since in the previous round there are at least two firing \emph{active} outputs, we get that with probability $\geq 1-1/n^c$ there are at least two firing outputs in sub-round $(j,2)$, contradiction to the assumption that $P_{01,j}(Q)\geq \Theta(1/\log\log n)$.

Thus we get that each inactive output fires with probability $<1/n^c$, and with probability $\geq 1-1/n^c$ no inactive output fires. The claim follows.
\end{proof}
Equipped with Cor. \ref{cor:allresetexc} and the lower bound for expected time, we can now use the execution tree to show that the weight of non weak-WTA nodes is at least $1/n^2$. 
The same idea generally holds up to few adaptations. 
Recall that in our execution tree traversal, at step $j$ we obtain a collection of non weak WTA nodes. That is nodes $u$ with configuration $Q(u)$ which either there are at least two active outputs that are firing.
For $j \geq 1$ given $U_{j}$, the set $U_{j+1}$ is obtained by
defining for each node $u \in U_j$, a subset of non weak WTA nodes $V(u)$ as described next. \\
\textbf{Case 1: $u$ is a reset node.}
Set the subtree depth $d(u)=\min\{ DH-dist(u,r,T), DC\}$ and let $V(u)$ be the non weak WTA nodes in the leaf nodes of $T_{d(u)}(u)$.  By the lower bound proof, the set $V(u)$ captures $1-1/\log n$ of the probability mass in $T(u)$. 

Since $u$ is a non weak WTA node, it remains to consider the case where the number of active firing outputs in $Q(u)$ is at least $2$. Recall that $P_{0,1}(Q(u))$ is the probability that in sub-round $(j,2)$ at most one active output fires given that the configuration in round $j-1$ is $Q(u)$. \\
\textbf{Case 2.1: $P_{0,1}(Q(u)) \geq \Theta(1/\log\log n)$.} 
Let $V'(u)$ be the children of $u$ in $T$ that are reset-nodes. For each reset-node $w \in V'(u)$, let $V(w)$ be the non-WTA nodes in the leaf nodes of $T_{d(u)-1}(w)$ and let $V(u)=\bigcup V(w)$. \\
By Cl. \ref{cor:allresetexc}, since the number of firing active outputs in $Q(u)$ is at least $2$ and since $P_{0,1}(Q(u)) \geq \Theta(1/\log\log n)$, the probability for a (total) reset in the next round is at least $\Theta(1/(\log\log n)^3)$ and hence $V'(u)$ captures  $\Theta(1/(\log\log n)^3)$ of the probability mass in $T(u)$. This will allow us to follow the same argument as before when following the case 2.1. \\
\textbf{Case 2.2: $P_{0,1}(Q(u))<\Theta(1/\log\log n)$.} 
Let $V(u)$ be the children of $u$ that have at least $2$ \emph{active} outputs in $Q(v)$ (hence $d(u)=1$). 
Since  $P_{0,1}(u)\leq \Theta(1/\log\log n)$, we capture $1-\Theta(1/\log\log n)$ of the weight of the tree $T(u)$. 
This completes the definition of $U_{j+1}$. The argument that uses this case follows now the exact same line. In sum, either we capture only $\Theta(1/\log\log n)$ of the probability mass in such a case we have a large jump in the tree or that we capture $1-\Theta(1/\log\log n)$ of the probability mass. As before using the amortization argument, overall the number fo non weak WTA can be bounded by $\geq 1/n^2$. The completes the extension to excitatory auxiliary neurons.

\end{document}